\theoremstyle{plain}
\newtheorem{theorem}{Theorem} 
\newtheorem{lemma}{Lemma}
\newcommand{\nwc}{\newcommand}
\nwc{\qref}[1]{(\ref{#1})}
\nwc{\cadlag}{c\`{a}dl\`{a}g}
\nwc{\la}{\label}
\nwc{\nn}{\nonumber}
\nwc{\Z}{\mathbb{Z}}
\nwc{\C}{\mathbb{C}}
\nwc{\T}{\mathbb{T}}
\nwc{\E}{\mathbb{E}}
\nwc{\R}{\mathbb{R}}
\nwc{\N}{\mathbb{N}}
\nwc{\Rn}{\mathbb{R}^n}
\nwc{\PP}{\mathcal{P}}
\nwc{\M}{\mathcal{M}}
\nwc{\Ito}{It\^{o}}
\nwc{\DiffM}{\mathrm{Diff}(M)}
\nwc{\orbit}{\mathcal{O}}
\nwc{\bbb}{\mathbf{M}}
\nwc{\law}{\stackrel{\mathcal{L}}{\rightarrow}}
\nwc{\eqd}{\stackrel{\mathcal{L}}{=}}
\nwc{\vp}{\varphi}
\nwc{\veps}{\varepsilon}
\nwc{\eps}{\veps}
\nwc{\dnto}{\downarrow}
\nwc{\nsup}{^{(n)}}
\nwc{\ksup}{^{(k)}}
\nwc{\jsup}{^{(j)}}
\nwc{\nksup}{^{(n_k)}}
\nwc{\inv}{^{-1}}
\nwc{\argmin}{\mathrm{argmin}}
\nwc{\argmax}{\mathrm{argmax}}
\nwc{\Tr}{\mathrm{Tr}}
\nwc{\Id}{\mathrm{Id}}
\nwc{\Pn}{\mathbb{P}(n)}
\nwc{\PnR}{\mathbb{P}(n;\mathbb{R})}
\nwc{\PnC}{\mathbb{P}(n;\mathbb{C})}
\nwc{\Hn}{\mathbb{H}(n)}
\nwc{\HnR}{\mathbb{H}(n;\mathbb{R})}
\nwc{\HnC}{\mathbb{H}(n;\mathbb{C})}
\nwc{\An}{\mathbb{A}(n)}
\nwc{\AnR}{\mathbb{A}(n;\mathbb{R})}
\nwc{\AnC}{\mathbb{A}(n;\mathbb{C})}
\nwc{\Mn}{\mathbb{M}(n)}
\nwc{\Md}{\mathbb{M}_d}
\nwc{\Mm}{\mathbb{M}_m}
\nwc{\feasible}{\mathcal{F}}
\nwc{\GLn}{GL(n)}
\nwc{\GLnC}{GL(n;\mathbb{C})}
\nwc{\GLnR}{GL(n;\mathbb{R})}
\nwc{\Poincare}{Poincar\'{e}}
\nwc{\Un}{U(n)}
\nwc{\Sn}{\mathbb{S}^{n}}
\nwc{\Tn}{\mathbb{T}^n}
\nwc{\ev}{\mathrm{ev}}
\nwc{\Diff}{\mathrm{Diff}}
\nwc{\orbitx}{\mathcal{O}_W}
\nwc{\volume}{\mathrm{vol}}
\nwc{\symm}{\mathrm{Symm}}
\nwc{\psd}{\mathbb{P}}
\nwc{\od}{O_d}
\nwc{\balance}{\mathcal{M}}
\nwc{\manmap}{\mathfrak{x}}
\nwc{\manmapbeta}{\mathfrak{y}}
\nwc{\grad}{\mathrm{grad}}
\nwc{\ww}{\mathbf{W}}
\nwc{\ff}{\mathcal{F}}
\nwc{\ggg}{\mathcal{G}}
\nwc{\hh}{\mathcal{H}}
\nwc{\ts}{n}
\nwc{\anti}{\mathbb{A}}
\nwc{\diag}{\mathrm{diag}}
\nwc{\Lojas}{Lojasiewicz}
\nwc{\bd}{\mathbf}
\nwc{\Mr}{\mathfrak{M}_r}
\nwc{\Mdd}{\mathfrak{M}_d}
\nwc{\gbw}{g^{BW}}
\nwc{\aaa}{\mathcal{A}}
\nwc{\Stief}{\mathrm{St}}
\nwc{\Fr}{\iota}
\nwc{\weyl}{\mathcal{W}}
\nwc{\Her}{\mathrm{Her}}
\nwc{\refspace}{\mathcal{M}}
\nwc{\refgroup}{\mathcal{G}}
\nwc{\quotientspace}{\mathcal{X}}
\nwc{\refmetric}{g}
\nwc{\quotientmetric}{h}
\nwc{\Levy}{L\'{e}vy}
\nwc{\HC}{Harish-Chandra}
\nwc{\BW}{Bures-Wasserstein}
\nwc{\CH}{Cartan-Hadamard}
\nwc{\Hess}{\mathrm{Hess}}
\nwc{\Gunther}{\mathrm{G\"{u}nther}}
\theoremstyle{definition}
\newtheorem{defn}[theorem]{Definition} 
\newtheorem{remark}[theorem]{Remark}
\theoremstyle{remark}
\numberwithin{equation}{section}
\numberwithin{figure}{section}
\begin{document}

\title{The geometry of the Deep Linear Network}

\begin{abstract}
This article provides an expository account of training dynamics in the Deep Linear Network (DLN) from the perspective of the geometric theory of dynamical systems. 
Rigorous results by several authors are unified into a thermodynamic framework for deep learning. 

The analysis begins with a characterization of the invariant manifolds and Riemannian geometry in the DLN. This is followed by exact formulas for a Boltzmann entropy, as well as stochastic gradient descent of free energy using a Riemannian Langevin Equation. Several links between the DLN and other areas of mathematics are discussed, along with some open questions.
\end{abstract}

\author{Govind Menon}

\address{Division of Applied Mathematics, Brown University, 182 George St., Providence, RI 02912.}

\email{govind\_menon@brown.edu}
\thanks{This work was supported by the National Science Foundation (DMS grants 171487, 2107205 and  2407055), the Simons Foundation (Award 561041) and the Charles Simonyi Foundation. The author thanks the School of Mathematics at the Institute for Advanced Study, Princeton for partial support during the completion of this work.}

\maketitle
\tableofcontents

\section{Introduction}
\label{sec:intro}
In its simplest form, deep learning is a version of function approximation by neural networks, where the parameters of the network are determined by given data. The best fit is determined, or the network is trained, by minimizing an empirical cost function using gradient descent. Many of the mysteries of deep learning concern training dynamics: Do we have convergence? If so, how fast and to what? How do we make training more efficient? How do the training dynamics depend on the network architecture or on the size of data?

This article focuses on mathematical foundations. Our goal is to illustrate the utility of the geometric theory of dynamical systems for the study of these questions. While gradient flows have been studied in mathematics since the 1930s, gradient flows arising in deep learning have two subtle aspects  --{\em overparametrization\/} and {\em degenerate loss functions\/} -- that prevent naive applications of the standard theory of gradient flows (see Section~\ref{subsec:overparam}) . We present a geometric framework for a simplified model, the Deep Linear Network (DLN), where these aspects can be studied with complete rigor.

The DLN is deep learning for {\em linear\/} functions. This reduces the training dynamics to gradient flows on spaces of matrices. Despite its apparent simplicity, the model has a rich mathematical structure. Overparametrization provides a foliation of phase space by invariant manifolds. Of these, there is a fundamental class of invariant manifolds, the {\em balanced manifolds\/}, which are themselves foliated by group orbits. This geometric structure allows us to define a natural Boltzmann entropy (the logarithm of the volume of a group orbit) that may be computed explicitly. Microscopic fluctuations that underlie the entropy may be described by Riemannian Langevin equations. This approach unifies the work of several authors into a thermodynamic framework. In particular, it suggests an entropic origin for implicit regularization.

My view is that the DLN is a gift to mathematics from computer science. On one hand, it is subtle, but tractable, providing a rich set of practical questions and insights. On the other hand, the study of the DLN is filled with sharp theorems, exact formulas and unexpected mathematical structure. While the gradient dynamics of the DLN are different from the standard theory, we also see familiar aspects in surprising combinations. This gives the analysis a classical feel, even though all the results here were obtained in the very recent past. There is plenty more to be discovered and the real purpose of this article is to explain why.

In order to apply the methods of dynamical systems theory, what is of most value is to understand the dynamicists `way of seeing'. This is not so much a collection of theorems, as a systematic use of geometry, and particular examples, to figure out what questions one should ask. Geometric methods provide a powerful intuition that is often a source of new discoveries. 

This is the approach we use in this article. All the theorems we prove are guided by the work of computer scientists in the area.  While this article does include some advanced mathematics, especially Riemannian geometry and random matrix theory, we stress explicit calculations, heuristic insights and representative examples. We hope this approach reveals the conceptual power of dynamical systems theory while remaining of interest to practitioners. The references are representative, not exhaustive, since our aim in this article is to provide a pedagogical treatment. We include a brief discussion of the literature on the DLN in Section~\ref{subsec:dln-dl}. For broader surveys of deep learning that include related mathematical ideas, the reader is referred to the recent books~\cite{Arora-book,Hanin-book}. 

The article concludes with open questions that emerge from this perspective. These include specific mathematical questions on the DLN, as well as the extension of our entropy formula to gauge groups arising in deep learning.

\section{A caricature of deep learning}
\label{sec:deep-learning}
In its simplest mathematical variant, the purpose of deep learning is to create a function $f: \R^{d_x} \to \R^{d_y}$ given a collection of training data 
$\{(X_k,Y_k)\}_{k=1}^\ts$, with $X_k\in \R^{d_x}$, $Y_k \in \R^{d_y}$ and $\ts$ denoting the size of the training dat set. In contrast with approximation by a linear combination of basis functions, such as the use of  Fourier series or polynomials, in deep learning a function is approximated using neural networks. The neural network consists of individual function elements (neurons) and a hierarchical architecture (the network) so that complicated functions can be constructed by scaling, translating and composing sums of the basic function element. The parameters, denoted for convenience $\mathbf{W} \in \R^M$, of the neural network describe the underlying scaling and translation (see~\cite[p.5]{DeVore-Hanin} for explicit descriptions).

  The parameters $\ww\in \R^M$ are determined from the data by minimizing an empirical loss function. A natural example is the quadratic loss function
\begin{equation}
\label{eq:empirical-loss}
L(\mathbf{W}) = \frac{1}{\ts}\sum_{k=1}^\ts |f(X_k;\mathbf{W})-Y_k|^2.   
\end{equation} 
The parameters $\mathbf{W}$ are then obtained by minimizing the loss function using gradient descent. We model this process with the differential equation
\begin{equation}
\label{eq:big-gradient}
    \dot{\mathbf{W}} = -\nabla_{\mathbf{W}} L(\mathbf{W}),
\end{equation}
where $\nabla_\mathbf{W}$ denotes the gradient with respect to the Euclidean norm in $\R^M$. In practice, the gradient descent is discrete in time, relies on the back-propagation algorithm, and includes random batch processing. These dynamics involve subtle corrections to the gradient flow~\eqref{eq:big-gradient}, but for the purposes of rigorous analysis, it is is first necessary to have an understanding of equation~\eqref{eq:big-gradient}.

Our aim in this article will be to foster a geometric understanding of training dynamics for deep networks. In order to do so, we first form a geometric picture of the underlying approximation theory. We will then use the DLN to make precise several aspects of this geometric picture.  Here are the important ideas in our work.
\begin{enumerate}
\item {\em Riemannian submersion.\/}
The creation of a function $\ww \mapsto f(\cdot,\cdot ;\ww)$ by the neural network defines a map $\ff:\R^M \to C(\R^{d_x},\R^{d_y})$. The image of this map, $\mathcal{G}$, is the class of functions that can be created with the network. We will visualize this by thinking of the parameter space $\R^M$ as living `upstairs' and its image $\mathcal{G}$ living `downstairs'. The natural geometric relation between these spaces is provided by Riemannian submersion. 

\item {\em Scale-by-scale composition\/}. Simple architectural motifs define the architecture for bump functions~\cite{Lapedes1988,GM-pt}. Thus, intuitively deep networks form complicated functions through the scale-by-scale composition of bump functions. The expressivity of the deep network -- its ability to approximate complicated functions -- is quantified by the `size' of $\mathcal{G} \subset C(\R^{d_x},\R^{d_y})$. The word size is in quotes, because of the complexity of the map $\ff$ and the fact that $C(\R^{d_x},\R^{d_y})$ is infinite-dimensional. However, this idea may be explored with approximation theory and numerical experiments~\cite{DeVore-Hanin}. 

\item {\em The geometry of overparametrization\/}. The problem is overparametrized when the same function $h\in \ggg$ may be created with different choices of weights $\ww \in \R^M$. Precisely, given $h \in \ggg$ we must understand a `large' inverse image $\ff^{-1}(h) :=\hh \subset\R^M $. It is natural to assume at first that $\hh$ is a manifold. Then $T_\ww \hh$ defines a set of `null directions' for the gradient descent. (Since the loss function $E$ depends on $\ww$ through $h=\ff(\ww)$ alone it does not change when $\ww$ is varied along the directions in $T_\ww \hh$.) If $\hh$ is large, then the gradient descent is restricted to a `thin' set of directions in $\R^M$. We will quantify these ideas precisely for the DLN with a Boltzmann entropy. Further, we show that noise in the null directions, gives rise to motion by curvature.
\item {\em Degenerate loss functions.\/}
While the loss function is quadratic, there may be many choices of $h$ such that the empirical loss function is minimized for such $h$ (for example, when $\ts$ is sufficiently small). Naively, this corresponds to {\em overfitting\/}. For example, sufficiently high-degree polynomials will fit the data, but also generate spurious oscillations in between the data points. The functions created by deep learning do not overfit. Heuristically, this seems to be due to the efficient coding of functions made possible by scale-by-scale composition. However, a rigorous analysis must explain this behavior as a consequence of the training dynamics. 
\end{enumerate}

We revisit these ideas at the conclusion of this article. This allows us to compare the DLN and deep learning equipped with a precise geometric understanding of training dynamics in the DLN.

\section{The deep linear network}
\label{sec:dln}

\subsection{The model}
Given a positive integer $m$, we denote by $\Mm$ the space of $m\times m$ real matrices; $\symm_m \subset \Mm$ the space of symmetric matrices; $\anti_m$ the space of anti-symmetric matrices; and $\psd_m \subset \Mm$ the space of positive semidefinite (psd) matrices. Finally, $O_m$ denotes the orthogonal group of dimension $m$ and $\Stief_{r,m}$ denotes the Stiefel manifold of $r$-frames in $\R^m$.

We fix two positive integer $d$ and $N$ referred to as the width and depth of the network. The state space for the DLN is $\Md^N$. Each point  $\mathbf{W}\in \Md^N$ is denoted by
\begin{equation}
    \label{eq:state-space}
    \mathbf{W}= (W_N,W_{N-1}, \ldots, W_1).
\end{equation}
We equip $\Md$ with the Frobenius norm so that $\Md^N$  is Euclidean with the norm 
\begin{equation}
    \label{eq:frobenius}
    \|\mathbf{W}\|^2 = \sum_{p=1}^N \Tr \left(W_p^T W_p\right).
\end{equation}
We define the projection $\phi: \Md^N \to \Md$ and the {\em end-to-end\/} matrix $W$ by\footnote{It is not necessary to assume that all the matrices are $d\times d$. All that is required is that the matrix multiplication in equation~\eqref{eq:state-space2} is well-defined. However, we restrict attention to square matrices to illustrate the main ideas.} 
\begin{equation}
    \label{eq:state-space2}
    \phi(\mathbf{W}) = W_N W_{N-1}\cdots W_1 =:W.
\end{equation}

We assume given an energy $E: \Md\to \R$. 
The training dynamics are described by the gradient flow in $\Md^N$ with respect to the Frobenius norm of the `lifted' loss function $L = E\circ \phi$
\begin{equation}
    \label{eq:big-grad1a}
    \dot{\mathbf{W}}= - \nabla_{\mathbf{W}} L(\mathbf{W}).
\end{equation}
This is a collection of $N$ equations in $\Md$ 
\begin{equation}
    \label{eq:big-grad1}
    \dot{W}_p= - \nabla_{W_p} E(W_N \cdots W_1), \quad p=1,\ldots,N.
\end{equation}
A computation using equation~\eqref{eq:state-space2} (see Section~\ref{subsec:chain-rule}) simplifies equation~\eqref{eq:big-grad1} to 
\begin{equation}
    \label{eq:big-grad2}
    \dot{W}_p= - (W_N\cdots W_{p+1})^T E'(W) (W_{p-1}\cdots W_1)^T, \quad p=1,\ldots,N.
\end{equation}
Here $E'(W)$ denotes the $d\times d$ matrix with entries
\begin{equation}
    \label{eq:big-grad3}
    E'(W)_{jk} = \frac{\partial E}{\partial W_{jk}}, \quad 1\leq j,k \leq d.
\end{equation}
This article focuses on the analysis of this gradient flow. 

\subsection{A comparison with deep learning}
\label{subsec:overparam}
Let $d=d_x=d_y$ and let $f_p: \R^d \to \R^d$ define the linear function $f_p(x)= W_px$ for $1 \leq p \leq n$. Then the linear function $f: \R^d \to \R^d$ corresponding to the matrix $f(x)=Wx$ is $f= f_N \circ f_{N-1} \ldots \circ f_1$. The output function $f$ depends only on the  end-to-end matrix $W$. However, the same function $f$ may be represented by $Nd^2$ choices of training parameters $\mathbf{W}$. Thus, despite the absence of the nonlinear activation element and shifts as in a neural network, the choice of variables in the DLN models {\em overparametrization\/}.

Natural learning tasks for the DLN, such as matrix completion, give rise to {\em degenerate loss functions\/}. Assume given a subset $S \subset \{(i,j)\}_{1\leq i,j \leq d}$ and assume given the values of $W_{ij}$, for $(i,j)\in S$, say $W_{ij}=a_{ij}$. The task in matrix completion task is to obtain a principled answer to the question: how do we reconstruct $W$ from the partial observations $a_{ij}$ for $(i,j)\in S$?

The DLN may be used to study this question as follows. We define the quadratic loss function
\begin{equation}
    \label{eq:mc1a}
    E(W) = \frac{1}{2}\sum_{(i,j)\in S} |W_{ij}-a_{ij}|^2,
\end{equation}
and seek the limit of $W(t)= \phi(\ww(t))$ as $t \to \infty$ when $\ww(t)$ solves the gradient flow~\eqref{eq:big-grad2}. 

This loss function is degenerate because it does not depend on the values $M_{ij}$ when the indices $(i,j)$ do not lie in $S$. Thus, the loss function is minimized on the affine subspace 
\[ \mathcal{S}=\{W \in \Md: W_{ij}=a_{ij}, \quad (i,j) \in S\}.\]

The analysis of the gradient flow~\eqref{eq:big-grad2} lies beyond the standard theory because of the interplay between overparametrization and the degeneracy of the loss function.
Here by standard theory, we mean the analysis of gradient flows using the main results on convergence of the gradient flow such as La Salle's invariance principle and the closely related Barbashin-Krasovskii criterion~\cite{Barbashin,LaSalle}, the use of the \Lojas\/ convergence criterion~\cite{Lojas,Simon} for analytic loss functions, and the Morse-Smale decomposition of the gradient flow~\cite{Milnor} when the loss function is Morse. These results provide the typical framework for the analysis of gradient flows, but they cannot be naively applied to the DLN. The use of La Salle's invariance principle is not valid when the loss function $E(W)$ does not grow sufficiently fast as $|W|\to \infty$ (for example, for matrix completion). The Morse-Smale decomposition does not hold since $\phi \circ E(\ww)$ is not a Morse function due to overparametrization.

\subsection{Main results}
\label{sec:results}
The theorems in this article are a composite of results obtained by several authors. They have been chosen to illustrate three geometric aspects.

\begin{enumerate}
    \item {\em Foliation by invariant varieties; the $\mathbf{G}$-balanced varieties\/}. 
    The phase space $\Md^N$ is foliated by invariant varieties described by quadratic matrix equations parametrized by $\mathbf{G} \in \symm_d^{N-1}$. 
    Theorem~\ref{thm:ACH} explains this structure. It is a geometric consequence of `thin gradients' and holds for all $E$. 
    \item {\em Riemannian gradient flows\/.} 
    The dynamics on the invariant varieties may be explicitly described  when $\mathbf{G}=\mathbf{0}$. The variety $\balance_{\mathbf{0}}$ is itself foliated by manifolds corresponding to $W$ with rank $r$, $1\leq r \leq d$. We refer to these as {\em balanced manifolds\/}, denoted $\balance_r$, or simply $\balance$, when $r=d$.  Each balanced manifold $\balance_r$ is invariant and the dynamics on the balanced manifolds is described by a Riemannian gradient flow, with a metric that may be computed exactly using Riemannian submersion. This metric has an obvious infinite-depth limit (though the Riemannian submersion itself does not). 
    \item {\em Group orbits, entropy and stochastic dynamics\/.} We define a Boltzmann entropy of the form $\log \volume ( \orbitx)$ for group orbits on $\balance_r$. This entropy may be computed explicitly (see Theorem~\ref{thm:entropy}). The microscopic dynamics associated to the entropy are described using a Riemannian Langevin Equation (RLE) which reveals the role of curvature in the dynamics. 
\end{enumerate}
The rigorous results are only part of the story. Each of the main theorems formalizes a different form of geometric intuition and was guided by heuristics, numerical experiments and connections with other areas of mathematics. These connections are truly surprising. For example, the balanced varieties have a subtle relation with the theory of minimal surfaces; a special case of the Riemannian geometry of the DLN is the Bures-Wasserstein geometry on $\psd_d$; and the entropy formula is obtained by analogy with Dyson Brownian motion in random matrix theory. 

For these reasons, we first state these results along with some background. This allows the reader to obtain
an overview of the geometry of the DLN without much baggage. Most proofs are presented in the sections that follow. 

\section{$\mathbf{G}$-balanced varieties are invariant}
\subsection{Definitions}
Denote the coordinates of $\mathbf{G} \in \symm_d^{N-1}$ by 
\begin{equation}
    \label{eq:G-def}
    \mathbf{G}=(G_{N-1},\cdots, G_1).
\end{equation}
Given $\mathbf{G}$, consider the system of $N-1$ quadratic equations
\begin{equation}
    \label{eq:b1}
    W_{p+1}^TW_{p+1} = W_pW_p^T -G_p , \quad 1 \leq p \leq N-1.
\end{equation}
The solution set defines an algebraic variety that is termed the $\mathbf{G}$-balanced variety. We denote it by
\begin{equation}
    \label{eq:b2}
    \balance_{\mathbf{G}} = \{ \mathbf{W}\in \Md^N \left| W_{p+1}^TW_{p+1} = W_pW_p^T  - G_p, \; 1 \leq p \leq N-1. \right. \} 
\end{equation}
Not all values of $\mathbf{G}$ give rise to non-empty $\balance_{\mathbf{G}}$. However, given a point $\mathbf{W}\in \Md^N$, we may use equation~\eqref{eq:b1} to define $\mathbf{G}$. Thus, the space $\Md^N$ is fibered by the varieties $\balance_\mathbf{G}$.

When $\mathbf{G}=\mathbf{0}$, equation~\eqref{eq:b1} reduces to the important special case
\begin{equation}
    \label{eq:b1-main}
    W_{p+1}^TW_{p+1} = W_pW_p^T, \quad 1 \leq p \leq N-1.
\end{equation}
These equations define the {\em balanced variety\/}
\begin{equation}
    \label{eq:b7}
    \balance_{\mathbf{0}} = \{ \mathbf{W}\in \Md^N \left|  W_{p+1}^TW_{p+1}= W_pW_p^T , \; 1 \leq p \leq N-1. \right. \} 
\end{equation}
If $\ww \in \balance_{\mathbf{0}}$, the singular values, and thus rank, of each $W_p$ are the same. In Section~\ref{sec:parametrization}, we use this observation to construct a parametrization which shows that $\balance_{\mathbf{0}}$ is foliated by manifolds $\balance_r$ corresponding to the rank $r$, $1\leq r \leq d$. We refer to the leaf of $\balance_{\mathbf{0}}$ with rank $r=d$ as the {\em balanced manifold\/} and denote it by $\balance$. Our main theorems concern the behavior on $\balance_r$.

We lack a complete understanding of the singularities of the variety $\balance_{\mathbf{G}}$. However, it is easy to check that $\balance$ is a manifold. For each $p$, equation~\eqref{eq:b1} is $\symm_d$-valued; thus, there are $(N-1) d(d+1)/2$ scalar equations.  Since we have $Nd^2$ parameters,  we find that 
\begin{equation}
    \label{eq:b3b}
    \mathrm{dim}(\balance)= d^2 + (N-1) \frac{d(d-1)}{2}.
\end{equation}
Of these, $d^2$ degrees of freedom correspond to an end-to-end matrix $W\in \Md$ and the remaining $(N-1)d(d-1)/2$ degrees of freedom correspond to an $\od^{N-1}$ group orbit. We parametrize $\balance_{\mathbf{G}}$ in Section~\ref{sec:parametrization} to make this explicit.  We focus mainly on $\balance$ for simplicity. When we consider the rank-deficient cases, $\balance_r$ with $r<d$, the $\od^{N-1}$ orbits have to be replaced by $\Stief_{r,d}^{N-1}$ orbits.

\subsection{Dynamics on invariant manifolds}
Each variety $\balance_{\mathbf{G}}$ is invariant under the dynamics. Let us state this assertion precisely. 

Assume that $E\in C^2$ and consider the initial value problem
\begin{equation}
    \label{eq:ivp}
    \dot{\mathbf{W}} = -\nabla_{\mathbf{W}} E\circ 
    \phi (W), \quad \mathbf{W}(0)=\mathbf{W}_0.
\end{equation}
This is a differential equation with a locally Lipschitz vector field. Thus, Picard's theorem guarantees the existence of a unique solution on a maximal time interval $(T_{\min},T_{\max})$ containing $t=0$. Let $\mathbf{G}$ be given by the initial condition
\begin{equation} 
\label{eq:newG}
G_p = W_{p+1}^T(0) W_{p+1}(0) - W_{p}(0) W_{p}^T(0), \quad 1\leq p \leq n. 
\end{equation}
\begin{theorem}[Arora, Cohen, Hazan~\cite{ACH}]
\label{thm:ACH}
The following hold on the maximal interval of existence of solutions to equation~\eqref{eq:ivp}.
\begin{enumerate}   
\item[(a)] The solution $\mathbf{W}(t)$ lies on the variety $\balance_{\mathbf{G}}$.
\item[(b)] The end-to-end matrix $W(t)$ satisfies 
\begin{equation}
    \label{eq:closed0}
    \dot{W} = - \sum_{p=1}^N (A_{p+1}A_{p+1}^T) \,E'(W) \,  (B_{p-1}^T B_{p-1}), 
\end{equation}
where $A_{N+1}=B_0=1$ and we have defined the partial products
\begin{equation}
    \label{eq:closed0b}
    A_p = W_N \cdots W_{p}, \quad B_p = W_p \cdots W_1, \quad 1 \leq p \leq N.
\end{equation}
\end{enumerate}
\end{theorem}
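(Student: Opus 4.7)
My plan is to prove both parts by direct calculation, using only the explicit form of the gradient vector field in~\eqref{eq:big-grad2} together with the telescoping identities $A_{p+2}W_{p+1}=A_{p+1}$ and $W_pB_{p-1}=B_p$ inherent in the partial products defined in~\eqref{eq:closed0b}. Since $\mathbf{G}(t)$ is a smooth function of $\mathbf{W}(t)$ and the chosen $\mathbf{G}$ in~\eqref{eq:newG} equals $\mathbf{G}(0)$, part~(a) will follow as soon as I show $\frac{d}{dt}\bigl(W_{p+1}^TW_{p+1} - W_pW_p^T\bigr)=0$ for every $p$ on the interval of existence. Part~(b) is essentially the chain rule applied to the product $\phi(\mathbf{W})$.

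For part~(a), I write $\dot W_p = -A_{p+1}^T E'(W) B_{p-1}^T$ and compute
\begin{equation*}
\frac{d}{dt}\bigl(W_{p+1}^TW_{p+1}\bigr) = \dot W_{p+1}^T W_{p+1} + W_{p+1}^T \dot W_{p+1}.
\end{equation*}
Substituting $\dot W_{p+1} = -A_{p+2}^T E'(W) B_p^T$ and using $A_{p+2}W_{p+1}=A_{p+1}$ on the right factor, both summands telescope to give
\begin{equation*}
\frac{d}{dt}\bigl(W_{p+1}^TW_{p+1}\bigr) = -A_{p+1}^T E'(W) B_p^T - B_p E'(W)^T A_{p+1}.
\end{equation*}
The analogous computation for $\frac{d}{dt}(W_pW_p^T)$ uses the identity $W_p B_{p-1} = B_p$ and yields precisely the same right-hand side. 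Subtracting, the $p$-th balance quantity is conserved for each $p$, which is part~(a).

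For part~(b), I differentiate $W = W_N\cdots W_1$ by the product rule to obtain
\begin{equation*}
\dot W = \sum_{p=1}^N A_{p+1}\,\dot W_p\, B_{p-1},
\end{equation*}
with the convention $A_{N+1}=B_0=I$. Inserting $\dot W_p = -A_{p+1}^T E'(W) B_{p-1}^T$ gives exactly~\eqref{eq:closed0}.

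The only mildly delicate point is book-keeping: one must be careful about the order of transposes and about the edge cases $p=1$ and $p=N$ where one of the partial products degenerates to the identity. Beyond that, the argument is purely algebraic; no convergence or regularity of solutions is needed beyond Picard's theorem, which is already invoked to produce the maximal interval of existence. I would therefore expect no conceptual obstacle, the main task being to verify the two key cancellations $A_{p+2}W_{p+1}=A_{p+1}$ and $W_pB_{p-1}=B_p$ and to observe that they make the derivative of $W_{p+1}^TW_{p+1}$ and the derivative of $W_pW_p^T$ coincide term for term.
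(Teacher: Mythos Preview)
Your proposal is correct and follows essentially the same route as the paper's proof: both arguments reduce to the explicit gradient formula $\dot W_p=-A_{p+1}^T E'(W)B_{p-1}^T$, then use the telescoping identities $A_{p+2}W_{p+1}=A_{p+1}$ and $W_pB_{p-1}=B_p$ to show that $\tfrac{d}{dt}(W_{p+1}^TW_{p+1})$ and $\tfrac{d}{dt}(W_pW_p^T)$ coincide, and apply the product rule to $W=W_N\cdots W_1$ for part~(b). The only cosmetic difference is that the paper packages part~(a) as the single identity $\dot W_p W_p^T = W_{p+1}^T\dot W_{p+1}$ (from which conservation follows by adding the transpose), whereas you compute both full derivatives and match them term for term.
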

The specific nature of $E$ is irrelevant to
Theorem~\ref{thm:ACH}. Instead, it reflects a fundamental geometric restriction forced by overparametrization. The gradient vector fields $\nabla_{\ww}L(\ww)$ always lie along a `thin' space of dimension $d^2$ in $\Md^N$. We explain this point further in Remark~\ref{rem:thin-gradients} below. 
Theorem~\ref{thm:ACH} tells us that each variety $\balance_{\mathbf{G}}$ is invariant under the dynamics, but it does not provide a closed description of the reduced dynamics. However, on the balanced manifold $\balance$ we have
\begin{theorem}[Arora, Cohen, Hazan~\cite{ACH}]
    \label{thm:ACH2}
Assume $\mathbf{W}(0) \in \balance$. The end-to-end matrix $W(t) =\phi(\mathbf{W}(t))$ satisfies the differential equation
\begin{equation}
    \label{eq:closed1}
    \dot{W} = - \sum_{k=1}^N(WW^T)^{\tfrac{N-k}{N}}\, E'(W) \, (W^TW)^{\tfrac{k-1}{N}}.
\end{equation}
\end{theorem}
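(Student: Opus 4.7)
The plan is to deduce~\eqref{eq:closed1} from Theorem~\ref{thm:ACH}(b) by simplifying the partial products $A_{p+1}A_{p+1}^T$ and $B_{p-1}^T B_{p-1}$ on the balanced manifold. Concretely, I will prove
\begin{equation*}
A_{p+1} A_{p+1}^T = (WW^T)^{(N-p)/N},\qquad B_{p-1}^T B_{p-1} = (W^TW)^{(p-1)/N},\quad 1 \le p \le N,
\end{equation*}
for $\mathbf{W} \in \balance$. Once these are established, substituting them into~\eqref{eq:closed0} and relabeling $p \mapsto k$ produces~\eqref{eq:closed1}, so everything reduces to verifying the two displayed formulas.

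For the left factor I would argue by reverse induction that $A_p A_p^T = (W_N W_N^T)^{N-p+1}$ for all $1 \le p \le N$. The base case $p = N$ is immediate. For the inductive step, the balance equation~\eqref{eq:b1-main} at level $p$ gives $W_p W_p^T = W_{p+1}^T W_{p+1}$, and together with the elementary identity $W_j (W_j^T W_j)^m W_j^T = (W_j W_j^T)^{m+1}$ this allows the innermost factor $W_p W_p^T$ inside $A_p A_p^T$ to be absorbed into $(W_{p+1} W_{p+1}^T)^2$, which then merges with its neighbors via balance at level $p+1$, and so on iteratively, pushing the accumulated power outward through $W_{p+2},\ldots,W_N$ until it lands on $(W_N W_N^T)^{N-p+1}$. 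A symmetric telescoping on the right yields $B_p^T B_p = (W_1^T W_1)^p$, using the balance equations read in the opposite direction.

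The remaining step is to identify $W_N W_N^T$ with $(WW^T)^{1/N}$ and $W_1^T W_1$ with $(W^TW)^{1/N}$. Setting $p = 1$ in the inductive formula gives $WW^T = A_1 A_1^T = (W_N W_N^T)^N$, and analogously $W^T W = (W_1^T W_1)^N$. On $\balance$ each $W_p$ has full rank $d$, so both $W_N W_N^T$ and $W_1^T W_1$ are positive definite; uniqueness of the positive definite $N$-th root, via the spectral theorem, then forces the desired identifications, and~\eqref{eq:closed1} follows. The one place to tread carefully is precisely this uniqueness step, which would need to be recast in terms of the semidefinite $N$-th root restricted to the common range of the $W_p$ if one wished to extend the identity to the rank-deficient strata $\balance_r$ with $r < d$; for the statement on $\balance$ as written, no such subtlety intrudes, and the result is a clean consequence of balance together with functional calculus.
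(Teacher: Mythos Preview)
Your argument is correct and follows the same overall strategy as the paper: specialize Theorem~\ref{thm:ACH}(b) to $\balance$ and show that $A_{p+1}A_{p+1}^T=(WW^T)^{(N-p)/N}$ and $B_{p-1}^T B_{p-1}=(W^TW)^{(p-1)/N}$.

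The execution differs in one respect worth recording. The paper does not run an induction on the balance relations directly; instead it invokes the SVD parametrization~\eqref{eq:b11}, $W_p=Q_p\Lambda Q_{p-1}^T$ with $\Lambda=\Sigma^{1/N}$, so that the partial products telescope in one line to $A_p=Q_N\Lambda^{N-p+1}Q_{p-1}^T$ and $B_p=Q_p\Lambda^{p}Q_0^T$, from which $A_pA_p^T=(WW^T)^{(N-p+1)/N}$ and $B_p^TB_p=(W^TW)^{p/N}$ are read off immediately via $W=Q_N\Sigma Q_0^T$. Your route avoids the parametrization entirely: you push powers outward through repeated use of $W_pW_p^T=W_{p+1}^TW_{p+1}$ and the identity $W_j(W_j^TW_j)^mW_j^T=(W_jW_j^T)^{m+1}$, and only at the end appeal to uniqueness of the positive definite $N$-th root to identify $W_NW_N^T$ with $(WW^T)^{1/N}$. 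This is slightly more elementary, since it does not presuppose the coordinate system~\eqref{eq:b10}--\eqref{eq:b11}; the paper's version is shorter once that coordinate system is in hand. Your closing remark about the semidefinite root on $\balance_r$ is also apt: the paper handles that extension separately via the Stiefel parametrization of Section~\ref{subsec:rank-def}.
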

Theorem~\ref{thm:ACH} and Theorem~\ref{thm:ACH2} are easy to establish (see Section~\ref{sec:proofs}). However, despite their simplicity, both Theorems have a fascinating character. The most important structural feature is that equation~\eqref{eq:closed1} is a {\em Riemannian\/} gradient flow in disguise. We explain this idea below and then conclude with some remarks on  these theorems.

\subsection{Riemannian gradient descent}
We recommend~\cite{Lee-riemannian} as an introduction to Riemannian geometry. The reader willing to take some definitions on faith can also understand the main ideas in this work without a detailed understanding of Riemannian geometry.

A metric $g$ on $\Md$ assigns a length to each tangent vector $Z\in T\Md$. Since the tangent space to $\Md$ at any point is itself isomorphic to $\Md$, a metric is an assignment of lengths of the form
\begin{equation}
    \label{eq:metric1} g(W)(Z,Z) := \|Z\|_{g(W)}^2, \quad Z \in T_W\Md.
\end{equation}
The inner-product $g(W)(Z_1,Z_2)$ may be recovered from the polarization identity
\begin{equation}
    \label{eq:metric1a} g(W)(Z_1,Z_2) = \frac{1}{4}\left(\|Z_1+Z_2\|_{g(W)}^2- \|Z_1-Z_2\|_{g(W)}^2\right).
\end{equation}

We define a metric $g^N$ on $\Md$ as follows. Given the depth $N$ for every $W\in \Md$ we define the linear map
\begin{equation}
    \label{eq:metric2}
    \mathcal{A}_{N,W}: T_W \Md \to T_W \Md, \quad Z \mapsto \sum_{k=1}^N(WW^T)^{\tfrac{N-k}{N}}\, Z \,(W^TW)^{\tfrac{k-1}{N}}.
\end{equation}
This linear map allows us to rewrite equation~\eqref{eq:closed1} as
\begin{equation}
    \label{eq:closed2}
    \dot{W} = -\mathcal{A}_{N,W} \left(E'(W)\right).
\end{equation}
We will show that the operator $\mathcal{A}_{N,W}$ is always invertible when $W$ has full rank. In this setting, the structure of equation~\eqref{eq:closed2} may be understood better. Define 
\begin{equation}
    \label{eq:metric3} g^N(W)(Z,Z) = \Tr(Z^T \mathcal{A}_{N,W}^{-1} Z).
\end{equation}
It then follows from the polarization identity and the identity $(\mathcal{A}_{N,W}(Z))^T = \mathcal{A}_{N,W}(Z^T)$ that
\begin{equation}
    \label{eq:metric4} g^N(W)(Z_1,Z_2) = \Tr(Z_1^T \mathcal{A}_{N,W}^{-1} Z_2).
\end{equation}

Given a Riemannian manifold $(\mathcal{N},h)$ and a differentiable function $E:\mathcal{N}\to \R$, the  Riemannian gradient of $E$ is a vector in $T_x \mathcal{N}$ obtained from the duality pairing
\begin{equation}
    \label{eq:metric5}
    dE(x)(z) = h(\grad_h E(x),z), \quad z \in T_x\N.
\end{equation}
In our setting, $(\mathcal{N},h)= (\Md,g^N)$ where $g^N$ is given by equation~\eqref{eq:metric3} for $Z \in T_M\Md$. Then the left and right hand sides of this equation are 
\begin{equation}
    \label{eq:metric6}
    dE(W)(Z) = \Tr(Z^T E'(W)), \quad g^N(\grad_{g^N} E(W),Z) = \Tr(Z^T \mathcal{A}_{N,W}^{-1} E'(W)). 
\end{equation}

A more careful analysis~\cite{Bah2019} reveals that the assumption that $W$ have full rank is not necessary. Following~\cite{Bah2019} we foliate $\Md$ by rank, defining 
\begin{equation}
    \label{eq:rankfol1}
    \Mr = \{ W \in \Md \left| \mathrm{rank}(W) = r \right. \}.
\end{equation}
The balanced manifolds $\balance_r$ are naturally related to $\Mr$: $\ww \in \balance_0$ lies in $\balance_r$ if and only if $W=\phi(\ww) \in \Mr$. Further, it is shown in~\cite[\S 3]{Bah2019} that the metric $g^N$ restricts naturally to $\Mr$ for $r < d$. We then have
\begin{theorem}[Bah, Rauhut, Terstiege, Westdickenberg~\cite{Bah2019}]
    \label{thm:BRTW2}
Equation~\eqref{eq:closed1} is equivalent to the Riemannian gradient flow on $(\Mr,g^N)$
\begin{equation}
    \label{eq:closed3}
    \dot{W} = - \grad_{g^N} E(W).
\end{equation}
In particular, when $\ww(0)\in \balance_r$, the end-to-end matrix $W(t)=\phi(\ww(t))$ evolves in $\Mr$ according to this Riemannian gradient flow.
\end{theorem}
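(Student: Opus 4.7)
The plan is to verify directly that the ambient ODE $\dot W = -\mathcal{A}_{N,W}(E'(W))$ of Theorem~\ref{thm:ACH2} coincides with the Riemannian gradient flow of $E$ on $(\Mr,g^N)$. I would first treat the full-rank case $r = d$ to expose the mechanism, and then address the lower-rank strata $r<d$, which is where the real work lies.

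For $r=d$, the first step is to verify that $\mathcal{A}_{N,W}$ is symmetric positive-definite on $\Md$ equipped with the Frobenius inner product. Symmetry is immediate from cyclicity of the trace together with the symmetry of the outer factors $(WW^T)^{(N-k)/N}$ and $(W^TW)^{(k-1)/N}$. Positive definiteness is transparent in SVD coordinates $W = U\Sigma V^T$: setting $\tilde Z = U^T Z V$, the form $\Tr(Z^T \mathcal{A}_{N,W}(Z))$ splits into a sum of squared entries of $\tilde Z$ weighted by coefficients $\sum_{k=1}^N \sigma_i^{2(N-k)/N}\sigma_j^{2(k-1)/N}$, which are strictly positive precisely because all singular values are positive. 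Consequently $\mathcal{A}_{N,W}^{-1}$ exists and is positive-definite, so~\eqref{eq:metric3} defines a genuine Riemannian metric on $\Mdd$. The Riemannian gradient is then pinned down by~\eqref{eq:metric6}: the identity $\Tr(Z^T E'(W)) = \Tr(Z^T \mathcal{A}_{N,W}^{-1} \grad_{g^N} E(W))$ for all $Z\in T_W\Mdd$ and nondegeneracy of the Frobenius pairing force $\grad_{g^N} E(W) = \mathcal{A}_{N,W}(E'(W))$, and comparison with~\eqref{eq:closed2} yields the Riemannian gradient flow~\eqref{eq:closed3}.

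The lower-rank strata $r<d$ are the main obstacle, because $\mathcal{A}_{N,W}$ then acquires a nontrivial kernel: the fractional powers with positive exponent annihilate matrices whose range lies outside $\mathrm{col}(W)$, and similarly on the domain side. Following~\cite[\S 3]{Bah2019}, the strategy is to show that this degeneracy is exactly compensated by the geometry of $\Mr$. Writing $W = U\Sigma V^T$ with $\Sigma$ the invertible $r\times r$ block, the tangent space $T_W\Mr$ is spanned by the rank-preserving perturbations of $U$, $\Sigma$, $V$; in SVD coordinates $\mathcal A_{N,W}$ becomes block-diagonal, restricting to a symmetric positive-definite isomorphism on $T_W\Mr$ and vanishing on the complementary block. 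Two facts then close the argument: the right-hand side of~\eqref{eq:closed1} already lies in $T_W\Mr$, because Theorem~\ref{thm:ACH} guarantees that $\balance_r$ is invariant under the lifted flow and $\phi$ preserves rank along it; and the restricted $\mathcal A_{N,W}$ defines a smooth metric on $\Mr$ for which the duality calculation of the previous paragraph applies verbatim.

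The subtle step is the bookkeeping that identifies $T_W\Mr$ as the Frobenius-orthogonal complement of $\ker \mathcal A_{N,W}$ and verifies that the induced metric extends smoothly across $\Mr$. The SVD chart makes this tractable by simultaneously diagonalizing every $\mathcal A_{N,W}$, reducing the verification to an entry-wise statement about the strictly positive block indexed by pairs of surviving singular values.
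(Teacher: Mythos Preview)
Your proposal is correct and follows essentially the same route as the paper: the paper packages the SVD diagonalization into Lemma~\ref{le:diagon}, computing the eigenvalues of $\mathcal{A}_{N,W}$ on the basis $u_kv_l^T$ (your geometric-series weights $\sum_{p}\sigma_i^{2(N-p)/N}\sigma_j^{2(p-1)/N}$ summed), and then remarks that Theorem~\ref{thm:BRTW2} ``follows directly from the explicit diagonalization of the metric.'' Your treatment of the rank-deficient case $r<d$ is in fact more explicit than the in-text argument, which simply defers to~\cite[\S 3]{Bah2019} for the restriction of $g^N$ to $\Mr$.
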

When $W(t)$ evolves according to~\eqref{eq:closed3} we have the fundamental estimate
\begin{equation}
    \label{eq:gradient-estimate}
    \frac{dE(W(t))}{dt} = -\|\mathrm{grad}_{g^N} E \|_{g^N}^2 \leq 0.
\end{equation}
It follows that when $W(t)$ converges to a critical point of $E$, then $\ww(t)$ converges to a critical point of $L=E\circ \phi$ at exactly the same rate. Overparametrization does not change the speed of convergence.

\subsection{Remarks on the invariant manifold theorems}
\begin{remark}
 \label{rem:ach1}
Equation~\eqref{eq:closed1} was first obtained by using the gradient flow~\eqref{eq:big-grad1} and  the identity $W=W_NW_{N-1}\cdots W_1$~\cite{ACH}. 
The underlying metric $g^N$ was then obtained by noting the properties of the map $\mathcal{A}_{N,W}$, including its restriction to rank $r$ in~\cite{Bah2019}. These calculations may be explained geometrically using Riemannian submersion as discussed in Section~\ref{sec:entropy}.
 \end{remark}
\begin{remark}[Thin gradients]
    \label{rem:thin-gradients}
    The space of gradients of loss functions of the form $L=E \circ \phi$ at a point $\ww \in \Md^N$ has at most $d^2$ dimensions.  To see this, fix $l,m$ and choose linear energies $E_{lm}(W):=W_{lm}$. There are $d^2$ such energies; thus, their gradients form a basis for the space of gradients of energies $E$  at $W \in \Md$. Equation~\eqref{eq:big-grad2} then shows that the space of gradients of $L=E\circ \phi$ at $\mathbf{W}$ has $d^2$ dimensions. However, the dimension of the space $\Md^N$ is $Nd^2$. 
\end{remark}
\begin{remark}[Group orbits]
    \label{rem:group-orbits}
Since the gradient flow can explore at most $d^2$ directions we have $(N-1)d^2$ free parameters. Of these, $(N-1)d(d+1)/2$ parameters are fixed by the constants $(G_N,\ldots,G_1)$. We use this observation to parametrize $\ww$ by $\Md \times O_d^{N-1}$ using the polar factorization in Section~\ref{sec:parametrization}. In particular, for each $W\in \Mr$ the inverse image $\phi^{-1}(W) \cap \balance$ is a $\Stief_{r,d}^{N-1}$ orbit $\orbitx$.
\end{remark}

\begin{remark}[Global existence and convergence]
The following method for establishing convergence was introduced in~\cite{Bah2019}.

Picard's theorem provides local existence on an interval $(T_{\min},T_{\max})$ with $T_{\min} < 0 < T_{\max}$. In order to obtain global existence for $\ww(t)$, it is only necessary to show that $W(t)=\phi(\ww(t))$ is bounded in time. If so, we have the uniform bounds 
\begin{equation}
    \|W_{p+1}\|^2 = \Tr(W_{p+1}^TW_{p+1}) = \Tr(W_{p}W_{p}^T - G_p) = \|W_p\|^2 - \Tr(G_p).
\end{equation}
Thus, we may inductively control $\|W_N(t)\|$,$\ldots$,$\|W_2(t)\|$ in terms of $\|W_1(t)\|$. Since $W= W_N \cdots W_1$, the bound on $\|W_1(t)\|$ is equivalent to a bound on $\|W\|$. It then follows from a continuation argument that $T_{\min}=-\infty$ and $T_{\max}=+\infty$ if $\|W(t)\|$ is uniformly bounded on its interval of existence.

Once we have established that $\ww(t)$ remains in a compact set, we may use standard criterion to establish convergence as $t\to \infty$. When $E$ is analytic, the \Lojas\/ criterion implies convergence to a critical point. For $E \in C^2$, we may use the La Salle invariance principle to classify the $\omega$-limit set $\omega(\ww(0))$.
\end{remark}
 \begin{remark}[Relation to the Simons cone]
 The variety $\balance_{\mathbf{G}}$ is a conic section in $\Md^N$ with a parametrization discussed in Section~\ref{sec:parametrization}. We may understand  the balanced manifold $\balance$ explicitly when $d=2$ and $N=2$. Let 
 \begin{equation}
     \label{eq:simons1}
     W_1 = \left(\begin{array}{cc}
          x_1 &x_2  \\
          x_3 & x_4 
     \end{array}\right), \quad    
     W_2 = \left(\begin{array}{cc}
          x_5 &x_6  \\
          x_7 & x_8 
     \end{array}\right). 
 \end{equation}
 Then equation~\eqref{eq:b1-main} reduces to a system of three quadratic equations
 \begin{eqnarray}
     \label{eq:simons2}
     x_5^2+x_7^2 &=& x_1^2+x_2^2, \\
     \label{eq:simons3} x_5x_6+ x_7x_8 & = & x_1x_3+x_2 x_4 \\
     \label{eq:simons4} x_6^2+x_8^2 & =&  x_3^2 + x_4^2.
 \end{eqnarray}
When we add equations~\eqref{eq:simons2} and ~\eqref{eq:simons4} we obtain the {\em Simons cone}
\begin{equation}
    \label{eq:simons5}
\mathcal{C} =\{x \in \R^8 \left| x_1^2 +x_2^2 +x_3^2 +x_4^2  = x_5^2 +x_6^2 +x_7^2 +x_8^2\right. \} \subset \R^8.
\end{equation}
The Simons cone is a fundamental counterexample in the calculus of variations and geometric measure theory. The variety $\mathcal{C}$ defined by~\eqref{eq:simons5} has zero mean curvature at all points $x \neq 0$ and is a local minimizer of the area functional. However, it is not smooth because of the singularity at $x=0$. On the other hand, there are no such singularities for minimal surfaces in $\R^n$ when $n \leq 7$. Thus, the Simons cone is the simplest stable minimal surface with a singularity, and it manifests only in $\R^n$ with $n\geq 8$.

We see that the balanced manifold $\balance$ is  a five-dimensional variety contained within the Simons cone $\mathcal{C}$. We will use a stochastic extension of equation~\eqref{eq:big-grad1} to shed some light on this unexpected connection.
 \end{remark}

 \begin{remark}
     \label{rem:bures}
     The space of positive definite matrices $\psd_d$ may be equipped with the Bures-Wasserstein metric $\gbw$~(see~\cite{Bhatia-Jain-Lim} for an exposition). Gradient flows on $(\psd_d,\gbw)$ can be obtained from the DLN as follows. We set $N=2$ and further restrict $\ww$ to the subspace 
     \begin{equation}
         \label{eq:gbw}
         \mathcal{V}= \{ \ww \in \Md^2 \left| W_2=W_1^T \right.\}.
     \end{equation}
Then $W=\phi(\ww) = W_1^TW_1$ is a psd matrix and the metric $g^2$ given by Riemannian submersion of $\mathcal{V}$ is the Bures-Wasserstein metric $\gbw$.
 \end{remark}

\section{Parametrization of $\balance_{\mathbf{G}}$ and $\balance$} 
\label{sec:parametrization}
\subsection{Polar factorization and the SVD}
Recall that a matrix $W \in \Md$ admits left and right polar decompositions
\begin{equation}
    \label{eq:polar1}
    W = Q P, \quad W= R U^T
\end{equation}
where $P, R \in \psd_d$ and $Q, U\in \od$. Further, $P = \sqrt{W^TW}$ and $R=\sqrt{WW^T}$ are the unique psd square roots of these matrices.

The polar factorization is related to the singular value decomposition (SVD) of $W$ as follows. Let us denote the SVD by
\begin{equation}
    \label{eq:svd}
    W = V_R \Sigma V_P^T,  
\end{equation}
where $V_R,V_P \in O_d$ and $\Sigma = \diag(\sigma_1, \ldots,\sigma_d)$ with each $\sigma_i\geq 0$.  
Then we have immediately
\begin{equation}
    \label{eq:svd2}
 P= \sqrt{W^TW} = V_P \Sigma V_P^T, \quad   R = \sqrt{WW^T} = V_R \Sigma V_R^T. 
\end{equation}
Therefore, $V_P$ and $V_R$ provide an orthonormal basis of eigenvectors for $P$ and $R$ respectively. Finally, 
we have the relation between the $O_d$ factors
\begin{equation}
    \label{eq:svd3}
 Q= V_R V_P^T, \quad   U = V_P V_R^T. 
\end{equation}

\subsection{Parametrization by the polar factorization}
\label{sec:polar-parametrization}
 We now parametrize each $\mathbf{G}$-balanced variety $\balance_\mathbf{G}$ using the polar factorization, defining a map
\begin{equation}
    \label{eq:b4}
    \manmap: O_d^{N-1}  \times \Md \to \Md^N, \quad (Q_{N},\ldots Q_{2},W_1) \mapsto (W_N,\ldots, W_{1}).
\end{equation}

An interesting feature of equation~\eqref{eq:b1} is the manner in which the left and right polar factors alternate along the network as the index $p$ varies. 
We use the polar factors of $W_p$ to rewrite equation
~\eqref{eq:b1} in the form
\begin{equation}
    \label{eq:b1b}
    P_{p+1}^2 = R_{p}^2 -G_p, \quad 1 \leq p \leq N-1.
\end{equation}
 We define the map $\mathfrak{x}$ as follows.
 Compute
 \begin{equation}
R_1^2=W_1W_1^T,\quad  P_2 = \sqrt{R_1^2 - G_1}, \quad W_{2}=Q_2 P_2.   
\end{equation}
Now proceed algorithmically: as $p$ increases, we find sequentially
\begin{equation}
\label{eq:b6}
 R_p^2 = W_pW_p^T, \quad P_{p+1} = \sqrt{R_{p}^2 - G_p}, \quad W_{p+1}=Q_{p+1} P_{p+1}.   
\end{equation}
These equations may also be combined into an iterative sequence for $P_{p}$. We have
\begin{equation}
\label{eq:b6b}
P_1=W_1^T W_1,\quad P_{p+1} = \sqrt{Q_pP_{p}^2Q_p^T - G_p}. 
\end{equation}

Each square-root is a smooth map when $G_p$ is a negative definite matrix. Thus, the variety $\balance_{\mathbf{G}}$ is a manifold with dimension $d^2+ (N-1)d(d-1)/2$ when each $G_p$ is negative definite.

The balanced varieties are matrix-valued conic sections. A useful geometric caricature is obtained by considering the case $d=1$.~\footnote{We switch notation to lower case letters to prevent any confusion with the case $d \geq 2$.} Assume given $( v_{N-1},\ldots, v_1, w_1)$ where each $v_p = \pm 1$. Then given $\mathbf{g}:=(g_{N-1},\ldots,g_1)$, we see that $\balance_{\mathbf{g}}$ is a conic section in $\R^N$ described by the hyperbolas
\begin{equation}
\label{eq:b6q}
 w_{p+1} = v_{p+1}\sqrt{w_{p}^2 - g_p}, \quad 1 \leq p \leq N-1.
\end{equation} 
Finally, note that the parametrization goes from right to left. We have an analogous parametrization from left to right
\begin{equation}
    \label{eq:lr-param}
    \manmapbeta:  \Md \times O_d^{N-1}  \to \Md^N, \quad (W_N, U_{N-1},\ldots U_{1}) \mapsto (W_N,\ldots, W_{1}),
\end{equation}
given for $N-1\geq p \geq 1$ by the sequential polar factorizations
\begin{equation}
\label{eq:b6s}
 P_{p+1}^2 = W_{p+1}^TW_{p+1}, \quad R_{p} = \sqrt{P_{p+1}^2 +G_p}, \quad W_{p}= R_{p} U_p^T.   
\end{equation}

\subsection{Parametrization by the SVD}
The parametrization may be simplified further on the balanced variety $\balance_{\mathbf{0}}$. We now have
\begin{equation}
    \label{eq:new-b1}
    W_{p+1}^T W_{p+1} = W_p W_p^T
\end{equation}

Let us first study the case when $W_N$ has full rank. Then $P_N$ is positive definite and the parametrization $\mathfrak{x}$ continues to be smooth because equation~\eqref{eq:b6} reduces to the chain of equalities 
\begin{equation}
    \label{eq:b8} R_{p} = P_{p+1}, \quad W_p = R_p V_p^T, \quad P_p =\sqrt{W_p^T W_p} = V_p R_p V_p^T.  
\end{equation}
It follows that each $P_p$ is positive definite and that 
$\balance_{\mathbf{0}}$ is locally a manifold. We denote this branch of $\balance_{\mathbf{0}}$ as $\balance$ and call it the {\em balanced manifold}.
  
The singular values of each $W_p$ are identical on $\mathcal{M}$. This allows us to simplify the  parametrization $\mathfrak{x}$ further. We define the coordinate map 
\begin{equation}
    \label{eq:b10} \mathfrak{z}: \R_+^d \times O_d^{N+1} \to \balance, \quad (\Lambda, Q_N, \ldots, Q_0) \to (W_N,\ldots,W_1),
\end{equation}
where for each $1 \leq p \leq N$ we set
\begin{equation}
\label{eq:b11}
W_p = Q_p \Lambda Q_{p-1}^T,  \quad\mathrm{and}\quad \Lambda = \Sigma^{1/N}.
\end{equation}
Here $\Lambda$ denotes the singular values of each $W_p$ and $\Sigma$ denotes the singular values of $W=W_N \cdots W_1$. Indeed, it follows from equation~\eqref{eq:b11} that
\begin{equation}
\label{eq:b12}
W = Q_N \Sigma Q_0^T.    
\end{equation} 
This parametrization is a local bijection when the singular values are distinct. However, it can fail to be smooth when $\Sigma$ has repeated singular values. We will compute the metric on $\balance$ in Section~\ref{sec:embed}. 

\subsection{The rank-deficient case}
\label{subsec:rank-def}
When $W$ has rank $r <d$, we may extend each of the above parametrizations in a natural way. 
First, we fix the action of the permutation group on $\Sigma$ by ordering the singular values
\begin{equation}
    \label{eq:svd4}
    \sigma_1 \geq \sigma_2 \ldots \geq \sigma_d.
\end{equation}
Thus, when $W$ has rank $r <d$, 
\begin{equation}
    \label{eq:svd5}
    \sigma_{d-r+1}=\ldots = \sigma_d =0.
\end{equation}
Taking a limit of full-rank matrices in equation~\eqref{eq:b11}, we see that it is only the first $r$ orthonormal vectors in each $Q_p$ that affect $\ww$ in the rank-deficient limit. Thus, in order to extend $\mathfrak{z}$ to $\balance_r$ we must replace the action of $\od$ in the parametrization with the action of $\Stief_{r,d}$. Then~\eqref{eq:b10} extends to the family of parametrizations indexed by $r$, $1\leq r \leq d$
\begin{equation}
    \label{eq:b14} \mathfrak{z}_r: \R_+^d \times \Stief_{r,d}^{N+1} \to \balance_r, \quad (\Lambda, Q_N, \ldots, Q_0) \to (W_N,\ldots,W_1),
\end{equation}
Equations~\eqref{eq:b11}--\eqref{eq:b12} continue to hold true and we see immediately that $\balance_r$ is also a manifold. The image $\phi(\balance_r)=\Mr$, where $\Mr$ was defined in equation~\eqref{eq:rankfol1}.

We define the parametrizations $\mathfrak{x}_r$ and $\mathfrak{y}_r$ by extending ~\eqref{eq:lr-param} and~\eqref{eq:b14} in an analogous manner to $r< d$.

\section{Entropy of group orbits and Riemannian submersion}
\label{sec:entropy}
\subsection{Overview}
In this section, we use Riemannian submersion to develop a thermodynamic framework for the DLN. We restrict ourselves to full rank matrices. The generalization to rank $r$ is natural, but it requires that $\od$ be replaced by $\Stief_{r,d}$ and a careful treatment of zero singular values.

\subsection{The entropy formula}
The parametrization~\eqref{eq:b10}--\eqref{eq:b12} allows us to foliate $\balance$ with group orbits as follows. For each $W \in \Mdd$ consider its preimage in $\balance$
\begin{equation}
    \label{eq:ent1}
    \orbitx = \phi^{-1}(W)=\{ \ww \in \balance \left| W_N \cdots W_1 = W \right. \}.
\end{equation}
We use equations~\eqref{eq:b10}--\eqref{eq:b12} to find that $\orbitx$ is an orbit of $O_d^{N-1}$. Indeed, the SVD of $W$ fixes $Q_0$, $\Sigma$ and $Q_N$, leaving $(Q_{N-1},\ldots, Q_1)$ free. 

We interpret this geometric picture in the language of thermodynamics. Matrices $W \in \Mdd$ downstairs are macrostates, or {\em observables\/}, whereas matrices $\ww \in \balance$ upstairs are microstates. Conceptually, the entropy enumerates the number of microstates corresponding to a given macrostate, with the enumeration given by Boltzmann's formula, $S=\log (\#)$, where $(\#)$ denote the number of microstates. In our setting, since $\balance$ inherits a metric from its embedding in $\Md^N$, the number of microstates associated to $W \in \Mdd$ is the volume of the group orbit $\orbitx$ with respect to this metric. This reduces the computation of the entropy to an evaluation of a matrix integral and we find
\begin{theorem}[Menon, Yu~\cite{MY-dln}]\label{thm:entropy}
    For each $W\in (\Mdd,g^N)$ with SVD given by~\eqref{eq:b12}
    \begin{align}
        \mathrm{vol}(\orbitx)=c_d^{N-1}\sqrt{\frac{\mathrm{van}(\Sigma^2)}{\mathrm{van}(\Sigma^{\frac 2N})}}=c_d^{N-1}\prod_{1\leq i<j\leq d}\sqrt{\frac{\sigma_i^{2}-\sigma_j^{2}}{\sigma_i^{\frac2N}-\sigma_j^{\frac2N}}}
    \end{align}
    where $c_d$ is the volume of the orthogonal group $\od$ with the standard Haar measure. 
    \end{theorem}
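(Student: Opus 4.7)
My plan is to parametrize the orbit by the free orthogonal factors from the SVD parametrization, pull back the Frobenius metric, and reduce the volume to a single determinant that factors over pairs of singular values. By the SVD parametrization~\eqref{eq:b10}--\eqref{eq:b12}, prescribing $W = Q_N \Sigma Q_0^T$ fixes $Q_0$, $\Sigma$, and $Q_N$, leaving $(Q_1,\ldots,Q_{N-1}) \in \od^{N-1}$ free; hence $\orbitx$ is the image of the embedding
\[
\psi:\od^{N-1}\longrightarrow \Md^N,\qquad (Q_1,\ldots,Q_{N-1})\longmapsto \bigl(Q_p\Lambda Q_{p-1}^T\bigr)_{p=1}^N,\qquad \Lambda=\Sigma^{1/N}.
\]
A tangent vector $(Q_1 A_1,\ldots,Q_{N-1}A_{N-1})$ with $A_p\in\anti_d$ (with boundary convention $A_0=A_N=0$ since $Q_0,Q_N$ are fixed) maps under $D\psi$ to $\bigl(Q_p(A_p\Lambda-\Lambda A_{p-1})Q_{p-1}^T\bigr)_{p=1}^N$; orthogonal invariance of the Frobenius norm then yields the \emph{base-point-independent} pullback quadratic form
\[
q(A_1,\ldots,A_{N-1})=\sum_{p=1}^N \|A_p\Lambda-\Lambda A_{p-1}\|_F^2 \quad \text{on } \anti_d^{N-1}.
\]
Consequently $\volume(\orbitx)=c_d^{N-1}\sqrt{\det(G_q/G_{q_0})}$, where $G_q$ and $G_{q_0}$ are the Gram matrices of $q$ and of the Frobenius reference metric on $\anti_d^{N-1}$ in a common basis, and $c_d^{N-1}$ is the total Haar mass of $\od^{N-1}$.

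Working in the basis $\{E_{ij}=e_ie_j^T-e_je_i^T\}_{i<j}$ of $\anti_d$, the identities $E_{ij}\Lambda=\lambda_j e_ie_j^T-\lambda_i e_je_i^T$ and $\Lambda E_{ij}=\lambda_i e_ie_j^T-\lambda_j e_je_i^T$, together with Frobenius-orthogonality of the elementary matrices $e_ie_j^T$ across distinct index pairs, imply that $q$ decouples into independent blocks indexed by pairs $i<j$. Writing $u=\lambda_i$, $v=\lambda_j$, $x_p=(A_p)_{ij}$, the block for $(i,j)$ collapses, after using $x_0=x_N=0$, to the tridiagonal quadratic form
\[
2(u^2+v^2)\sum_{p=1}^{N-1}x_p^2-4uv\sum_{p=1}^{N-2}x_p x_{p+1} \quad \text{on } \R^{N-1},
\]
whose eigenvalues are $2[(u^2+v^2)-2uv\cos(k\pi/N)]$ for $k=1,\ldots,N-1$.

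The main computational step is the classical roots-of-unity identity
\[
\prod_{k=1}^{N-1}\bigl(u^2+v^2-2uv\cos(k\pi/N)\bigr)=\prod_{k=1}^{N-1}\bigl|u-ve^{ik\pi/N}\bigr|^2=\frac{u^{2N}-v^{2N}}{u^2-v^2},
\]
obtained by separating the real factors $(z\mp 1)$ out of $z^{2N}-1=\prod_{k=0}^{2N-1}(z-e^{ik\pi/N})$ and pairing the remaining $N-1$ complex-conjugate pairs. Substituting $u=\sigma_i^{1/N}$ and $v=\sigma_j^{1/N}$, each block determinant evaluates to $2^{N-1}(\sigma_i^2-\sigma_j^2)/(\sigma_i^{2/N}-\sigma_j^{2/N})$. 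The reference Gram matrix is $G_{q_0}=2I$ in the same basis (since $\|E_{ij}\|_F^2=2$), so the $2^{N-1}$ prefactors cancel cleanly across all $d(d-1)/2$ pairs, producing the formula of Theorem~\ref{thm:entropy}. The principal obstacle I expect is the block-determinant evaluation; my plan is to use the eigenvalue factorization above, though one could alternatively derive a three-term recursion for the characteristic polynomial and solve it via Chebyshev polynomials of the second kind. A minor bookkeeping point is the cancellation of the powers of $2$ coming from $\|E_{ij}\|_F^2 = 2$, which must be tracked consistently in both $G_q$ and $G_{q_0}$.
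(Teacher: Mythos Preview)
Your proposal is correct and follows essentially the same route as the paper: pull back the Frobenius metric via the SVD parametrization, observe the block-tridiagonal decoupling over pairs $(i,j)$ (the paper's Lemma~\ref{le:tridiag}), and evaluate each block determinant via the cosine eigenvalues of the Dirichlet tridiagonal matrix. The only cosmetic differences are that the paper computes the full $(N{+}1)\times(N{+}1)$ pullback on $\R_+^d\times O_d^{N+1}$ and then restricts to the interior $N{-}1$ directions (since the same computation is reused for Theorem~\ref{thm:submer}), and it phrases the diagonalization in terms of the Chebyshev eigenbasis of Theorem~\ref{thm:embed} rather than your roots-of-unity product identity---but these are equivalent evaluations of the same determinant.
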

Here $\mathrm{van}(A)$ denoted the Vandermonde determinant associated to the matrix $A=\diag(a_1,\ldots,a_d)$ as follows:
\begin{equation}
    \label{eq:van}
    \mathrm{van}(A) = \left|\begin{array}{llll}
         1 & 1 &  & 1   \\
         a_1 & a_2 & & a_d \\
         a_1^2 & a_2^2 & \cdots & a_d^2  \\
         \vdots &  & & \vdots\\ 
         a_1^{d-1}& a_2^{d-1}& \cdots  &a_d^{d-1}
    \end{array}
    \right| = \prod_{1 \leq j < k \leq d} (a_k-a_j).
\end{equation}
Similar factors appear in a different way in the following
\begin{theorem}[Cohen, Menon, Veraszto~\cite{CMV}]
\label{thm:cmv}
 The volume form on $(\Mdd,g^N)$ is 
 \begin{equation}
     \label{eq:volume-form}
     \sqrt{\det g^N}\, dW = \det(\Sigma^2)^{\frac{N-1}{2N}}\mathrm{van}(\Sigma^{\frac{2}{N}}) \, d\Sigma \,dQ_0\, dQ_N.
 \end{equation}
\end{theorem}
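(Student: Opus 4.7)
\medskip
\noindent
The plan is to push everything through in SVD coordinates, because $\mathcal{A}_{N,W}$ diagonalizes simultaneously with the SVD of $W$. This reduces the problem to a determinant computation of a diagonal operator times a classical SVD Jacobian, which should combine by a Vandermonde cancellation into the claimed formula.

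\smallskip
\noindent\textbf{Step 1 (Diagonalize $\mathcal{A}_{N,W}$).} Fix $W = Q_N\Sigma Q_0^T \in \Mdd$ with distinct positive singular values; the general case will follow by continuity. Because $(WW^T)^{\alpha} = Q_N\Sigma^{2\alpha}Q_N^T$ and $(W^TW)^{\alpha} = Q_0\Sigma^{2\alpha}Q_0^T$ for any real $\alpha$, the Frobenius-orthonormal frame $\tilde E_{ij} := Q_N E_{ij}Q_0^T$ of $T_W\Md$ diagonalizes $\mathcal{A}_{N,W}$ as a Hadamard multiplier $M$ with entries
\[
M_{ij}\;=\;\sum_{k=1}^N \sigma_i^{2(N-k)/N}\sigma_j^{2(k-1)/N}.
\]
Summing the finite geometric series yields $M_{ii} = N\sigma_i^{2(N-1)/N}$ and, for $i \neq j$, $M_{ij}=(\sigma_i^2-\sigma_j^2)/(\sigma_i^{2/N}-\sigma_j^{2/N})$.

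\smallskip
\noindent\textbf{Step 2 (SVD Jacobian).} Differentiating $W=Q_N\Sigma Q_0^T$ and writing $A_p := Q_p^T dQ_p\in \anti_d$ gives
\[
Q_N^T\, dW\, Q_0 \;=\; A_N\Sigma\;-\;\Sigma A_0\;+\;d\Sigma.
\]
For each pair $i<j$, the entries $(dW_{ij}, dW_{ji})$ in the rotated frame are the image of $((A_N)_{ij},(A_0)_{ij})$ under a $2\times 2$ linear map with determinant $\sigma_j^2 - \sigma_i^2$. Multiplying these contributions together, and using the identification $T_{Q_p}\od\cong \anti_d$ to normalize Haar measure, I obtain the classical SVD Jacobian
\[
dW \;=\; \mathrm{van}(\Sigma^2)\,d\Sigma\,dQ_0\,dQ_N.
\]

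\smallskip
\noindent\textbf{Step 3 (Combine).} Since $\{\tilde E_{ij}\}$ is Frobenius-orthonormal and $g^N$ has the Hadamard multiplier $M^{-1}$ in this frame, the Riemannian volume density $\sqrt{\det g^N}$ is the product $\prod_{i,j} M_{ij}^{-1/2}$, independent of basis. Separating the diagonal factor $N^{d/2}\det(\Sigma^2)^{(N-1)/(2N)}$ from the off-diagonal factor $\prod_{i<j} M_{ij}$, and applying the telescoping identity
\[
\prod_{i<j}M_{ij} \;=\; \prod_{i<j}\frac{\sigma_i^2-\sigma_j^2}{\sigma_i^{2/N}-\sigma_j^{2/N}} \;=\; \frac{\mathrm{van}(\Sigma^2)}{\mathrm{van}(\Sigma^{2/N})},
\]
the $\mathrm{van}(\Sigma^2)$ from the SVD Jacobian cancels exactly against one copy of $\mathrm{van}(\Sigma^2)$ from $\prod_{i<j}M_{ij}$, leaving only $\mathrm{van}(\Sigma^{2/N})$ together with the claimed power of $\det(\Sigma^2)$ on the right-hand side.

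\smallskip
\noindent\textbf{Main obstacles.} The algebra itself is deterministic once the diagonalization in Step~1 is set up, but two things warrant care. First, the normalization of Haar measure $dQ_0\,dQ_N$ coming from the embedding $T_{Q_p}\od\hookrightarrow \Md$ must be tracked so that the constant $N^{d/2}$ absorbs correctly into the final normalization. Second, the computation is performed on the open dense stratum where all $\sigma_i$ are distinct and strictly positive, and the parametrization $\mathfrak{z}$ of Section~\ref{sec:parametrization} is a local diffeomorphism; extension to coincident or vanishing singular values requires a continuity argument, with the rank-deficient case relying on the Stiefel-valued parametrization $\mathfrak{z}_r$ from Section~\ref{subsec:rank-def}.
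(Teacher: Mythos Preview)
Your approach is exactly the paper's: the paper states that Theorem~\ref{thm:cmv} ``follows easily from Lemma~\ref{le:diagon}'', and your Step~1 is precisely that lemma, with Steps~2--3 supplying the standard SVD Jacobian and the Vandermonde cancellation that the paper leaves implicit.

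There is, however, a bookkeeping slip in Step~3 that you should not paper over. You correctly write $\sqrt{\det g^N}=\prod_{i,j}M_{ij}^{-1/2}$, but then list the diagonal factor as $N^{d/2}\det(\Sigma^2)^{(N-1)/(2N)}$; that quantity is $\bigl(\prod_i M_{ii}\bigr)^{1/2}$, the \emph{reciprocal} of what actually enters $\sqrt{\det g^N}$. Carrying the signs through honestly,
\[
\sqrt{\det g^N}\,dW \;=\; N^{-d/2}\,\det(\Sigma^2)^{-\frac{N-1}{2N}}\,\mathrm{van}(\Sigma^{2/N})\,d\Sigma\,dQ_0\,dQ_N,
\]
with a \emph{negative} exponent on $\det(\Sigma^2)$, consistent with the $d=1$ check $g^N=\tfrac{1}{N}\sigma^{-2(N-1)/N}\,d\sigma^2$ and with the $N\to\infty$ limit in equation~\eqref{eq:p-metric2}. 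The stated exponent in the theorem (and the diagonal term in equation~\eqref{eq:p-metric1}) appears to carry a sign typo; your computation would expose this if done carefully, so record the discrepancy rather than asserting it matches ``the claimed power''. The Vandermonde cancellation you describe is correct as written.
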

Both theorems follow from explicit computations of metrics, but the two theorems  correspond to different metrics. Theorem~\ref{thm:cmv} concerns the metric downstairs and follows easily from Lemma~\ref{le:diagon}. On the other hand, Theorem~\ref{thm:entropy} concerns the metric upstairs $(\balance,\iota)$ as explained in Section~\ref{sec:embed}. It follows from a computation of the pullback metric $\mathfrak{z}^\sharp\iota$ on the parameter space $\R_+^d \times \od^{N+1}$.

\subsection{Equilibrium thermodynamics}
\label{subsec:entropy}
\begin{defn}
\label{def:entropy}  
Given $W \in (\Mdd,g^N)$ we define the Boltzmann entropy
    \begin{equation}
        \label{eq:ent2}
        S(W) = \log \mathrm{vol}(\orbitx).
    \end{equation}
At the inverse temperature $\beta \in (0,\infty)$ we define the {\em free energy\/}
\begin{equation}
    \label{eq:free-energy}
    F_\beta(W) = E(W) - \frac{1}{\beta} S(W).
\end{equation}
\end{defn}

The introduction of the entropy allows us to extend gradient descent of the energy $E(W)$ to gradient descent of the free energy  on $(\Mdd,g^N)$ 
\begin{equation}
    \label{eq:free-energy2}
    \dot{W} = -\mathrm{grad}_{g^N} F_\beta (W).
\end{equation}
The geometry of group orbits also allows us to introduce natural microscopic dynamics that capture the notion of `fluctuations in the gauge'. In Section~\ref{sec:RLE} we extend equation~\eqref{eq:free-energy2} to Riemannian Langevin Equations of the form
\begin{equation}
    \label{eq:free-energy2b}
    dW_t = -\mathrm{grad}_{g^N} F_\beta (W_t)\, dt + dB^{\beta,g^N}_t,
\end{equation}
where $B_t^{\beta,g^N}$ is Brownian motion at inverse temperature $\beta$ on $(\Mdd, g^N)$.

The inclusion of the entropy provides a selection principle when $E(W)$ is degenerate. We expect that a typical solution to equation~\eqref{eq:free-energy2} is attracted to the minimizing set
\begin{equation}
    \label{eq:free-energy3}
    \mathcal{S}_\beta= \mathrm{argmin}_{W\in \Mdd} F_\beta(W).
\end{equation}
At present, we do not understand the structure of this set completely even for the simple energy corresponding to matrix completion (see Section~\ref{sec:open}). In particular, it is natural to conjecture that $\mathcal{S}_\beta$ consists of a single point $A_\beta \in \Mdd$ and that $\lim_{\beta \to \infty} A_\beta$ is selected by the training dynamics due to noise introduced through round-off errors. However, we do not have rigorous results in this direction. 

\subsection{Riemannian submersion}
The metric $g^N$ was first introduced in~\cite[\S 3]{Bah2019} using an analysis of $\Mr$. The parametrization~\eqref{eq:b10}-\eqref{eq:b11} provides a simple geometric explanation for the origin of this metric.

\begin{theorem}[Menon, Yu~\cite{MY-dln}]
\label{thm:submer}
For each rank $r$, $1\leq r\leq d$, the metric $g^N$ on $\Mr$ is obtained from the map $\phi: \balance_r \to \Mr$ by Riemannian submersion. 
\end{theorem}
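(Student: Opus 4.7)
The plan is to verify that $\phi: (\balance_r,\iota) \to (\Mr,g^N)$ satisfies the defining property of a Riemannian submersion, where $\iota$ denotes the restriction of the Frobenius metric on $\Md^N$. Concretely, at each $\ww\in\balance_r$ with $W=\phi(\ww)$, I must show that the horizontal subspace
\[
\hh_\ww := \{\,V\in T_\ww\balance_r : \iota(V,V')=0\text{ for all } V'\in\ker d\phi_\ww\,\}
\]
maps isomorphically onto $T_W\Mr$ under $d\phi_\ww$, and that $\|\tilde Z\|_\iota^2 = g^N(W)(d\phi_\ww\tilde Z,\, d\phi_\ww\tilde Z)$ for every $\tilde Z\in\hh_\ww$.

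The central observation is that for any smooth $E:\Md\to\R$, the ambient Frobenius gradient $\nabla_\ww L$ of $L=E\circ\phi$ is automatically horizontal. Indeed, for any $V\in\ker d\phi_\ww$ the chain rule gives $\iota(\nabla_\ww L,V)=dL_\ww(V)=dE_W(d\phi_\ww V)=0$, so $\nabla_\ww L\perp\ker d\phi_\ww$. Theorem~\ref{thm:ACH} shows the flow of $-\nabla_\ww L$ preserves $\balance_0$, and the rank-preservation statement of Theorem~\ref{thm:BRTW2} refines this to preservation of $\balance_r$, whence $\nabla_\ww L\in T_\ww\balance_r$ and therefore $\nabla_\ww L\in\hh_\ww$. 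By Theorem~\ref{thm:ACH2}, $d\phi_\ww(\nabla_\ww L)=\mathcal{A}_{N,W}(E'(W))$. Letting $E$ range over the linear functionals $E_{lm}(W)=W_{lm}$ makes $E'(W)$ sweep all of $\Md$, and diagonalizing $\mathcal{A}_{N,W}$ in the singular-vector basis of $W$ shows its image is precisely $T_W\Mr$. This produces a horizontal lift for every $Z\in T_W\Mr$, and a dimension count then gives that $d\phi_\ww|_{\hh_\ww}$ is an isomorphism.

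For the isometry property, I use the formula $\nabla_{W_p}L=A_{p+1}^T E'(W)B_{p-1}^T$ from \eqref{eq:big-grad2} together with the balanced-manifold identities
\[
A_{p+1}A_{p+1}^T=(WW^T)^{(N-p)/N},\qquad B_{p-1}^TB_{p-1}=(W^TW)^{(p-1)/N},
\]
which follow directly from the parametrization $W_p=Q_p\Lambda Q_{p-1}^T$ in~\eqref{eq:b11} with $\Lambda=\Sigma^{1/N}$. Cyclicity of trace then yields
\[
\|\nabla_\ww L\|_\iota^2=\sum_{p=1}^N\Tr\!\bigl(E'(W)^T(WW^T)^{(N-p)/N}E'(W)(W^TW)^{(p-1)/N}\bigr)=\Tr\!\bigl(E'(W)^T\mathcal{A}_{N,W}(E'(W))\bigr).
\]
Setting $Z=d\phi_\ww(\nabla_\ww L)=\mathcal{A}_{N,W}(E'(W))$ and using $\Tr(A^TB)=\Tr(B^TA)$, the right-hand side equals $\Tr(Z^T\mathcal{A}_{N,W}^{-1}Z)=g^N(W)(Z,Z)$, which is the desired isometry.

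The main technical obstacle is the rank-deficient case $r<d$. There $\mathcal{A}_{N,W}$ is not invertible on all of $\Md$ because the fractional powers of $WW^T$ and $W^TW$ annihilate their kernels. Following~\cite{Bah2019}, the resolution is to restrict $\mathcal{A}_{N,W}$ to the orthogonal complement of these kernels; the parametrization $\mathfrak{z}_r$ of Section~\ref{subsec:rank-def} identifies this complement with $T_W\Mr$, on which $\mathcal{A}_{N,W}$ is an isomorphism. I would organize the write-up by first giving the argument in the full-rank case and then indicating how the Stiefel-based parametrization of $\balance_r$ lets the same computation go through verbatim, with zero singular values handled at the boundary of the parameter space.
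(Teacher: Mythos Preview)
Your argument is correct and takes a genuinely different route from the paper's proof. The paper works in coordinates: it pulls back the Frobenius metric under the SVD parametrization $\mathfrak{z}$, obtains the block-tridiagonal form of Lemma~\ref{le:tridiag}, diagonalizes each block via Chebyshev polynomials to produce the explicit orthonormal basis of Theorem~\ref{thm:embed}, and then checks that the basis vectors $\mathbf{l}^k,\mathbf{u}^{k,l,0},\mathbf{u}^{k,l,N}$ spanning $(\ker\phi_*)^\perp$ push forward to the eigenbasis of $\mathcal{A}_{N,W}$ from Lemma~\ref{le:diagon}. Your proof bypasses all of this by exploiting the fact that the ambient Euclidean gradients $\nabla_\ww L$ are automatically horizontal (by the chain rule) and tangent to $\balance_r$ (by invariance), and then computing their Frobenius norms directly from \eqref{eq:big-grad2} and the balanced identities for $A_{p+1}A_{p+1}^T$ and $B_{p-1}^TB_{p-1}$.

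The trade-off is this: your approach is shorter and more conceptual for the submersion statement alone, and it makes transparent \emph{why} $g^N$ is the right metric---it is exactly what falls out of the norm computation for horizontal lifts. The paper's approach is more laborious but yields the explicit orthonormal basis of Theorem~\ref{thm:embed} as a by-product, and that basis is indispensable for the entropy formula (Theorem~\ref{thm:entropy}) and the Brownian-motion construction (Theorem~\ref{thm:BM-dln}). So your argument proves Theorem~\ref{thm:submer} efficiently, but would not on its own give the volume of $\orbitx$ or the anisotropic Laplacian splitting needed later in the paper.
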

The main ideas in the proof of this theorem are as follows: 
\begin{enumerate}
    \item Each manifold $\balance_r$ is Riemannian with the metric $\iota$ induced by its embedding in $\Md^N$. We present the essential ideas in the computation of this metric in Theorem~\ref{thm:embed} in  Section~\ref{sec:embed}.
    \item The $O_d^{N-1}$ action discussed in Section~\ref{sec:parametrization} is an isometry of $(\balance,\iota)$. This idea is implicit in the entropy formula and is made explicit in Theorem~\ref{thm:embed}.
    \item The differential $\phi_*:T\balance_r\to T\Mr$, its kernel and orthogonal complement may be computed explicitly using Theorem~\ref{thm:embed}. We then find an orthonormal basis of $(\mathrm{Ker}\,\phi_*)^{\perp}$ and show that the projection of this basis to $T\Mr$ provides an orthonormal basis for $(T_W\Mr,g^N)$.
\end{enumerate}

\section{Proof of Theorem~\ref{thm:ACH}--Theorem~\ref{thm:BRTW2}}
\label{sec:proofs}
In this section, we provide the essential steps in the proofs of Theorem~\ref{thm:ACH}--Theorem~\ref{thm:BRTW2}. The proofs involve direct matrix calculations. We typically assume the Einstein convention and sum over repeated indices.

\subsection{Balanced varieties}
\subsubsection{Overparametrization and equation~\eqref{eq:big-grad2}}
\label{subsec:chain-rule}
Fix $p$, $1\leq p \leq N$. We must compute the matrix $\nabla_{W_p} E(W)$ with $W=\phi(\ww)$. By the chain rule, this is the matrix with $(j,k)$ entries
\begin{equation}
    \label{eq:p1}
    \frac{\partial E(W)}{\partial W_{p,jk}} = \frac{\partial E(W)}{\partial W_{lm}} \frac{\partial W_{lm}}{\partial W_{p,jk}} := E'(W)_{lm} \frac{\partial W_{lm}}{\partial W_{p,jk}}
\end{equation}
Clearly, the form of $E$ is not that important, what really matters is the application of the chain rule to the product $W=W_NW_{N_1}\cdots W_1$.  
Let us write this as
\begin{equation}
    \label{eq:p2}
    W_{lm} = W_{N,li_{N-1}} W_{N-1,i_{N-1} i_{N-2}} \cdots W_{p,i_p i_{p-1}} \cdots W_{1,i_1 m}.
\end{equation}
Therefore,
\begin{equation}
    \label{eq:p3}
    \frac{\partial W_{lm}}{\partial W_{p,jk}} = W_{N,li_{N-1}} W_{N-1,i_{N-1} i_{N-2}} \cdots \delta_{i_p j}\delta_{i_{p-1} k} \cdots W_{1,i_1 m},
\end{equation}
which may be rewritten as
\begin{equation}
    \label{eq:p3b}
    \frac{\partial W_{lm}}{\partial W_{p,jk}} =  (W_{N}\cdots W_{p+1})_{lj} (W_{p-1}\cdots W_1)_{km}.
\end{equation}
It then follows from equation~\eqref{eq:p1} and~\eqref{eq:p3b} that 
\begin{equation}
\label{eq:p4}
\dot{W}_p = - \nabla_{W_p} E(W) = -(W_N \cdots W_{p+1})^T E'(W) (W_{p-1} \cdots W_{1})^T.    
\end{equation} 
\subsubsection{Proof of Theorem~\ref{thm:ACH}}
By the product rule we have
\begin{equation}
    \label{eq:p5}
    \frac{d}{dt} W_{p}W_{p}^T = \dot{W}_{p} W_{p}^T + W_{p} \dot{W}_{p}^T. 
\end{equation}
We now find from equation~\eqref{eq:p4} that
\begin{equation}
    \label{eq:p6}
    \dot{W}_{p} W_{p}^T = -(W_N \cdots W_{p+1})^T E'(W) (W_p W_{p-1} \cdots W_{1})^T,
\end{equation}
where we have observed that
\[ (W_{p-1} \cdots W_1)^T W_p^T = (W_p W_{p-1} \cdots W_{1})^T.\]
In a similar manner, using equation~\eqref{eq:p4} with $p$ replaced by $p+1$ we also have
\begin{equation}
    \label{eq:p6b}
    W_{p+1}^T \dot{W}_{p+1} = -(W_N \cdots W_{p+1})^T E'(W) (W_p W_{p-1} \cdots W_{1})^T.
\end{equation}
Part (a) of Theorem~\ref{thm:ACH} now follows immediately from the identity.
\begin{equation}
\label{eq:p8a}
\dot{W}_{p} W_{p}^T = W_{p+1}^T \dot{W}_{p+1}. 
\end{equation}

Part (b) of Theorem~\ref{thm:ACH} is similar. The factors $A_p$ and $B_p$ defined in equation~\eqref{eq:closed0b} appear naturally in the calculation. We use the chain rule to see that
\begin{equation}
\label{eq:p8}
\dot{W} = \dot{W}_N W_{N-1}\cdots W_1 + W_N \dot{W}_{N-1} W_{N-2}\cdots W_1 + W_N \cdots W_2 \dot{W}_1.
\end{equation}
We then apply equation~\eqref{eq:p4} to each term, observing that in each case the product above has factors that complement the products $(W_N \cdots W_{p+1})^T$ and $(W_{p-1} \cdots W_{1})^T$. For example,
the first term simplifies to
\[ \dot{W}_N W_{N-1}\cdots W_1 = -E'(W) (W_{N-1}\cdots W_1)^T W_{N-1}\cdots W_1 = -E'(W)B_{N-1}^TB_{N-1}.\]
The general term is given by
\[ W_N\cdots\dot{W}_p\cdots W_1  = -A_{p+1}A_{p+1}^T E'(W) B_{p-1}^T B_{p-1}. \]
We sum over the depth index $p$ from $1$ to $N$ to obtain equation~\eqref{eq:closed0}.

\subsubsection{Proof of Theorem~\ref{thm:ACH2}}
Theorem~\ref{thm:ACH2} is a specialization of Theorem~\ref{thm:ACH}(b) to the balanced variety. Since 
\[ W_{p+1}^T W_{p+1} = W_p W_p^T\]
on the balanced variety we may simplify the prefactors $A_{p+1}A_{p+1}^T$ and $B_{p-1}B_{p-1}^T$ appearing in equation~\eqref{eq:closed0}. It follows immediately from equation~\eqref{eq:closed0} and the parametrization~\eqref{eq:b11} that
\begin{equation}
    \label{eq:p9}
    A_p = W_N \cdots W_p = Q_N \Sigma^{\frac{N-p}{n}} Q_{p-1}^T, \quad B_p = W_p \cdots W_1 = Q_p  \Sigma^{\frac{p}{n}} Q_{0}^T.
\end{equation}
We then have
\begin{equation}
    \nonumber
    \label{eq:p11}
    A_pA_p^T = Q_N \Sigma^{\frac{2(N-p)}{n}} Q_N^T = (WW^T)^{\frac{N-p}{n}}, \quad B_p^T B_p = Q_0 \Sigma^{\frac{2p}{n}} Q_{0}^T = (W^TW)^{\frac{p}{n}}.
\end{equation}
We substitute in equation~\eqref{eq:closed0} to complete the proof.

\subsection{The Riemannian manifold $(\balance,g^N)$ and Theorem~\ref{thm:BRTW2}}
\label{subsec:metric}

We hold $N$ and $W$ fixed in this subsection. To simplify notation, we refer to $\mathcal{A}_{N,W}$ as $\aaa$. 

The main observation in this section is that $\aaa$ is diagonalized in singular value coordinates. Precisely, let $\{u_1,\cdots, u_d\}$ and $\{v_1,\cdots, v_d\}$ be the column vectors of $Q_N$ and $Q_0$ respectively. Then we may write the SVD~\eqref{eq:b12} as
\begin{equation}
    \label{eq:diagon1}
    W = Q_N \Sigma Q_0^T = \sum_{j=1}^d \sigma_j u_j v_j^T. 
\end{equation} 
\begin{lemma}
    \label{le:diagon}
    The operator $\aaa: T_W \Mdd \to T_W \Mdd$ is symmetric and positive definite with respect to the Frobenius inner-product. It has the spectral decomposition
    \begin{equation}
        \label{eq:diagon2}
        \aaa \, u_k v_l^T = \frac{\sigma_k^2-\sigma_l^2}{\sigma_k^{\frac{2}{N}}- \sigma_l^{\frac{2}{N}}} \, u_k v_l^T , \quad 1\leq k, l \leq d,
    \end{equation}
    when $k \neq l$ and 
\begin{equation}
        \label{eq:diagon2b}
        \aaa \, u_k v_k^T = N \sigma_k^{2-\tfrac{2}{N}}\, u_k v_k^T , \quad 1\leq k \leq d.
    \end{equation}
    \end{lemma}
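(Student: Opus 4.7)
The plan is to diagonalize $\aaa$ directly using the singular value decomposition of $W$, exploiting the fact that both $WW^T$ and $W^TW$ share the spectrum $\Sigma^2$ but with left eigenbasis $\{u_i\}$ and right eigenbasis $\{v_j\}$ respectively. First, I would record the identities
\begin{equation*}
(WW^T)^{\frac{N-k}{N}} = Q_N \Sigma^{\frac{2(N-k)}{N}} Q_N^T, \qquad (W^TW)^{\frac{k-1}{N}} = Q_0 \Sigma^{\frac{2(k-1)}{N}} Q_0^T,
\end{equation*}
which are immediate from the SVD $W=Q_N\Sigma Q_0^T$ and the spectral theorem for positive semidefinite matrices.

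Next, I would compute the action of $\aaa$ on the candidate eigenvector $u_k v_l^T = Q_N e_k e_l^T Q_0^T$. Because $Q_N^T u_k = e_k$ and $Q_0^T v_l = e_l$, the diagonal factor $\Sigma^{2(N-j)/N}$ acts on the left by the scalar $\sigma_k^{2(N-j)/N}$ and $\Sigma^{2(j-1)/N}$ acts on the right by $\sigma_l^{2(j-1)/N}$. Summing over $j=1,\ldots,N$ reduces the calculation to the geometric sum
\begin{equation*}
\sum_{j=1}^N \sigma_k^{\frac{2(N-j)}{N}} \sigma_l^{\frac{2(j-1)}{N}} = \sum_{m=0}^{N-1} a^{N-1-m} b^m, \qquad a:=\sigma_k^{2/N},\; b:=\sigma_l^{2/N}.
\end{equation*}
When $a\neq b$ this equals $(a^N-b^N)/(a-b)$, yielding the eigenvalue in \eqref{eq:diagon2}; when $a=b$ (the case $\sigma_k=\sigma_l$, in particular $k=l$) it equals $N a^{N-1}$, yielding \eqref{eq:diagon2b}. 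This also shows that the rank-one matrices $\{u_i v_j^T\}_{1\le i,j\le d}$ are eigenvectors of $\aaa$.

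To finish the proof, I would verify that $\{u_i v_j^T\}$ is an orthonormal basis of $\Md$ with respect to the Frobenius inner product, a one-line trace computation using $u_i^T u_k=\delta_{ik}$ and $v_j^T v_l = \delta_{jl}$. Since $\aaa$ is diagonalized in this orthonormal basis, it is automatically symmetric. Positivity of the eigenvalues is the final check: the map $x\mapsto x^N$ is strictly increasing on $[0,\infty)$, so the numerator $\sigma_k^2-\sigma_l^2$ and denominator $\sigma_k^{2/N}-\sigma_l^{2/N}$ in \eqref{eq:diagon2} always carry the same sign, while $N\sigma_k^{2-2/N}>0$ when $\sigma_k>0$ (which is the full-rank hypothesis). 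The one genuinely delicate point is keeping track of the removable singularity at $a=b$: I would handle it by treating \eqref{eq:diagon2b} as the limit $\lim_{b\to a}(a^N-b^N)/(a-b)=Na^{N-1}$, so that both formulas arise from a single geometric-sum identity rather than a case split. Everything else is a bookkeeping exercise.
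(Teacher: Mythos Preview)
Your proposal is correct and follows essentially the same approach as the paper: both diagonalize $\aaa$ by acting on the rank-one matrices $u_kv_l^T$ arising from the SVD, reduce the eigenvalue to the geometric sum $\sum_{p=1}^N \sigma_k^{2(N-p)/N}\sigma_l^{2(p-1)/N}$, and infer symmetry and positive definiteness from the resulting orthonormal eigendecomposition. Your version is slightly more explicit about summing the geometric series and checking positivity of the eigenvalues, but these are elaborations rather than a different route.
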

    \begin{proof}
We have 
\[ WW^T = \sum_{i=1}^d \sigma_i^2 u_i u_i^T, \quad W^TW = \sum_{j=1}^d \sigma_j^2 v_j v_j^T.\]
We then find from equation~\eqref{eq:metric2} that for each pair $k$,$l$ 
\[ \aaa \, u_k v_l^T = \sum_{p=1}^N (WW^T)^{\frac{N-p}{N}} \, u_k v_l^T \,(W^TW)^{\frac{p-1}{N}} = \left( \sum_{p=1}^N \sigma_k^{\frac{2(N-p)}{N}} \sigma_l^{\frac{2(p-1)}{N}}\right) \, u_k v_l^T, \]
yielding~\eqref{eq:diagon2} and~\eqref{eq:diagon2b}.

Observe that distinct eigenvectors are orthogonal with respect to the Frobenius metric. Indeed, for each pair of indices $\{k,l\}$ and $\{m,n\}$ 
    \[ \Tr ((u_k v_l^T)^T u_m v_n) = \delta_{km}\delta_{ln},\]
which vanishes unless the pairs agree. It then follows from the eigendecomposition that $\aaa$ is a symmetric and positive definite operator from $T_W \Mdd \to T_W\Mdd$.
\end{proof}
We may represent an arbitrary matrix $Z \in T_W\Mdd$ in this basis as a sum $Z= Z_{kl} u_k v_l^T$. Then we use equation~\eqref{eq:metric3}  and Lemma~\ref{le:diagon} to obtain  
\begin{equation}
    \label{eq:p-metric1}
       g^N(Z,Z) = N \sum_{1 \leq k \leq d} \sigma_k^{2(1-\tfrac{1}{N})} Z_{kk}^2 +     
       \sum_{1\leq k, l \leq d, k \neq l} 
       \frac{\sigma_k^{\frac{2}{N}}- \sigma_l^{\frac{2}{N}}}{\sigma_k^2-\sigma_l^2}
       Z_{kl}^2.
\end{equation}
Theorem~\ref{thm:BRTW2} follows directly from the explicit diagonalization of the metric. 
\section{Embedding and the metric on $\balance$}
\label{sec:embed}
The balanced manifold $\balance$ is a Riemannian manifold since it is locally embedded in $\Md^N$ and inherits the Frobenius metric, denoted $\Fr$, from $\Md^N$. In order to use our parametrization $\mathfrak{z}$ to understand the Riemannian manifold $(\balance,\Fr)$, we must pull back the metric $\Fr$ onto the parameter space $\R_+^d \times \od^{N-1}$. In this section, we compute an orthonormal basis for $(T_\ww \balance,\Fr)$ by computing an orthonormal basis for the pullback metric $\mathfrak{z}^\sharp \Fr$. This is the technical core of~\cite{MY-dln}.

\subsection{The tangent space $T_{\ww}\balance$.}
First, at each point $\mathbf{W}\in \balance \subset \Md^N$, we compute the tangent space $T_{\ww}\balance$ by differentiating the parametrization~\eqref{eq:b11} as follows. The tangent space at the identity to the orthogonal group $\od$ is the space of antisymmetric matrices, denoted $\anti_d$. Thus, if the singular values $\Sigma$ of $W$ are distinct, the parametrization is smooth and for each $(\theta,\mathbf{A}) \in \R^d \times \anti_d^{N+1}$ we may define a smooth curve in $\ww(t) \in \balance$ using 
\[ \Lambda(t) = \Lambda+ t \theta, \quad Q_p(t) = e^{tA_{p}} Q_p,\quad \mathbf{W}(t)=\mathfrak{z}(\Lambda(t),\mathbf{Q}(t)),\]
where $\theta$ is the diagonal matrix $\diag(\theta_1, \ldots, \theta_d)$ and $\Lambda(t)=\Sigma(t)^{1/N}$.

Then we obtain a tangent vector in $T_\ww\balance$ by differentiating in $t$~\footnote{Note that the term `vector' here is used to mean `vector in the sense of vector space'. Each vector in $T_\ww \balance$ `sits in' $\Md^N$.}
\begin{equation}
    \label{eq:embed1}
 \left.\frac{d \ww(t)}{dt}\right|_{t=0} = D\mathfrak{z}(\ww)\left(\theta, \mathbf{A}\right).\end{equation}
Explicitly,  the $p$-th matrix in $D\mathfrak{z}(\ww)\left(\theta, \mathbf{A}\right)$ is 
\begin{equation}
\label{eq:basis1}
 D\mathfrak{z}(\ww)\left(\theta, \mathbf{A}\right)_p = A_p W_p + Q_p\theta Q_{p-1}^T - W_pA_{p-1}, \quad 1\leq p \leq N.
 \end{equation}

\subsection{Computing $\mathfrak{z}^\sharp \Fr$ in the standard basis.}
Since the metric on $\balance$ is inherited from its embedding in $\Md^N$, the length of each vector 
\begin{equation}
    \label{eq:embed2} \mathbf{V}:=D\mathfrak{z}(\ww)\left(\theta, \mathbf{A}\right) \in T_{\mathbf{W}}\balance
\end{equation}
is given by the Frobenius norm
\begin{equation}
    \label{eq:metric7} \|\mathbf{V}\|^2 = \sum_{k=1}^N \Tr(V_p^T V_p), \quad \mathbf{V}= (V_N, \ldots, V_1).
\end{equation}
Similarly, the inner product between two vectors $\mathbf{V}^{(i)} \in T_{\mathbf{W}}\balance$, $i=1,2$ is given by 
\begin{equation}
    \label{eq:metric7b} \langle \mathbf{V}^{(1)},
    \mathbf{V}^{(2)}\rangle = 
    \sum_{k=1}^N \Tr\left((V^{(1)}_p)^T,V^{(2)}_p\right).
\end{equation}

By linearity, we may reduce the computation of the pullback metric to  inner products between the action of $D\mathfrak{z}$ on the basis vectors on $\R^d \times \anti_d^{N+1}$. We arrange these basis vectors as follows.
First, let $\{e_i\}_{i=1}^d$ be the standard basis on $\R^d$ and define the image of diagonal matrices
\begin{equation}
    \label{eq:new-basis1}
    \mathbf{l}_i = D\mathfrak{z}(\ww)\left(e_i, \mathbf{0}\right), \quad i =1, \ldots, d.
\end{equation}
Next, form the standard basis for $\anti_d$
\begin{equation}
\label{eq:basis4}
\alpha_{ij} = \frac{1}{\sqrt{2}}(e_i e_j^T - e_j e_i^T), \quad 1 \leq i < j \leq d,
\end{equation}
and for each $0\leq p \leq N$ use it to define a tangent vector in $T_\mathbf{W}\balance$ by setting
\begin{equation}
    \label{eq:new-basis1a}
\mathbf{A}^p_{kl} = (\ldots, 0, \alpha^{p}_{kl}, 0, \ldots), \quad
    \mathbf{a}^p_{kl} = D\mathfrak{z}(\ww)\left(0, \mathbf{A}^p_{kl}\right), \quad 1 \leq k < l \leq d.
\end{equation}
Here $\alpha^p_{kl}=\alpha_{kl}$ and we have used the superscript $p$ to index depth $p$.

We may now express the pullback metric $\mathfrak{z}^\sharp \Fr$ in the standard basis on $\R^d \times \anti_d^{N+1}$ by computing the 
Frobenius inner product between the $d$ tangent vectors $\mathbf{l}_i$ and the $(N+1)d(d-1)/2$ tangent vectors $\mathbf{a}^p_{kl}$.
Let $I_d$ denote the $d\times d$ identity matrix. We then have
\begin{lemma}
    \label{le:tridiag}
    The standard basis on $\R^d\times \anti_d^{N+1}$ may be ordered such that the pullback metric $\mathfrak{z}^\sharp \Fr$ has the block diagonal structure
\begin{equation}
    \label{eq:pb1}
    h = \left( \begin{array}{cccc} 
    NI_d &   &  &\\
         & h_{a}^{1,2} &  &\\
             &  &\ddots     &  \\
         & & & h_a^{d,d-1}
    \end{array} \right),
\end{equation}
where $h_a^{k,l}$ is the $(N+1)\times (N+1)$ symmetric tridiagonal matrix
\begin{equation}
    \label{eq:pb2}
    h_a^{k,l} = \left( \begin{array}{cccccc} 
    \tfrac{1}{2}(\lambda_k^2 +\lambda_l^2) &  -\lambda_k \lambda_l &  & &\\
      -\lambda_k \lambda_l   & \lambda_k^2 +\lambda_l^2 & -\lambda_k \lambda_l & &\\
             &  -\lambda_k \lambda_l  & \lambda_k^2 +\lambda_l^2  & -\lambda_k \lambda_l  \\  
             &    & \ddots & \ddots  &   \\ \\  &    & -\lambda_k \lambda_l & \lambda_k^2 +\lambda_l^2 &  -\lambda_k \lambda_l \\
         & &  & -\lambda_k \lambda_l & \tfrac{1}{2}(\lambda_k^2 +\lambda_l^2)
    \end{array} \right).
\end{equation}
\end{lemma}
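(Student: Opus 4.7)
The plan is to compute all pairwise Frobenius inner products among the candidate basis vectors and verify they assemble into the block matrix \eqref{eq:pb1}--\eqref{eq:pb2}. I first expand the formula \eqref{eq:basis1}: setting $A_q = 0$ for all $q \neq p$ in $\mathbf{A}^p_{kl}$ leaves exactly two nonzero components, $(\mathbf{a}^p_{kl})_p = \alpha_{kl} W_p$ (when $p \geq 1$) and $(\mathbf{a}^p_{kl})_{p+1} = -W_{p+1}\alpha_{kl}$ (when $p \leq N-1$), while $(\mathbf{l}_i)_q = Q_q e_i e_i^T Q_{q-1}^T$ is supported at every depth $q$. The two-slot support of $\mathbf{a}^p_{kl}$ is the geometric origin of the tridiagonal pattern: $\langle \mathbf{a}^p_{kl}, \mathbf{a}^{p'}_{k'l'}\rangle$ vanishes automatically for $|p-p'|\geq 2$.

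Next I invoke two algebraic inputs that force every surviving trace to reduce to an elementary expression in the singular values $\lambda_i$. From the parametrization \eqref{eq:b11} we have the balanced identities $W_p W_p^T = Q_p \Lambda^2 Q_p^T$ and $W_p^T W_p = Q_{p-1} \Lambda^2 Q_{p-1}^T$, which let the cyclic property of the trace absorb the orthogonal factors. A short computation from the definition of $\alpha_{kl}$ yields $\alpha_{kl}^T\alpha_{kl} = \tfrac{1}{2}(e_ke_k^T + e_le_l^T)$ and the \emph{sandwich} identity $\alpha_{kl}\Lambda\alpha_{kl} = -\tfrac{1}{2}(\lambda_l\, e_ke_k^T + \lambda_k\, e_le_l^T)$. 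The first identity contributes $\tfrac{1}{2}(\lambda_k^2+\lambda_l^2)$ per active slot of $\mathbf{a}^p_{kl}$, producing the halved endpoint entries and the doubled interior entries on the diagonal of $h_a^{k,l}$. The second gives the off-diagonal coupling $\Tr(\alpha_{kl}\Lambda\alpha_{kl}\Lambda) = -\lambda_k\lambda_l$ between depths $p$ and $p+1$. The $N I_d$ block for the $\mathbf{l}$-sector follows immediately from $\Tr(e_ie_i^T\,e_je_j^T)=\delta_{ij}$ summed over the $N$ depths.

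It remains to decouple the different blocks. The mixed pairings $\langle \mathbf{l}_i,\mathbf{a}^p_{kl}\rangle$ reduce to traces of the form $\Tr(\Lambda\,\alpha_{kl}\,e_ie_i^T)$, which vanish because $\alpha_{kl}$ has no diagonal entries. For $\langle \mathbf{a}^p_{kl},\mathbf{a}^{p'}_{k'l'}\rangle$ with $\{k,l\}\neq\{k',l'\}$ I split into two subcases: when the pairs are disjoint the relevant product $\alpha_{kl}\alpha_{k'l'}$ already vanishes at the matrix-unit level; when they share exactly one index, the potentially nonzero product $\alpha_{kl}\Lambda\alpha_{k'l'}\Lambda$ is supported on a single matrix unit $e_ae_b^T$ with $a\neq b$, so its trace is still zero.

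The step I expect to be the main obstacle is precisely this ``shared index'' case, since tracking which indices survive through two insertions of $\alpha$ and two of $\Lambda$ requires a small but delicate case analysis to confirm that the surviving matrix unit is always strictly off-diagonal. Once this is in place, the ``ordering'' referred to in the statement simply groups $\mathbf{a}^0_{kl},\ldots,\mathbf{a}^N_{kl}$ together for each fixed pair $(k,l)$ and places the $\{\mathbf{l}_i\}_{i=1}^d$ sector first, so the matrix of pairwise inner products takes exactly the block-diagonal form \eqref{eq:pb1}--\eqref{eq:pb2}.
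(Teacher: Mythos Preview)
Your structural observations are sound: the two-slot support of $\mathbf{a}^p_{kl}$ forces the tridiagonal pattern, the $NI_d$ block follows as you say, and the mixed pairings $\langle\mathbf{l}_i,\mathbf{a}^p_{kl}\rangle$ do vanish (though for a slightly different reason than you state---the trace actually reduces to $\Tr\bigl(\Lambda\,(Q_p^T\alpha_{kl}Q_p)\,e_ie_i^T\bigr)$, and it is the antisymmetry of the conjugate $Q_p^T\alpha_{kl}Q_p$ that kills the diagonal entry). But the central claim that ``the cyclic property of the trace absorbs the orthogonal factors'' is false, and this is where the argument breaks.

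Take the simplest diagonal entry: $\|(\mathbf{a}^p_{kl})_p\|^2 = \Tr\bigl(\alpha_{kl}^T\alpha_{kl}\,W_pW_p^T\bigr) = \Tr\bigl(\alpha_{kl}^T\alpha_{kl}\,Q_p\Lambda^2 Q_p^T\bigr)$. Cycling gives $\tfrac12\Tr\bigl(Q_p^T(e_ke_k^T+e_le_l^T)Q_p\,\Lambda^2\bigr) = \tfrac12\sum_j \lambda_j^2\bigl((Q_p)_{kj}^2 + (Q_p)_{lj}^2\bigr)$, which depends on $Q_p$ and is \emph{not} $\tfrac12(\lambda_k^2+\lambda_l^2)$ in general. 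The same obstruction contaminates your off-diagonal entries and your block-decoupling argument for distinct pairs $\{k,l\}\neq\{k',l'\}$.

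The missing ingredient is the conjugation the paper invokes in \eqref{eq:basis1b}: the map $(V_1,\ldots,V_N)\mapsto (Q_1^TV_1Q_0,\ldots,Q_N^TV_NQ_{N-1})$ is a Frobenius isometry of $\Md^N$, and under it the $p$-th component of $D\mathfrak{z}(\theta,\mathbf{A})$ becomes $B_p\Lambda + \theta - \Lambda B_{p-1}$ with $B_p := Q_p^TA_pQ_p$. It is in the $B$-variables---where no $Q$'s appear---that your sandwich identities and the paper's identity \eqref{eq:basis5} apply cleanly to produce the entries of $h_a^{k,l}$. Since $A_p\mapsto B_p$ is an orthogonal transformation of each $\anti_d$ factor, the resulting matrix for $\mathfrak{z}^\sharp\Fr$ is orthogonally equivalent to the one in the literal standard basis; the block-tridiagonal form of the lemma is its representation in these conjugated coordinates, and your computation must pass through that change of variables.
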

There are $d(d-1)/2$ blocks, each indexed by a basis matrix $\alpha_{kl} \in \anti_d$. This ordering is less intuitive than arranging the metric by depth into $N+1$ blocks, each of size $d(d-1)/2\times d(d-1)/2$.  However, this structure appears naturally in the computation of inner products. It reflects a subtle non-local coupling along the depth of the network that is due to balancedness.

\begin{proof}[Sketch of the proof]
First, rewrite equation~\eqref{eq:basis1} as follows
\begin{equation}
\label{eq:basis1b}
 Q_p^T D\mathfrak{z}(\ww)\left(\theta, \mathbf{A}\right)_p Q_{p-1} = Q_p^TA_p Q_p \Lambda  + \theta -   \Lambda Q_{p-1}^T A_{p-1} Q_{p-1}. 
 \end{equation}
Now observe that the linear transformation of the $p$-th tangent space $\anti_d$ defined by $A_p \mapsto Q_p^T A_p Q_p:=B_p$ is an isometry for the Frobenius norm.
We see immediately that when $\mathbf{A}=\mathbf{0}$ the image 
\begin{equation}
\label{eq:basis4b}
 Q_p^T D\mathfrak{z}(\ww)\left(\theta, \mathbf{0}\right)_p Q_{p-1} =   \theta, 
 \end{equation}
is a collection of $N$ diagonal matrices. On the other hand, when $\theta=0$, 
\begin{equation}
\label{eq:basis4c}
 Q_p^T D\mathfrak{z}(\ww)\left(0, \mathbf{A}\right)_p Q_{p-1} =  B_p \Lambda  -   \Lambda B_{p-1}
 \end{equation}
always vanishes on the diagonal. Thus, the inner-products
\begin{equation}
\label{eq:basis4d}
 \langle  D\mathfrak{z}(\ww)\left(\mathbf{\theta}, \mathbf{0}\right),  D\mathfrak{z}(\ww)\left(0, \mathbf{A}\right) \rangle =0.
\end{equation}
This gives rise to the block $NI_d$ in the upper left corner. 

The only non-zero inner products concern $\mathbf{V}_i= D\mathfrak{z}(0,\mathbf{A}_i)$, $i=1,2$ where
\begin{equation}
    \label{eq:basis6}
\mathbf{A}_1 = (\ldots, 0, \alpha^{p+1}, 0, \ldots), \quad  \mathbf{A}_2 = (\ldots, 0, \alpha^{p}, 0, \ldots),
\end{equation}
have overlapping indices $p+1$ and $p$. In this case, we find that the only non-zero inner product corresponds to $\alpha^{p+1}=\alpha^p$. The main step involves the identity
\begin{equation}
\label{eq:basis5}
    \Tr (\alpha_{ij}\Lambda \alpha_{kl}\Lambda) = \lambda_i \lambda_j \left(\delta_{il}\delta_{jk}-\delta_{ik}\delta_{jl}\right).
\end{equation}

\end{proof}

\subsection{An orthonormal basis for $\mathfrak{z}^\sharp \Fr$.}
\label{subsec:basis}
We may compute an orthonormal basis for $T_{\ww}\balance$ by diagonalizing the block tridiagonal matrix of Lemma~\ref{le:tridiag}. The diagonalization procedure uses the theory of Jacobi matrices, in particular the fact that the eigenbasis of each block $h_a^{k,l}$ may be computed using Chebyshev polynomials. 

The results of this approach may be summarized as follows. Assume that $\ww = \mathfrak{z}(\Lambda,Q_N,\ldots, Q_0)$ and let us denote the columns of each $Q_p$ by
\begin{equation}
    \label{eq:cheb1}
    Q_p = \left( \begin{array}{cccc}
    \uparrow & \uparrow  & \cdots & \uparrow \\
          q_{p,1} & q_{p,2} & \cdots & q_{p,d} \\
     \downarrow & \downarrow  & \cdots & \downarrow     \end{array}\right).
\end{equation}
We will define a collection of vectors denoted by
\begin{eqnarray}
    \label{eq:vb1}
    \mathbf{l}^{k} &=& (l_N^{k},\cdots, l_{1}^{k}), \quad 1\leq k \leq d;\\
    \label{eq:vb2}
    \mathbf{u}^{k,l,p} &=& (u_N^{k,l,p},\cdots, u_{1}^{k,l,p}), \quad 1\leq k < l\leq d, \quad 0 \leq p \leq N. 
\end{eqnarray}
There are $d$ vectors of type $\mathbf{l}$ and $(N+1)d(d-1)/2$ vectors of type $\mathbf{u}$. The coordinates of these vectors are as follows. First, for the $\mathbf{l}$ vectors 
\begin{equation}
\label{eq:vb3}
    l_s^{k} = \frac{1}{\sqrt{N}} q_{s,k} q_{s-1,k}^T, \quad 1\leq s \leq N.  
\end{equation}

We next consider the $\mathbf{u}$ vectors for $1\leq p \leq N-1$. This range for $p$ corresponds to  the overparametrization freedom $\od^{N-1}$. We define
\begin{equation}
\label{eq:vb4}
    u_s^{k,l,p}  =  a^{k,l,p,s} q_{s,k} q_{s-1,l}^T + a^{l,k,p,s} q_{s,l} q_{s-1,k}^T ,   \quad 1\leq s \leq N. 
\end{equation}
Here we have denoted for brevity
\begin{equation}
\label{eq:vb5}
    a^{k,l,p,s}  = \sqrt{\frac{1}{N\left(\lambda_k^2+\lambda_l^2-2\lambda_k\lambda_l \cos 
    \tfrac{p\pi}{N}\right)}} \left(\lambda_k \sin \frac{(s-1)p\pi}{N}- \lambda_l \sin \frac{sp\pi}{N}\right).  
\end{equation}
Finally, we consider the action of the matrices $Q_0$ and $Q_N$ that generate the SVD of $W$. First, for $p=0$ define
\begin{equation}
\label{eq:vb6}
    u_s^{k,l,0}  = \sqrt{\frac{\lambda_k^2-\lambda_l^2}{\lambda_k^{2N}-\lambda_l^{2N}}} \lambda_k^{s-1}\lambda_l^{N-s} q_{s,l} q_{s-1,k}^T.   
\end{equation}
Similarly, define for $p=N$
\begin{equation}
\label{eq:vb7}
    u_s^{k,l,N}  = \sqrt{\frac{\lambda_k^2-\lambda_l^2}{\lambda_k^{2N}-\lambda_l^{2N}}} \lambda_k^{N-s}\lambda_l^{s-1} q_{s,k} q_{s-1,l}^T.   
\end{equation}
We then have 
\begin{theorem}[Menon,Yu~\cite{MY-dln}]
    \label{thm:embed} The vectors $(\mathbf{l},\mathbf{u})$ defined in equations~\eqref{eq:vb3}--\eqref{eq:vb7} form an orthonormal basis for $(T_\ww\balance,\Fr)$.
\end{theorem}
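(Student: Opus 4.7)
The plan is to diagonalize the pullback metric $\mathfrak{z}^\sharp\Fr$ block by block using the structure furnished by Lemma~\ref{le:tridiag}, then lift each orthonormal basis vector to $T_\ww\balance$ via $D\mathfrak{z}$. A dimension count against~\eqref{eq:b3b} gives $d+(N+1)d(d-1)/2=\dim\balance$, so provided I exhibit this many tangent vectors and verify orthonormality, they automatically span. The $NI_d$ block is immediate: $\{e_k/\sqrt{N}\}_{k=1}^d$ is orthonormal for this form, and $D\mathfrak{z}(e_k,\mathbf{0})_s = q_{s,k}q_{s-1,k}^T$ reproduces~\eqref{eq:vb3} after normalization by $1/\sqrt N$.

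For each block $h_a^{k,l}$, the key computation is to derive the explicit identity
\begin{equation*}
D\mathfrak{z}\Bigl(0,\sum_{s}c_s\alpha_{kl}^s\Bigr)_p = \frac{1}{\sqrt 2}\Bigl[(\lambda_l c_p-\lambda_k c_{p-1})\,q_{p,k}q_{p-1,l}^T - (\lambda_k c_p-\lambda_l c_{p-1})\,q_{p,l}q_{p-1,k}^T\Bigr],
\end{equation*}
obtained by applying the conjugation $B_p=Q_p^T A_p Q_p$ from the proof of Lemma~\ref{le:tridiag} to~\eqref{eq:basis1} and using $\alpha_{kl}\Lambda=\tfrac{1}{\sqrt 2}(\lambda_l e_ke_l^T-\lambda_k e_le_k^T)$, $\Lambda\alpha_{kl}=\tfrac{1}{\sqrt 2}(\lambda_k e_ke_l^T-\lambda_l e_le_k^T)$. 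Summing squared Frobenius norms in $p$ yields the sum-of-squares representation
\begin{equation*}
\mathbf{c}^T h_a^{k,l}\mathbf{c} = \frac{1}{2}\sum_{s=1}^N\Bigl[(\lambda_l c_s-\lambda_k c_{s-1})^2+(\lambda_k c_s-\lambda_l c_{s-1})^2\Bigr],
\end{equation*}
which factorizes $h_a^{k,l}=L^TL$ with an explicit $L:\R^{N+1}\to\R^{2N}$. Producing an $h_a^{k,l}$-orthonormal basis of $\R^{N+1}$ is then equivalent to producing coefficient vectors $\{c^{(p)}\}_{p=0}^N$ whose $L$-images are Euclidean-orthonormal in $\R^{2N}$.

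I would then exhibit three families that exploit this factorization. The two \emph{boundary} modes are forced by annihilation: $c_s^{(0)}\propto(\lambda_k/\lambda_l)^s$ kills the first squared term for every $s$, and $c_s^{(N)}\propto(\lambda_l/\lambda_k)^s$ kills the second; their $L$-images lie in complementary halves of $\R^{2N}$, so mutual orthogonality is automatic, and a geometric-series normalization recovers the prefactor $\sqrt{(\lambda_k^2-\lambda_l^2)/(\lambda_k^{2N}-\lambda_l^{2N})}$ in~\eqref{eq:vb6} and~\eqref{eq:vb7}. The $N-1$ \emph{interior} modes $c^{(p)}_s=\sqrt{2/(N\mu_p)}\sin(sp\pi/N)$ with $\mu_p=\lambda_k^2+\lambda_l^2-2\lambda_k\lambda_l\cos(p\pi/N)$ are suggested by the Chebyshev structure of the three-term interior recurrence. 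Their normalization uses $\sum_{s=1}^{N-1}\sin^2(sp\pi/N)=N/2$ together with $\sum_{s=1}^N\sin(sp\pi/N)\sin((s-1)p\pi/N)=(N/2)\cos(p\pi/N)$; mutual orthogonality within the interior family follows from discrete sine orthogonality plus a sum-to-product collapse at $\sin((p\pm p')\pi)=0$; and orthogonality between a boundary and an interior mode reduces to the telescoping identity $\sum_{s=1}^N\bigl(\lambda_k^{s-1}\lambda_l^{N-s+1}\sin((s-1)p\pi/N)-\lambda_k^s\lambda_l^{N-s}\sin(sp\pi/N)\bigr)=\lambda_l^N\sin 0-\lambda_k^N\sin(p\pi)=0$. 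Substituting each $c^{(p)}$ back into the boxed identity reproduces~\eqref{eq:vb3}--\eqref{eq:vb7}.

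The hard part is that the non-standard corner entries $\tfrac{1}{2}(\lambda_k^2+\lambda_l^2)$ of $h_a^{k,l}$, inherited from the fact that slots $p=0$ and $p=N$ in $\anti_d^{N+1}$ each couple to only one factor of $\Lambda$ rather than two, prevent any single sinusoidal or exponential ansatz from actually diagonalizing $h_a^{k,l}$: the vectors $c^{(p)}$ above are in fact \emph{not} eigenvectors of $h_a^{k,l}$, only $h_a^{k,l}$-orthonormal. The insight that makes the proof go through is the $L^TL$ factorization, which bypasses eigendecomposition entirely and produces an orthonormal basis directly from the geometry of the sum of squares: two vectors kill the two squared halves separately, and $N-1$ sine vectors span what remains.
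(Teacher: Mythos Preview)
Your argument is correct and follows the same overall route as the paper: reduce via Lemma~\ref{le:tridiag} to the block-diagonal form, treat the $NI_d$ block trivially, and handle each $(N+1)\times(N+1)$ block $h_a^{k,l}$ using the sinusoidal/Chebyshev structure for the interior modes and exponential profiles for the two boundary modes. Where you differ from the paper's description is in the mechanism used on each block. The paper states that one ``diagonalizes'' $h_a^{k,l}$ and that ``the eigenbasis of each block may be computed using Chebyshev polynomials''; you instead pass through the sum-of-squares factorization $h_a^{k,l}=L^TL$ coming directly from the Frobenius norm of~\eqref{eq:basis1b}, and you verify $h_a^{k,l}$-orthonormality of the coefficient vectors $c^{(p)}$ in the Euclidean image of $L$ without ever computing eigenvectors. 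Your observation that the vectors $c^{(p)}$ are \emph{not} eigenvectors of $h_a^{k,l}$ (the half-weight corners spoil both the boundary rows for the exponential modes and the $s=0,N$ rows for the sine modes) is correct and sharper than the paper's informal wording. The $L^TL$ route buys you a cleaner, purely algebraic verification (discrete sine orthogonality plus one telescoping identity) and explains \emph{why} the two boundary modes take the exact exponential form in~\eqref{eq:vb6}--\eqref{eq:vb7}: they are the annihilators of the two summands in the factorization. The paper's Jacobi-matrix framing, by contrast, situates the computation in a broader spectral-theoretic context that is useful elsewhere in~\cite{MY-dln}.
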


Theorem~\ref{thm:embed} is the main tool in the proofs of Theorem~\ref{thm:entropy} and Theorem~\ref{thm:submer}. Let us first explain how Theorem~\ref{thm:submer} is obtained from it. 

In order to show that the  metric $g^N$ on $\Mdd$ is obtained by Riemannian submersion of $\balance_r$ under $\phi$ we must compute $\mathrm{Ker}\,\phi_*$ and find an isometry between $(\mathrm{Ker}\,\phi_*)^\perp$ and $T_W\Mdd$. The kernel $\mathrm{Ker}\,\phi_*$ corresponds to the group action of $\od^{N-1}$ and it is the span $\mathbf{u}^{k,l,p}$ for $1\leq p\leq N-1$. Thus, $(\mathrm{Ker}\,\phi_*)^\perp$ is spanned by the vectors $\mathbf{l}$, $\mathbf{u}^{k,l,0}$, and $\mathbf{u}^{k,l,N}$. We see immediately that these correspond exactly to the eigendecomposition of $\mathcal{A}$ computed in Lemma~\ref{le:diagon}. This establishes that $(\Mdd,g^N)$ is obtained by Riemannian submersion from $(\balance,\Fr)$.

Theorem~\ref{thm:entropy} is obtained as follows. We must compute the volume of the $\od^{N-1}$ orbit $\phi^{-1}(W) \subset \balance$. We work in local coordinates given by our parametrization, and observe that fixing $W$ corresponds to fixing $\Lambda$, $Q_0$ and $Q_N$. The pullback metric $\mathfrak{z}^\sharp \iota$ is invariant under the $\od^{N-1}$ action. This reduces the computation of the group volume to the computation of the determinant of $\mathfrak{z}^\sharp \iota$ restricted to the subspace spanned by $\mathbf{u}^{k,l,p}$, $1\leq p \leq N-1$. This determinant may be computed explicitly using Theorem~\ref{thm:embed}, yielding Theorem~\ref{thm:entropy}. 

The proof for the rank-deficient case requires more care, but the above arguments captures the essence of these theorems.  Further details may be found in~\cite{MY-dln}.

\section{Cartoons}
\label{sec:cartoons}
 The way dynamicists actually work is by drawing pictures that capture different forms of geometric intuition. The purpose of this section is to illustrate this way of thinking, complementing several theorems with an impressionistic visualization of the theorems. A good picture can tell us what a theorem means; or better yet, help us guess what theorems we should be proving. 
 
 The DLN is an attractive model because it provides us with a rich set of pictures that suggest new questions for study. We organize our cartoons visually to bring out these aspects, focusing on the following themes:
  \begin{enumerate} 
 \item {\em Riemannian submersion.\/} We often organize images into a $\ww$-space upstairs and $W$-space downstairs. Riemannian submersion arises in several areas of analysis, especially mass transportation, and it is helpful to study the effects of overparametrization in analogy with these areas. 
 
 \item {\em Conic sections.\/} The equations~\eqref{eq:b1} that define the balanced varieties are {\em quadratic\/} equations. Thus, when drawing a two or three dimensional sketch, we caricature the balanced varieties by conic sections. This caricature then immediately focuses our attention on the balanced variety $\balance_{\mathbf{0}}$.
 \item {\em Foliation by rank. \/} The balanced variety is itself foliated by rank. We have only studied the case $d=r$ in depth, but the rank-deficient case $r < d$ is of great interest. This is harder to visualize, so we have demonstrated the effect of rank within the zero energy set for an example discussed in equations~\eqref{eq:mc2}--\eqref{eq:mc5}.
 \item {\em Foliation by group action. \/} The parametrizations by the polar factorization and SVD provide a different way of exploring balanced manifolds with group action. The computation of the pullback metric illustrates the subtle nature of the coupling by depth along the network. We visualize this by slicing the balanced manifold.
 \item {\em Mean curvature is an \Ito\/ correction. \/} We often use the fact that the mean curvature from of an embedded manifold is the `deterministic push' that arises from isotropic tangential stochastic fluctuations. This cartoon allows us to introduce Riemannian Langevin equations for the DLN in an intuitive manner. This can be illustrated rather simply and then generalized to more sophisticated foliations, such as in Section~\ref{subsec:dyson}.
\end{enumerate}

\begin{figure}
    \centering
    \includegraphics[scale=0.2]{./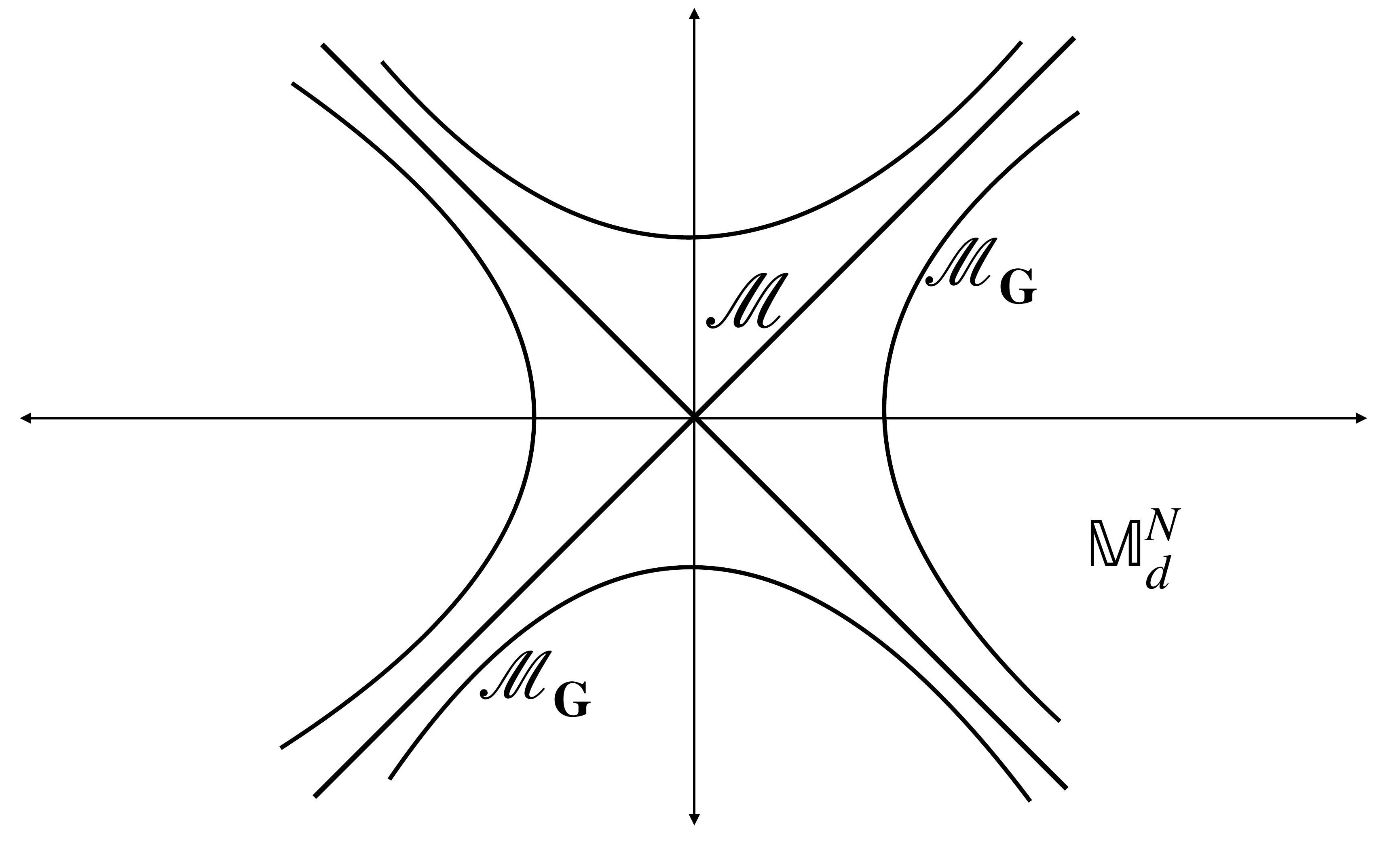}
    \caption{The foliation of $\Md^N$ by the balanced varieties $\balance_{\mathbf{G}}$ may be visualized as a foliation by conic sections (see equation~\eqref{eq:b6q}).}
    \label{fig:geom1}
\end{figure}
\begin{figure}
    \centering
    \includegraphics[scale=0.2]{./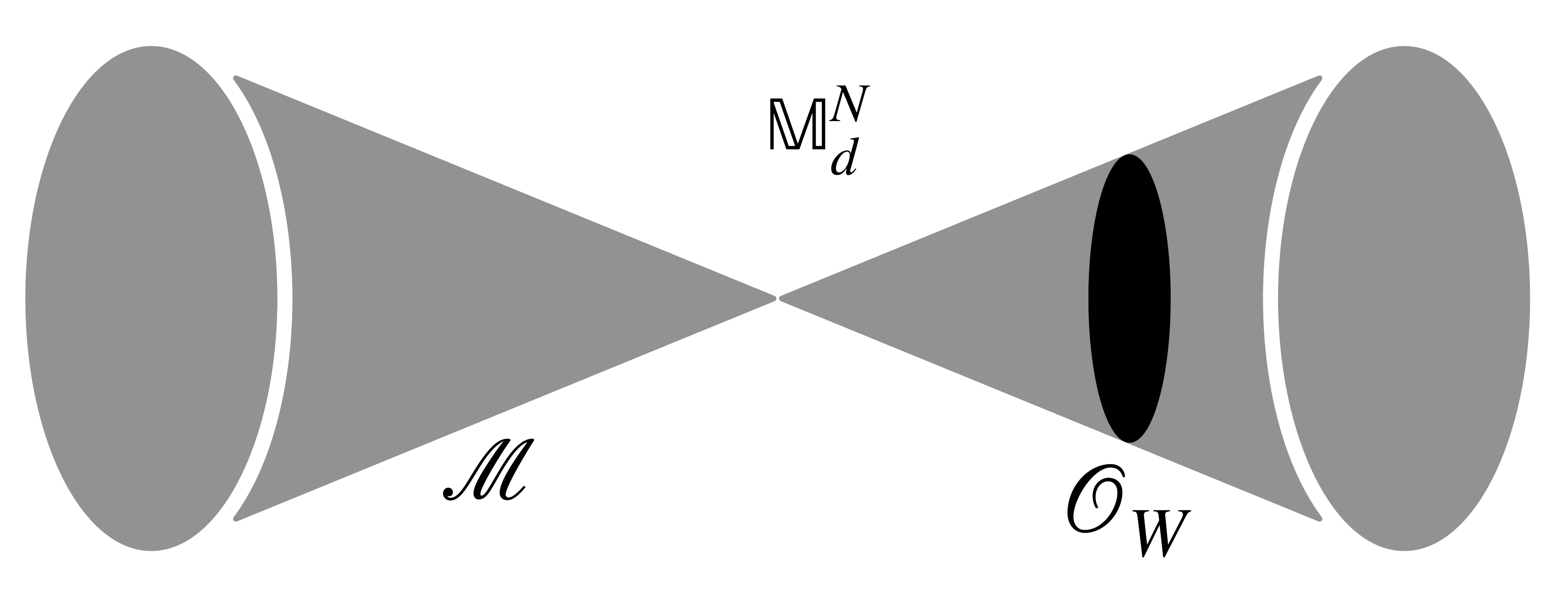}
    \caption{In comparison with Figure~\ref{fig:geom1}, we now blow up the balanced manifold $\balance$ and visualize the foliation into group orbits $\orbitx$ by slicing $\balance$.}
    \label{fig:geom2}
\end{figure}
\begin{figure}
    \centering
    \includegraphics[scale=0.2]{./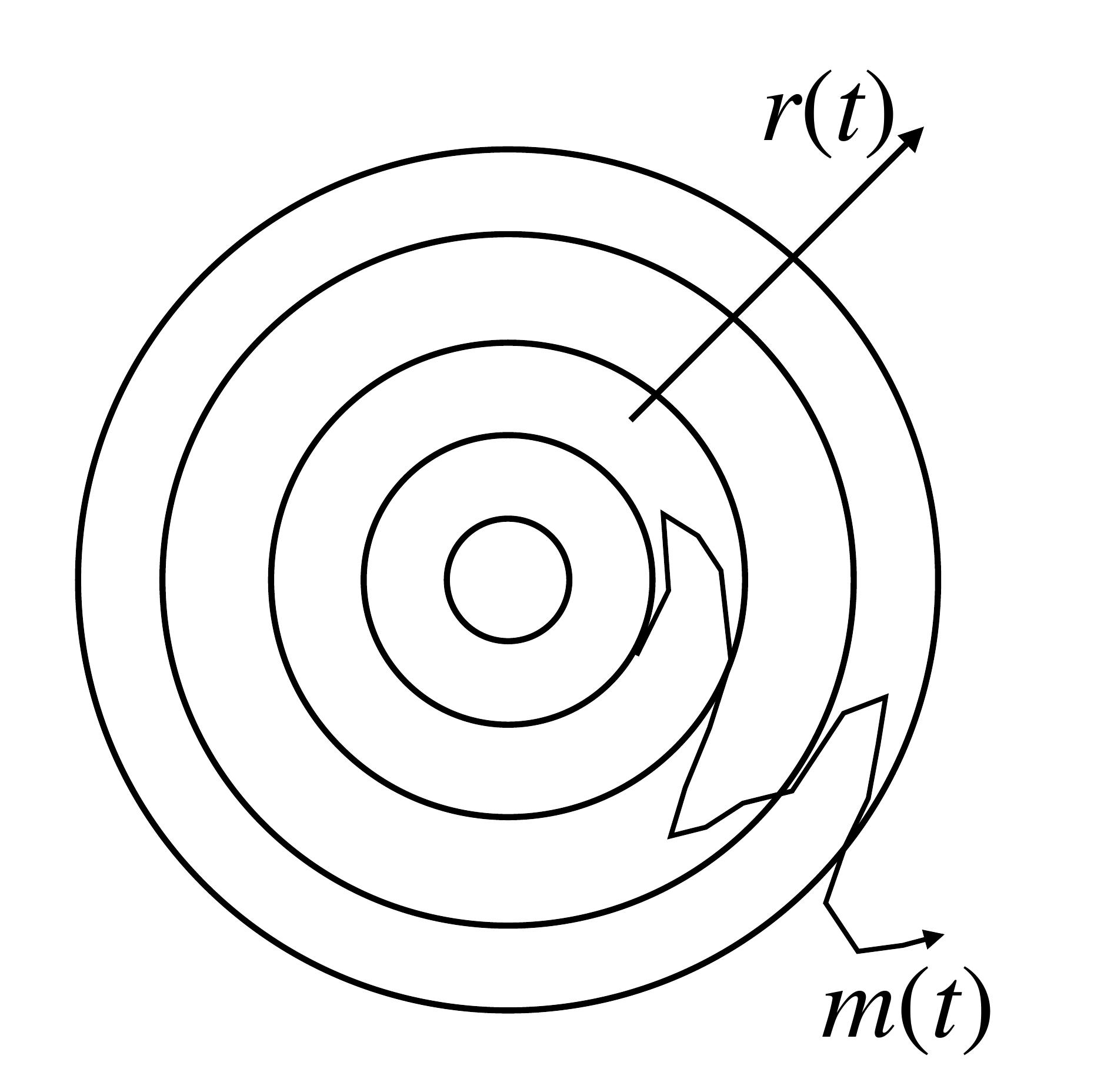}
    \caption{Motion by (minus one half) curvature arising from tangential Brownian fluctuations as discussed in Section~\ref{subsec:spheres}.}
    \label{fig:geom3}
\end{figure}
\begin{figure}
    \centering
    \includegraphics[scale=0.2]{./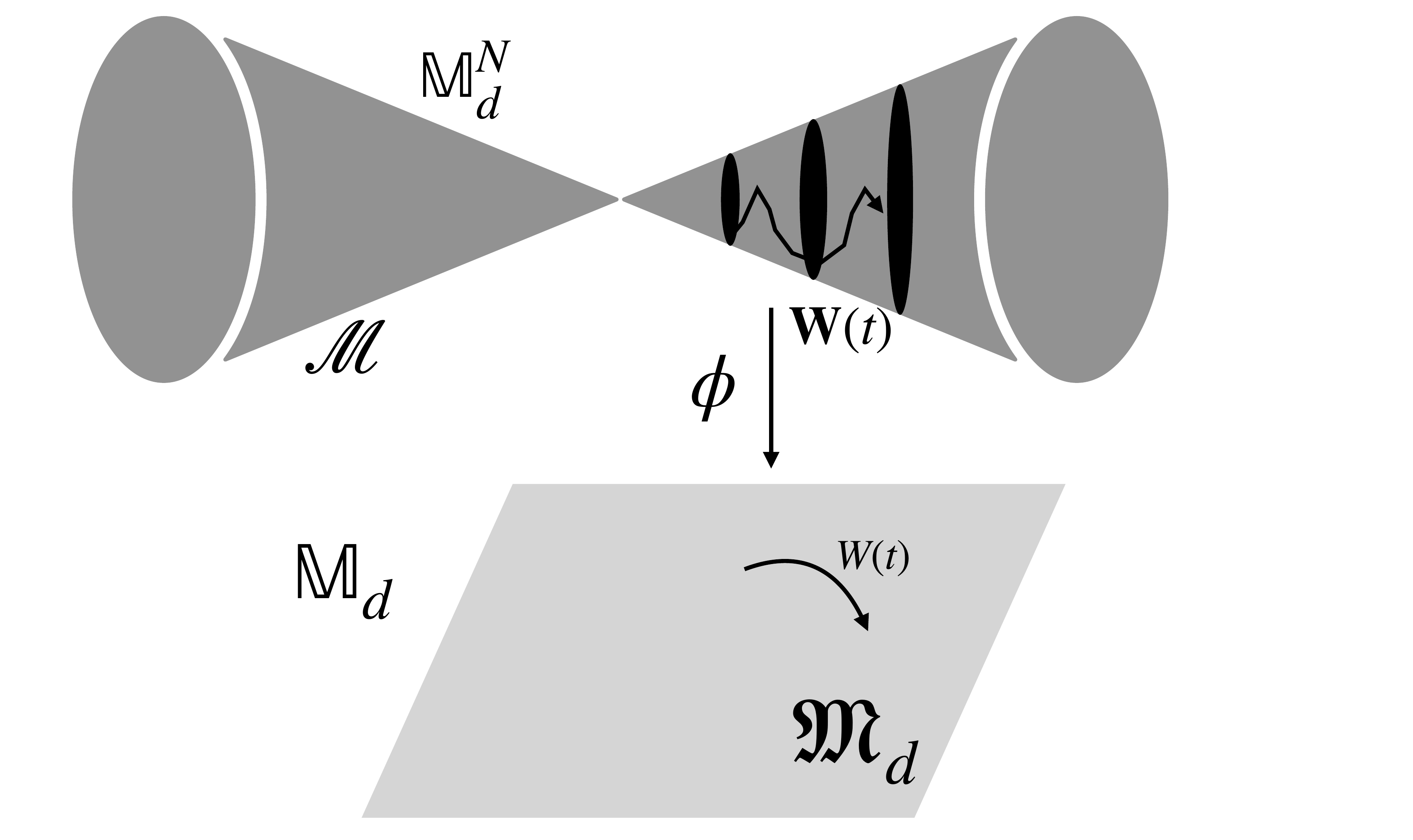}
    \caption{Riemannian submersion  $\phi:\balance \to \Mdd$ and the dynamics upstairs and downstairs. In this image, we illustrate the RLE with stochastic dynamics upstairs and deterministic gradient descent of free energy downstairs. See equations~\eqref{eq:rle-up2} and~\eqref{eq:rle-down2}.}
    \label{fig:geom4}
\end{figure}
\begin{figure}
    \centering
    \includegraphics[scale=0.2]{./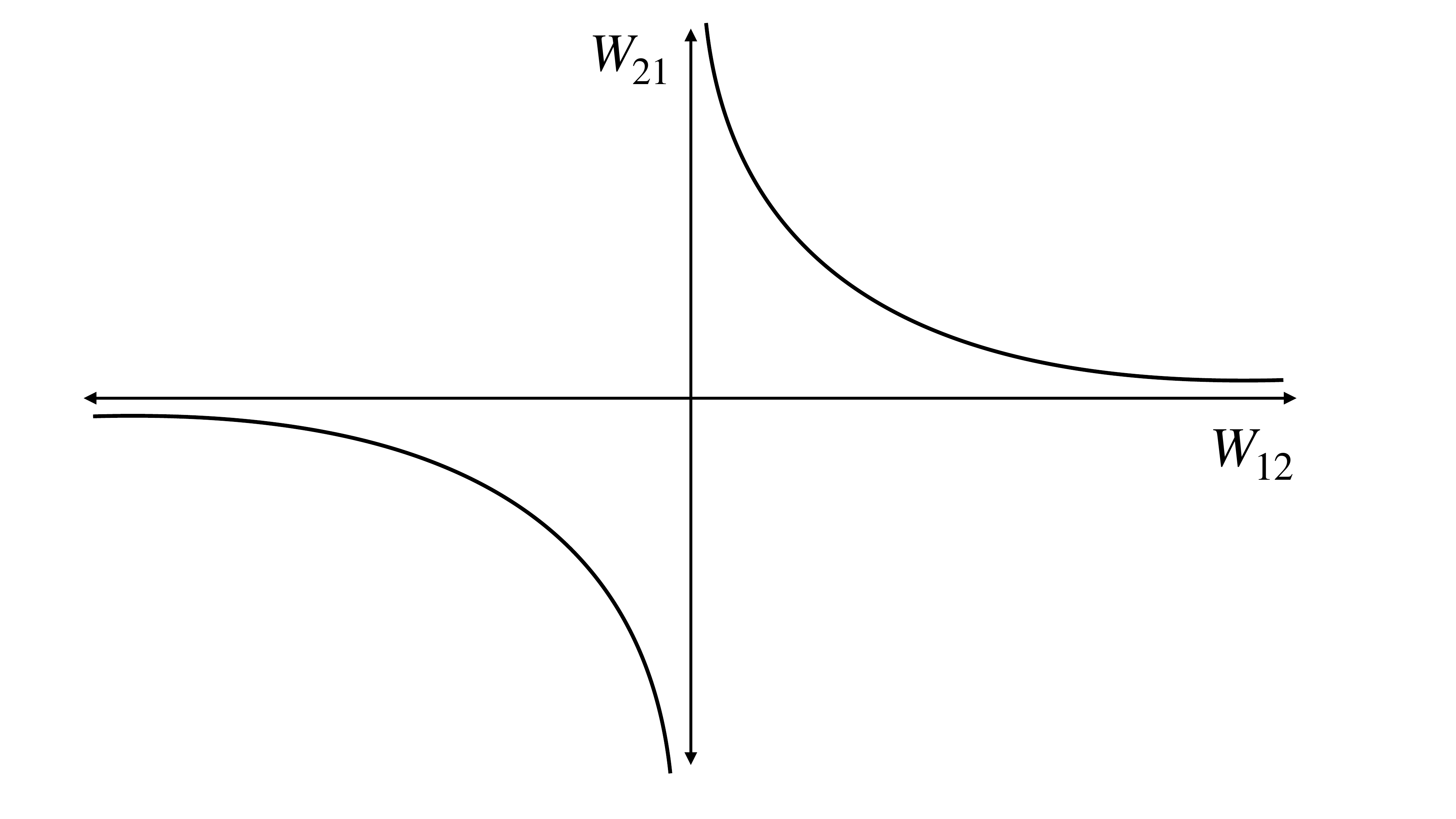}
    \caption{A rank-one variety within the zero energy set for matrix completion. See equations~\eqref{eq:mc4}--\eqref{eq:mc5}.}
    \label{fig:geom5}
\end{figure}

\section{The Riemannian Langevin equation (RLE)}
\label{sec:RLE}
\subsection{Overview} The next three sections address the following question: can we use the DLN to develop the thermodynamics of deep learning? 

In the spirit of this article, such a framework must be rooted in the geometry of overparametrization. We have seen that the analysis of the DLN leads naturally to the Riemannian manifolds $(\balance_r,\iota)$ and $(\Mr,g^N)$. Further, we have used this geometry to define a Boltzmann entropy as well as the gradient descent of free energy (see Section~\ref{sec:entropy}). In the physical analogy with thermodynamics, what we are now seeking is an explicit understanding of microscopic fluctuations and Gibbs measures. Our task, therefore, is to develop the underlying `statistical physics' that corresponds to the entropy formula, based on strictly geometric foundations. 

 In this section, we introduce the Riemannian Langevin Equation (RLE) as the natural geometric model for fluctuations. Our argument is by mathematical analogy and it takes the following form. First, we review the Langevin equation on $\R^d$. We then explain how the RLE is its natural extension to Riemannian manifolds. This is followed by the description of an important example from random matrix theory (Dyson Brownian motion). We then abstract the underlying structure combining Riemannian submersion with the theory of Brownian motion on manifolds. Finally, in Section~\ref{subsec:RLE-dln}, we describe the explicit nature of the RLE in the DLN. 

The correspondence between the RLE for the DLN and Dyson Brownian motion is a powerful tool. It provides a way to use ideas from random matrix theory to analyze training dynamics of the DLN. It immediately leads to open questions of intrinsic mathematical interest which are discussed in Section~\ref{sec:open}. Algorithmically, the RLE replaces the problem of minimizing the free energy with Gibbs sampling. 

The RLE is a generalization of the Langevin equation based on the intrinsic Riemannian geometry. We introduce it here as a phenomenological model that incorporates the effect of noise, revealing the interplay of noise and curvature. Whether a mathematical theory works in pratice is a different matter altogether. Our RLE loosely corresponds to how noise due to round-off errors may interact with the geometry of overparametrization. It does not account for the many different ways in which noise actually arises in training dynamics (discretization, batch processing, random initialization,\ldots). Nevertheless, we hope the structure of the RLE will provide a useful framework for  practitioners, since it is a reference model which is a natural stochastic analogue of gradient descent (though it is {\em not\/} stochastic gradient descent (SGD) corresponding to batch processing).

\subsection{The Langevin equation}
First, let us recall the Langevin equation on $\R^d$. Assume given $\beta>0$ and a potential $V:\R^d \to \R$. The Gibbs measure $\mu_\beta$ with respect to this potential is the probability measure with density
\begin{equation}
    \label{eq:rle1}
    \rho_\beta(x) = \frac{e^{-\beta V(x)}}{Z_\beta}, \quad Z_\beta = \int_{\R^d} e^{-\beta V(x')} \, dx'.
\end{equation}
The Langevin equation describes microscopic stochastic fluctuations in and out of equilibrium. It is the \Ito\/ SDE~\footnote{We suppress the usual subscript $t$ (such as $dx_t=-\nabla \,V(x_t)\, dt + \sqrt{2/\beta}\, dW_t$) in the notation to simplify the SDE for coordinates and matrices.}
\begin{equation}
    \label{eq:rle2}
    dx=  -\nabla V(x) \, dt + \sqrt{\frac{2}{\beta}} \, dW
\end{equation}
where $W_t$ denotes the standard Wiener process in $\R^d$. 

Assume the law of $x_t$ has a density $\rho(t,\cdot)$ and assume that the law of $x_0$ is given. 
The evolution of the probability density $\rho$ is then given by the Fokker-Planck (or forward Kolmogorov) equation
\begin{equation}
    \label{eq:rle3}
    \partial_t \rho =  \frac{1}{\beta} \triangle \rho + \nabla \cdot \left (\rho\, \nabla V(x_t)\right).
\end{equation}
Under suitable assumptions on $V$, $\lim_{t \to \infty} \rho(t,\cdot)$ is the Gibbs density $\rho_\beta$ defined in equation~\eqref{eq:rle1}.

\subsection{RLE: general principles.\/}
The Langevin equation and the Fokker-Planck equation have been widely studied in mathematical physics. Our interest here lies in the generalization of these equation to Riemannian manifolds. The main issue is to understand how to replace the noise in equation~\eqref{eq:rle2}.

Assume given a Riemannian manifold $(\mathcal{N},h)$. We define the Laplace-Beltrami operator $\triangle_h$ by its action on a smooth function $f:\mathcal{N}\to \R$ in coordinates by
\begin{equation}
\label{eq:laplace}
\triangle_h f = \frac{1}{\sqrt{|h|}} \partial_{x^i} \left( \sqrt{|h|} h^{ij} \partial_{x^j} f \right),
\end{equation}
where  $|h|$ denotes the determinant of $h_{ij}$ and $h^{ij}$ denotes its inverse. Brownian motion on $(\mathcal{N},h)$ at inverse temperature $\beta>0$ is a diffusion on $\mathcal{N}$ whose generator is $\tfrac{1}{\beta}\triangle_h$. While the parameter $\beta$ can be included within the metric, we will include it separately, since there are many situations where it is necessary to hold $h$ fixed and rescale the strength of the noise.

The abstract characterization of Brownian motion as a diffusion process with a generator is powerful. However, in practice (for example, for simulation) we seek `hands-on' constructions of Brownian motion as the limit of suitable random walks. There are several such constructions of Brownian motion using Stratonovich SDE (see for example~\cite[\S 3.2]{Hsu},~\cite[\S V.4]{Ikeda},~\cite[Thm 3]{IM}). Once Brownian motion on $(\mathcal{N},h)$ has been constructed with SDE, it is easy to (formally) extend the Langevin equation~\eqref{eq:rle2} to the Riemannian setting. 

Let $B_t^{\beta,h}$ denote Brownian motion on $(\mathcal{N},h)$ at inverse temperature $\beta>0$ and consider the (formal) \Ito\/ SDE
\begin{equation}
    \label{eq:rle4}
    dx=  -\mathrm{grad}_h V(x) \, dt + dB^{\beta,h}.
\end{equation}
This equation is only formal, because SDE on manifolds must be defined using the Stratonovich formulation to ensure coordinate independence. However, equation~\eqref{eq:rle4} makes the analogy with~\eqref{eq:rle2} transparent and one may recover the `true' Stratonovich SDE by the inclusion of a drift term computed using the \Ito-Stratonovich correction formula. The  important geometric aspect for matrix manifolds is that this yields a curvature correction that is often explicitly computable.


\section{Curvature and entropy: examples}
\subsection{Dyson Brownian motion via Riemannian submersion}
\label{subsec:dyson}
Dyson Brownian motion at inverse temperature $\beta>0$ is the interacting particle system described by the \Ito\/ SDE
\begin{equation}
    \label{eq:dyson}
    dx_i = \sum_{j \neq i}\frac{1}{x_i-x_j}\, dt + \sqrt{\frac{2}{\beta}} dW_i, \quad 1\leq i \leq d.
\end{equation}
Here $W_t=(W_1,\ldots,W_d)_t$ denotes the standard Wiener process in $\R^d$. This equation has a unique strong solution when $\beta\geq 1$ that remains confined to the Weyl chamber 
\begin{equation}
    \label{eq:rle5}
    \weyl_d = \{ x \in \R^d \left| x_1 < x_2 < \ldots < x_d \right. \}.
\end{equation}
In~\cite{HIM23} we presented a geometric construction of Dyson Brownian motion using Riemannian submersion. 
This construction has a natural extension to the DLN, but we must first introduce some notation to explain these ideas. 

Let $\Her_d$ denote the space of $d\times d$ Hermitian matrices equipped with the norm $\|M \|^2 =\Tr(M^*M)$ and let $U_d$ denote the unitary group. Given $x \in \weyl_d$, let $X=\diag(x)$, and let $\mathcal{O}_x$ denote the isospectral orbit 
\begin{equation}
    \label{eq:isospec}
    \mathcal{O}_x=\{M \in \Her_d \left| M = UXU^*, \; U \in U_d\right. \}.
\end{equation}

We observe that the space $\Her_d$ is foliated by $U_d$ orbits, each of which is an isospectral set, and an isospectral manifold when the elements of $X$ are distinct. The main insight in~\cite{HIM23} is the interplay between mean curvature, tangential noise, and the entropy that is captured in the following `lift upstairs' of equation~\eqref{eq:dyson}. 

Let $M \in \mathcal{O}_x$ and let $P_M$ and $P_M^\perp$ denote the orthonormal projections onto $T_M\mathcal{O}_x$ and  $(T_M\mathcal{O}_x)^\perp$ respectively with inner-products computed according to $\|\cdot \|^2$.
Let $H_t$ denote the standard Wiener process on $(\Her_d,\|\cdot\|^2)$. For every $\beta>0$ we define the \Ito\/ SDE
\begin{equation}
    \label{eq:proj1}
    dM= P_{M} dH + \sqrt{\frac{2}{\beta}} P_{M}^\perp dH.
\end{equation}
Assume that $x_t$ is the unique strong solution to equation~\eqref{eq:dyson} for its maximal interval of existence $[0, T_{\max})$ (this is $[0,+\infty)$ when $\beta \geq 1$).
We then have the following
\begin{theorem}[Huang, Inauen, Menon~\cite{HIM23}]
\label{thm:dyson}
\begin{enumerate}
\item[(a)] The eigenvalues of $M_t$ have the same law as the solution $x_t$ to~\eqref{eq:dyson} for $t \in [0, T_{\max})$. 
\item[(b)] When $\beta=+\infty$, the group orbits $\mathcal{O}_{x_t}$ evolve by motion by (minus one half) mean curvature.
\end{enumerate}
\end{theorem}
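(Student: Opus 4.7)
The plan is to lift the analysis to $\Her_d$, apply Itô's formula to the eigenvalue maps $x_i:\Her_d\to\R$, and use the explicit tangent/normal decomposition of the isospectral orbit. At $M=UXU^*$ with distinct eigenvalues, tangent vectors to $\mathcal{O}_x$ have the form $[A,M]$ with $A$ anti-Hermitian and, written in the eigenbasis $\{u_i\}$ of $M$, they have zero diagonal entries. Hence the normal space $(T_M\mathcal{O}_x)^{\perp}$ (with respect to the Frobenius inner product) consists of matrices commuting with $M$, i.e.\ those that are diagonal in that basis. In particular, $P_M\,dH$ has vanishing diagonal entries in the eigenbasis while $\sqrt{2/\beta}\,P_M^{\perp}dH$ is strictly diagonal.

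For part~(a), first- and second-order eigenvalue perturbation theory combined with Itô's formula applied to~\eqref{eq:proj1} yields
\begin{equation*}
dx_i=\langle u_i,dM\,u_i\rangle+\sum_{j\neq i}\frac{d[M_{ij},M_{ji}]_t}{x_i-x_j}.
\end{equation*}
The first term reduces to $\sqrt{2/\beta}\,dH_{ii}$ since tangential noise has zero diagonal entries in the eigenbasis, and the diagonal entries of the standard Wiener process $H_t$ on $(\Her_d,\|\cdot\|^2)$ are independent standard real Brownian motions. For $i\neq j$, the off-diagonal quadratic variation $d[M_{ij},M_{ji}]_t$ receives contribution only from $P_M\,dH$ (since $P_M^{\perp}dH$ has no off-diagonal entries) and equals $dt$ with the standard normalization. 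Collecting terms produces exactly~\eqref{eq:dyson}.

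For part~(b), setting $\beta=+\infty$ kills the normal noise and reduces the computation above to the deterministic ODE $\dot x_i=\sum_{j\neq i}(x_i-x_j)^{-1}$. Since orbits are parametrized by spectrum, orbit evolution is encoded entirely by the normal velocity field, whose component along $u_ku_k^*$ equals $\sum_{j\neq k}(x_k-x_j)^{-1}$. To identify this with $-\tfrac12$ mean curvature flow, I would compute the mean curvature vector $H(M)$ by summing the second fundamental form $II(e,e)=P_M^{\perp}[A,[A,M]]/\|[A,M]\|^2$ over an orthonormal frame of $T_M\mathcal{O}_x$ built from anti-Hermitian generators $A=u_iu_j^*-u_ju_i^*$ and $A=i(u_iu_j^*+u_ju_i^*)$ with $i<j$. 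Direct bracket manipulation using $[u_iu_j^*,u_ju_i^*]=u_iu_i^*-u_ju_j^*$ gives $II(e,e)=(u_iu_i^*-u_ju_j^*)/(x_j-x_i)$ for each such frame vector, and summing over pairs yields
\begin{equation*}
H(M)=-2\sum_k\Bigl(\sum_{j\neq k}\frac{1}{x_k-x_j}\Bigr)\,u_ku_k^*.
\end{equation*}
Thus $-\tfrac12 H(M)$ has $k$-th normal component exactly equal to the Dyson drift, which proves~(b).

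The main obstacle is the bookkeeping in the mean curvature computation: one must carefully normalize each generator so that $\|[A,M]\|=1$ (introducing factors of $|x_j-x_i|^{-1}$), check that the two basis generators per pair $(i,j)$ contribute the same $II(e,e)$, and sum correctly over $i<j$ to recover the symmetric coefficient in the diagonal basis. An elegant alternative is to argue via the Itô--Stratonovich correction: the tangential SDE $dM=P_M\,dH$ carries a normal Itô drift equal to $\tfrac12$ times the mean curvature vector of $\mathcal{O}_{x_t}$, so matching this drift with the explicit $\dot x_i$ from part~(a) pins down both the sign and factor in the mean curvature formula without a direct second fundamental form calculation.
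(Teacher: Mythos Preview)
Your proposal is correct and follows essentially the same approach the paper sketches: the paper does not prove Theorem~\ref{thm:dyson} in full but defers to~\cite{HIM23}, noting only that the eigenvalue SDE follows from the first- and second-order variation formulas for eigenvalues combined with \Ito's formula, and illustrating the tangential-noise/mean-curvature mechanism via the sphere example in Section~\ref{subsec:spheres}. Your part~(a) is exactly this computation, and your direct second-fundamental-form calculation for part~(b) (together with the \Ito--Stratonovich alternative you mention) is a faithful fleshing-out of what the paper leaves implicit.
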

The main insight in the theorem is captured in the case $\beta=+\infty$. In this setting, the stochastic fluctuation $P_M dH$ is {\em tangential\/} to the orbit $\mathcal{O}_x$. However, the \Ito\/ correction is minus a half times the mean curvature. In particular, it is deterministic and {\em normal\/} to $\mathcal{O}_x$.

\begin{remark}
\label{rem:dyson-entropy}
The entropy formula in Theorem~\ref{thm:entropy} is inspired by the analogous formula for $\mathcal{O}_x$:
\begin{equation}
    \label{eq:dyson2}
    S(x) = \log \mathrm{vol} (\mathcal{O}_x) = 2 \log \mathrm{van}(x) + C_d, 
\end{equation}
where $C_d$ is a universal constant. We then see that we may rewrite equation~\eqref{eq:dyson} as follows~\footnote{In~\cite{HIM23}, we work with Hermitian matrices, which is the most natural setting for Dyson Brownian motions. The factors of $2$ that cancel are included for consistency with the general theory of RLE.}
\begin{equation}
    \label{eq:dyson2b}
    dx = \frac{1}{2} \nabla S(x) dt + \sqrt{\frac{2}{\beta}} dW. 
\end{equation}
\end{remark}

\begin{remark}
\label{rem:kappa}
    The role of $\beta$ in equation~\eqref{eq:dyson} reflects the standard terminology of random matrix theory. In equation~\eqref{eq:proj1}, however, the parameter $\beta$ reflects an anisotropic splitting of the inner-product in $\Her_d$ between the tangent space $T_M\mathcal{O}_x$ and $T_M\mathcal{O}_x^\perp$. For these reasons, we will introduce a separate parameter $\kappa>0$  to describe the anisotropy, and use $\beta>0$ for the `true' inverse temperature when augmenting equation~\eqref{eq:proj1} with a loss function. 
\end{remark}  

In order to define stochastic gradient descent by the RLE, we include a loss function in equation~\eqref{eq:proj1} as follows.     Given $E: \weyl \to \R$, set $L(M)=E(\mathrm{eig}(M))$ and consider
\begin{equation}
    \label{eq:proj2}
dM = -\nabla_{M} L(M) \, dt +  \sqrt\frac{2}{\beta} \left( P_{M} dH + \sqrt{\kappa} P_{M}^\perp dH\right).
\end{equation}
The corresponding equation downstairs is
\begin{equation}
    \label{eq:proj3}
dx = -\nabla_{x} F_\beta(x) dt +  \sqrt\frac{2\kappa}{\beta} dW,
\end{equation}
where $W_t$ is the standard Wiener process in $\R^d$ and 
\begin{equation}
    \label{eq:proj3b} F_\beta(x) = E(x) -\frac{1}{\beta}S(x)
\end{equation} 
is the free energy. When $\kappa=0$, we have gradient descent of free energy. 
In equations~\eqref{eq:proj2} and~\eqref{eq:proj3} the gradient operator $\nabla$ is with respect to the standard inner products on $\Her_d$ and $\R^d$ respectively. In fact, these metrics are related by Riemannian submersion.

In what follows, we will use equation~\eqref{eq:proj2} as a model for stochastic gradient descent by the RLE of a loss function, which accounts separately for anisotropic fluctuations in the gauge group and the observable. 

\subsection{The stochastic origin of motion by mean curvature.\/}
\label{subsec:spheres}
The appearance of mean curvature in Theorem~\ref{thm:dyson}(b) may be understood through a simpler example.
Let $m \in \R^d$, let $r=|m|= \sqrt{m^Tm}$ denote the length of $m$, and let $S^{d-1}_r$ denote the sphere of radius $r$ in $\R^d$. Let $P_m$ denote the orthonormal projection onto $T_m S^{d-1}_r$; explicitly, we have
\[P_m = I_d - \frac{mm^T}{|m|^2}.\]
Let $W_t$ denote the standard Wiener process in $\R^d$ and consider the \Ito\/ SDE
\begin{equation}
    \label{eq:circle}
    dm = P_m \,dW.
\end{equation}
This SDE is the continuum limit of a random walk which may be intuitively described as `at each time step, take an isotropically distributed random spatial step in the tangent space'. 

Stochastic calculus makes it easy to capture the intuitive content of such clumsy verbal descriptions. In particular, since the SDE is in the \Ito\/ form, it gives rise to a correction in $r_t$. The reader is invited to apply \Ito's formula to see that while $m_t$ always evolves by {\em tangential\/} stochastic fluctuations governed by equation~\eqref{eq:circle}, the radius $r_t=|m_t|$ satisfies
\begin{equation}
    \label{eq:circle2}
    \dot{r} = \frac{d-1}{2r}.
\end{equation}
The standard normalization of the mean curvature of the sphere $S_r^{d-1}$ is such that it has magnitude $(d-1)/r$ and points inwards. Consequently, equation~\eqref{eq:circle2} tells us that the concentric spheres $S_r^{d-1}$ evolve by motion by minus a half times mean curvature. 

Theorem~\ref{thm:dyson}(b) is a more sophisticated instance of the same interplay. The \Ito\/ correction for the eigenvalues of $M_t$ are computed using the first and second-order variation formulas for eigenvalues (see~\cite{HIM23}). 

\section{RLE for stochastic gradient descent}
\subsection{Riemannian submersion with a group action}
\label{subsec:RLE-general}
In this section, we introduce Riemannian Langevin equations to model stochastic gradient descent of free energy. We formalize the insights obtained in the examples of Sections~\ref{subsec:dyson}--~\ref{subsec:spheres} in a general geometric framework. We also include `lifted' loss functions as in deep learning. These RLE provide stochastic extensions of gradient flows such as~\eqref{eq:big-grad1a}, which are consistent with the underlying geometry. The tunable parameter $\kappa>0$ modulates the relative strength of the noise in null directions, or equivalently the gauge group, with the noise in the observable. When $\kappa=0$, we recover Riemannian gradient descent of free energy for the observable.

The geometric assumptions are as follows. We assume given a reference Riemannian manifold $(\refspace,\refmetric)$ and a Lie group $\refgroup$ of isometries. We then consider the quotient space $\quotientspace =\refspace/\refgroup$ equipped with the metric $\quotientmetric$ given via Riemannian submersion. Let $\phi: \refspace \to \quotientspace$ denote the submersion map and let $\orbit_x$ denote the inverse image $\phi^{-1}(x)$ for each $x \in \quotientspace$. Then the following general principles hold:
\begin{enumerate}
    \item There is always a natural Boltzmann entropy $S(x)=\log \mathrm{vol}(\orbit_x)$. 
    \item The gradient in $(\refspace,\refmetric)$ of the entropy $S(\phi(m))$ is the mean curvature at each point $m \in \orbit_x$.
    \item Brownian motion on $(\refspace,\refmetric)$ projects to Brownian motion on $(\quotientspace,\quotientmetric)$. Precisely, if $M^\beta_t$ is a diffusion in $\refspace$ with generator $\beta^{-1}\triangle_\refmetric$ then $\phi(M^\beta_t)$ is a diffusion on $\quotientspace$ with  generator $\beta^{-1}\triangle_\quotientmetric$.
\end{enumerate}
 
The tangent space $T_m\refspace$ at each point $m \in \orbit_x$ splits into $T_m \orbit_x$ and its orthogonal complement $T_m \orbit_x^\perp$. Let $P_m$ and $P_m^\perp$ be the orthonormal projections onto these subspaces of $T_m \orbit_x$. The formal analog of the anisotropic splitting in equation~\eqref{eq:proj2} is 
\begin{equation}
    \label{eq:split1}dm^{\beta,\kappa} = P_m dM^\beta + \sqrt{\kappa} P_m^\perp dM^\beta. 
\end{equation} 
The use of stochastic calculus makes it easy to describe the intuitive basis of the anisotropic decomposition and we will use this expression below for its simplicity and suggestive power. Such \Ito\/ calculus expressions may be made rigorous in two ways. First, we may use the equivalent Stratonovich SDE, computing the required \Ito-Stratonovich correction. 
Second, it is also possible to avoid stochastic calculus by replacing the standard Laplacian with an anisotropic Laplacian as follows.

Rigorously, the decomposition~\eqref{eq:split1} can be expressed using the generator of the process $m^\beta_t$. The orthogonal decomposition $T_m\refspace = \T_m\orbit_x \oplus T_m \orbit_x^\perp$ allows us to split the Laplacian $\triangle_\refmetric$ into `angular' and `radial' Laplacians, denoted by $\triangle_{\orbit}$ and $ \triangle_{\orbit^\perp}$ respectively. Then the anisotropic process $m^\beta_t$ is the diffusion with generator 
\begin{equation}
    \label{eq:split2} \frac{1}{\beta} \left( \triangle_{\orbit} + \kappa  \triangle_{\orbit^\perp}\right).
\end{equation} 
An explicit description of an orthonormal basis for $T_m\refspace$ is very useful when we apply this idea in practice. In particular, the basis in Section~\ref{subsec:basis} is used when we apply these ideas to the DLN. 

The energetics are as follows. We assume given a loss function $E:\quotientspace \to \R$ downstairs that we lift to a loss function $L: \refspace \to \R$, $L = E \circ \phi$. 

We then define {\em stochastic gradient descent by the RLE\/} of the loss function through the  (formal) \Ito\/ SDE `upstairs'
\begin{equation}
    \label{eq:rle-up}
    dm^{\beta,\kappa} = -\mathrm{grad}_\refmetric L(m)\, dt + P_{m} dM^\beta + \sqrt{\kappa} \, P_m^\perp dM^\beta.
\end{equation}
This equation is an analogue of~\eqref{eq:proj2}. We then expect in analogy with equation~\eqref{eq:proj3} that the corresponding SDE `downstairs' is
\begin{equation}
    \label{eq:rle-down}
    dx = -\mathrm{grad}_\quotientmetric F_\beta (x)\, dt +  dX^{\beta/\kappa}, 
\end{equation}
with free energy
\begin{equation}
    \label{eq:rle-free-energy}
    F_\beta(x) = L(x) - \frac{1}{\beta} S(x). 
\end{equation} 
In the limit $\kappa=0$, we have the {\em stochastic\/} flow upstairs
\begin{equation}
    \label{eq:rle-up2}
    dm = -\mathrm{grad}_\refmetric L(m)\, dt + P_{m} dM^\beta. 
\end{equation}
The flow of $m_t$ projects  to the {\em deterministic\/} gradient flow downstairs
\begin{equation}
    \label{eq:rle-down2}
    \dot{x} = -\mathrm{grad}_\quotientmetric F_\beta (x).
\end{equation}

We have thus obtained a strictly geometric model for the thermodynamic concept of quasistatic equilibration. The observable $x_t$ has no fluctuations and evolves according to the gradient descent of free energy. However, $m_t$ upstairs stochastically  `rolls without slipping' along a mean path with drift $\mathrm{grad}_\refmetric L$.~\footnote{The terminology `rolling without slipping' refers to the kinematics of the wheel. When the center of a wheel moves at steady velocity, the point of contact with the ground always has instantaneous velocity zero (and thus does not `slip'). This idea can be used to define Brownian motion on any manifold with a connection~\cite{Hsu,Ikeda}.}

\subsection{RLE for the DLN}
\label{subsec:RLE-dln}
We now apply these general principles to the DLN. The reference  Riemannian manifold $(\refspace,\refmetric)$ is the balanced manifold $(\balance,\Fr)$. The Lie group of isometries is $\od^{N-1}$, and the quotient manifold is $(\Mdd,g^N)$.

Let $\bbb_t$ denote Brownian motion on $(\balance,\Fr)$. We may construct $\bbb_t$ by projecting standard Brownian motion on $\Md^N$ onto $\balance$, or by pushing forward Brownian motion on the parameter space under the parametrization $\mathfrak{z}$. The inverse temperature $\beta$ may be included by a trivial scaling so that we have the process $\bbb_t^\beta$. We further split $\bbb^\beta$ into two processes using the orthogonal projection onto the group orbit $\orbit_W \subset \balance$, obtaining the process $\bbb_t^{\beta,\kappa} \in \balance$. This process is analogous to $m^{\beta,\kappa}$ defined in equation~\eqref{eq:split1}, though we use slightly different notation for convenience.

Theorem~\ref{thm:BM-dln} below provides an explicit description of Brownian motion downstairs on $(\Mdd,g^N)$. We must first introduce some notation to make the theorem transparent. Recall from equation~\eqref{eq:b12} that $W=Q_N \Sigma Q_0^T$ is the SVD of $W$ and that $\Lambda= \Sigma^{1/N}$. We also define two diagonal matrices obtained by differentiation from $\Sigma$. Let $\Sigma'$ be the diagonal matrix with $k$-th entry
\begin{equation}
    \label{eq:sigma-prime}
    \Sigma'_{kk} = \sum_{l \neq k} \left( \frac{N\lambda_k^{2N-1}} {\lambda_k^{2N}-\lambda_l^{2N}} - \frac{\lambda_k}{\lambda_k^{2}-\lambda_l^{2}}\right) \lambda_k^{N-1}, \quad 1\leq k \leq d.
\end{equation}
Similarly, define the matrix $\Sigma''$ to be the diagonal matrix with entries
\begin{equation}
    \label{eq:sigma-double-prime}
    \Sigma''_{kk} = (N-1)\lambda_k^{N-2}, \quad 1\leq k \leq d.
\end{equation}

Assume given a matrix $B_t$ valued Brownian motion in $\Md$, or equivalently, $d^2$ standard independent Wiener processes labeled $\{B^{k,l}_t\}_{1 \leq k,l \leq d}$. Finally, to fix the Brownian motion, we assume given an initial condition $W_0 \in \Mdd$.

\begin{theorem}[Menon, Yu~\cite{MY-dln}]
 \label{thm:BM-dln}
The solution $X^\beta_t$ to the following \Ito\/ SDE with initial condition $X^\beta_0 =W_0$  is Brownian motion on $(\Mdd,g^N)$ started at $W_0$:
\begin{equation}
    \label{eq:bm-dln}
        dX_t^\beta = \sqrt{\frac{2}{\beta}} \left( \begin{array}{lll}
    \sqrt{N}\lambda_1^{N-1} dB^{1,1}_t & \sqrt{\frac{\lambda_1^{2N}-\lambda_2^{2N}}{\lambda_1^2-\lambda_2^2}} dB^{1,2}_t & \ldots \\
    \sqrt{\frac{\lambda_2^{2N}-\lambda_1^{2N}}{\lambda_2^2-\lambda_1^2}} dB^{2,1}_t & \sqrt{N}\lambda_2^{N-1} dB^{2,2}_t & \ldots \\
    \vdots & \ddots & \vdots \end{array}\right) + \frac{1}{\beta} Q_N \Sigma'' Q_0^T \, dt.
\end{equation}
\end{theorem}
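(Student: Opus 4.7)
The plan is to construct Brownian motion on $(\Mdd, g^N)$ as the image under $\phi$ of Brownian motion $\bbb^{\beta}_t$ on $(\balance, \Fr)$, exploiting the submersion structure of Theorem~\ref{thm:submer} and the explicit orthonormal frame of Theorem~\ref{thm:embed}. The diffusion coefficient in~\eqref{eq:bm-dln} should arise directly from pushing forward the horizontal basis, while the drift $\tfrac{1}{\beta}Q_N\Sigma''Q_0^T$ is the Itô correction induced by the $\ww$-dependence of the horizontal frame.

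First, I would compute $\phi_*$ of each horizontal basis vector from Theorem~\ref{thm:embed}. Applying the chain rule $d\phi(\ww)(V_N,\ldots,V_1) = \sum_p A_{p+1} V_p B_{p-1}$ with $A_{p+1} = Q_N\Lambda^{N-p}Q_p^T$ and $B_{p-1} = Q_{p-1}\Lambda^{p-1}Q_0^T$ from~\eqref{eq:b11}, and using $Q_p^T q_{p,k} = e_k$, the sums collapse telescopically to
\[
 \phi_*(\mathbf{l}^k) = \sqrt{N}\,\lambda_k^{N-1}\,u_k v_k^T,\qquad
 \phi_*(\mathbf{u}^{k,l,0}) = \sqrt{\tfrac{\lambda_k^{2N}-\lambda_l^{2N}}{\lambda_k^2-\lambda_l^2}}\, u_l v_k^T,
\]
with an analogous formula for $\mathbf{u}^{k,l,N}$, while $\phi_*$ annihilates $\mathbf{u}^{k,l,p}$ for $1 \leq p \leq N-1$ (confirming that these span $\mathrm{Ker}\,\phi_*$). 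These lengths are exactly the square roots of the eigenvalues of $\aaa_{N,W}$ in the $\{u_k v_l^T\}$ basis from Lemma~\ref{le:diagon}, so the martingale part of $\phi(\bbb^\beta_t)$ reproduces the noise matrix in~\eqref{eq:bm-dln} with quadratic covariation $\tfrac{2}{\beta}\aaa_{N,W} = \tfrac{2}{\beta}(g^N)^{-1}$, which is the correct diffusion for Brownian motion on $(\Mdd, g^N)$.

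For the drift, by the principles of subsection~\ref{subsec:RLE-general}, the pushed-forward generator equals $\tfrac{1}{\beta}\triangle_{g^N}$ plus $\tfrac{1}{\beta}$ times the mean curvature of the fiber $\orbitx$ in $(\balance, \Fr)$; equivalently, the drift is the Itô correction from the $\ww$-dependence of the horizontal frame through the SVD. Expanding $\triangle_{g^N}$ in singular value coordinates $(\Lambda, Q_0, Q_N)$ using the volume form $\sqrt{|g^N|}\,dW = \det(\Sigma^2)^{(N-1)/(2N)}\mathrm{van}(\Sigma^{2/N})\,d\Sigma\,dQ_0\,dQ_N$ from Theorem~\ref{thm:cmv}, the Vandermonde prefactor produces Dyson-like repulsion contributions $\Sigma'$ from~\eqref{eq:sigma-prime}, but these get absorbed into the Jacobian of the coordinate change back to matrix coordinates, leaving only the residual $\tfrac{1}{\beta}Q_N\Sigma''Q_0^T$ from the $\det(\Sigma^2)^{(N-1)/(2N)}$ prefactor. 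The scalar case $d=1$ provides a sanity check: $\triangle_{g^N} = Nw^{2(N-1)/N}\partial_w^2 + (N-1)w^{(N-2)/N}\partial_w$ yields $dw = \tfrac{1}{\beta}(N-1)\lambda^{N-2}\,dt + \sqrt{2/\beta}\sqrt{N}\lambda^{N-1}\,dB$, in agreement with~\eqref{eq:bm-dln}.

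The main obstacle is the bookkeeping required to track the cancellation between the Vandermonde-induced $\Sigma'$ terms (an echo of Dyson Brownian motion, as in Section~\ref{subsec:dyson}) and the analogous corrections from the rotation of singular vectors, so that the net drift simplifies cleanly to $\tfrac{1}{\beta}Q_N\Sigma''Q_0^T$. The most transparent route is the intrinsic geometric one: identify the mean curvature of $\orbitx$ with the horizontal gradient of $\log\mathrm{vol}(\orbitx) = S(W)$ using the orthonormal basis of Theorem~\ref{thm:embed}, which reduces the drift verification to an algebraic check against the entropy formula of Theorem~\ref{thm:entropy}.
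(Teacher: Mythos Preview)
The paper itself does not prove Theorem~\ref{thm:BM-dln}; it is stated with attribution to~\cite{MY-dln} and no argument is given here beyond the surrounding framework. So there is no detailed proof in this article to compare against. That said, the framework the paper assembles (Lemma~\ref{le:diagon} for the diagonalization of $\aaa_{N,W}$, Theorem~\ref{thm:cmv} for the volume form) points toward a direct intrinsic computation of $\triangle_{g^N}$ on $(\Mdd,g^N)$, and your proposal diverges from that route in a way that introduces a genuine confusion.

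Your treatment of the diffusion part is fine: pushing forward the horizontal basis vectors $\mathbf{l}^k$, $\mathbf{u}^{k,l,0}$, $\mathbf{u}^{k,l,N}$ under $\phi_*$ does produce the $g^N$-orthonormal frame $\{\sqrt{\aaa\text{-eigenvalue}}\,u_kv_l^T\}$, and this matches the noise matrix in~\eqref{eq:bm-dln}. The $d=1$ sanity check is also correct.

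The gap is in the drift. You conflate two distinct geometric quantities that the paper carefully separates: $\Sigma'$ in~\eqref{eq:sigma-prime} is the gradient of the entropy $S(W)$ and encodes the \emph{fiber mean curvature} of $\orbitx\subset\balance$; it appears in the free-energy gradient~\eqref{eq:rle-dln-down2}, not in the Brownian motion. The drift $\Sigma''$ in~\eqref{eq:sigma-double-prime} is the \Ito--Stratonovich correction intrinsic to Brownian motion on $(\Mdd,g^N)$ and has nothing to do with the entropy. Your final paragraph proposes to verify the drift by ``an algebraic check against the entropy formula of Theorem~\ref{thm:entropy}''; that check would hand you $\Sigma'$, not $\Sigma''$, and the argument would fail.

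Relatedly, your opening plan---project Brownian motion $\bbb_t^\beta$ on $(\balance,\iota)$ through $\phi$---does \emph{not} produce Brownian motion on $(\Mdd,g^N)$, because the fibers $\orbitx$ are not minimal (their volume varies; that is precisely Theorem~\ref{thm:entropy}). The projection carries an extra first-order term equal to the fiber mean curvature, i.e.\ a $\Sigma'$ contribution, so the process you would obtain is not the one in~\eqref{eq:bm-dln}. The paper's principle~(3) in \S\ref{subsec:RLE-general} is stated informally and should not be read as holding unconditionally here.

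The clean route is intrinsic: write Brownian motion on $(\Mdd,g^N)$ in Stratonovich form using the $g^N$-orthonormal frame $\{\sqrt{N}\lambda_k^{N-1}u_kv_k^T,\;\sqrt{(\lambda_k^{2N}-\lambda_l^{2N})/(\lambda_k^2-\lambda_l^2)}\,u_kv_l^T\}$ furnished by Lemma~\ref{le:diagon}, and compute the \Ito\ correction $\tfrac{1}{\beta}\sum_\alpha D_{e_\alpha}e_\alpha$ in the ambient linear space $\Md$. The diagonal vectors contribute $\sqrt{N}\lambda_k^{N-1}\,\partial_{\sigma_k}(\sqrt{N}\lambda_k^{N-1})\,u_kv_k^T=(N-1)\lambda_k^{N-2}u_kv_k^T$, which is exactly the $k$-th column of $Q_N\Sigma''Q_0^T$; one then checks that the off-diagonal contributions (arising from perturbation of singular vectors) assemble consistently. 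This is the computation your $d=1$ check already captures, and it never invokes the entropy or the fiber geometry.
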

We may now state the RLE for the DLN in  analogy with equations~\eqref{eq:rle-up} and~\eqref{eq:rle-down}. The RLE for $\ww^{\beta,\kappa}$ is (cf. equation~\eqref{eq:big-grad1a})
\begin{equation}
    \label{eq:rle-dln-up}
d\ww^{\beta,\kappa} = -\nabla_\ww E(\phi(\ww))\, dt + d\bbb^{\beta,\kappa}.
\end{equation}
Then we show in~\cite{MY-dln} that the law of the end-to-end matrix $W_t=\phi(\ww_t)$ is given by the SDE
\begin{equation}
    \label{eq:rle-dln-down}
dW^{\beta,\kappa} = -\mathrm{grad}_{g^N} F_\beta(W^{\beta,\kappa})\, dt + dX^{\beta/\kappa},
\end{equation}
where $dX^{\beta/\kappa}$ is described by Theorem~\ref{thm:BM-dln} and  $F_\beta(W) = E(W)-\beta^{-1}S(W)$ where the entropy $S(W)$ is given by Theorem~\ref{thm:embed}. The gradient may be computed explicitly and we find
\begin{equation}
    \label{eq:rle-dln-down2}
\mathrm{grad}_{g^N} F_\beta(W^{\beta,\kappa}) = \mathcal{A}_{N,W}(E'(W)) - \frac{1}{\beta} Q_N \Sigma' Q_0^T.
\end{equation}
The second term on the right is the gradient of $-\beta^{-1} S(W)$. The expression for $\Sigma'$ in equation~\eqref{eq:sigma-prime} provides the analogue of Coulombic repulsion in the DLN. This `physical effect' is strictly due to the geometry of the DLN.

\section{Discussion}

\subsection{Summary}
\label{subsec:dis-overview}
This article has used the geometric theory of dynamical systems to study training dynamics in the DLN. Let us review the main ideas. 

Theorem~\ref{thm:ACH} and Theorem~\ref{thm:ACH2} allow  us to understand the foliation of $\Md^N$ by invariant manifolds, identify the important concept of balancedness and the dynamics on the balanced  manifolds. Theorem~\ref{thm:BRTW2} identifies the important role of  Riemannian geometry. Our presentation of these results differs from the original sources in that we use Riemannian geometry as the foundation for our analysis. 

Let us explain this shift more more carefully. By viewing Riemannian submersion as the fundamental paradigm, we discover an entropy formula for the DLN (Theorem~\ref{thm:entropy}) and obtain an explicit parametric description of the Riemannian manifolds $(\balance_r, \Fr)$ and $(\Mr,g^N)$ (Theorem~\ref{thm:submer} and Theorem~\ref{thm:embed}). 
We also obtain a thermodynamic framework, including an understanding of the underlying microscopic fluctuations as follows. The minimal intrinsic description of stochastic fluctuations is provided by the theory of Brownian motion on Riemannian manifolds. Our understanding of this abstract idea is guided by Dyson Brownian motion, a fundamental model in random matrix theory. In particular, the entropy formula is tied to Brownian motion on group orbits in analogy with Dyson Brownian motion.  Finally, in order to extend gradient descent to stochastic gradient descent in a geometrically faithful manner, we introduce and compute Riemannian Langevin equations for the DLN. We stress the importance of an anisotropic splitting of the noise between `noise in the gauge' and `noise in the observable', in order to provide a geometric framework for the thermodynamics of deep learning.

As noted at the outset, while all the Theorems in this article have a classical feel, the results presented here have been obtained very recently. As a dynamical system, the DLN is fascinating for the range of concrete, tractable questions that arise from a careful examination of overparametrization. In particular, the connections with the theory of minimal surfaces and random matrix theory, suggest new questions on motion by curvature and the large $d$ and large $N$ asymptotics of the  DLN. Some of these are summarized in Section~\ref{sec:open}. 

\subsection{Linear networks and deep learning}
\label{subsec:dln-dl}
A good model must provide insights of practical importance, while at the same time being tractable to to a detailed mathematical analysis. As we have seen, the analysis of training dynamics in the DLN can be studied in rich detail. But how good is it as a guide to deep learning in practice? Let us now present some ideas explored in the literature on linear networks. We then return to the heuristics of deep learning listed in Section~\ref{sec:deep-learning}, contrasting these with the DLN.

{Phenomenological linear models have been explored in the neural network literature since the 1990s. The energy landscape for linear networks was studied carefully by Baldi and Hornik~\cite{Baldi-Hornik1,Baldi-Hornik2}. Since the advent of deep learning, a central concern has been to use the DLN to understand implicit bias, to understand the role of depth, and to understand the energy landscape for several matrix learning tasks. This has given rise to a large literature on linear networks, of which we can only present a few samples.}

An important result in a similar spirit to~\cite{ACGH,ACH}, especially a characterization of the minimizer when the depth $N=2$ and $W_2=W_1^T$ (the Bures-Wasserstein reduction of the DLN) is given in~\cite{Gunasekar2018,Gunasekar2017}. There have also been several studies of the nature of critical points for matrix learning tasks and the related convergence theory in the DLN for different choices of loss functions~\cite{Brechet-Montufar,Ge1,Rauhut2}. An early use of exact solutions in DLN, as well as the use of the DLN to develop a theory for semantic development is presented in~\cite{Saxe2,Saxe1,Saxe3}.

Linear {\em convolutional\/} networks (LCNs) were introduced to shed light on deep learning with convolutional neural networks.  An algebraic geometric analysis of LCNs has provided a detailed understanding of the nature of the singularities of the function spaces as a function of the network architecture~\cite{Montufar1,Montufar2}. In these studies, linear networks are used to shed light on the expressivity of neural networks. A more recent direction is to include nonlinearity that respects the matrix geometry by depending only on the singular values ~\cite{ghosh}.

\subsection{Numerical experiments.}
Like most models in machine learning, our understanding of the DLN relies heavily on numerical experiments. These experiments reveal subtle phenomena, such as the importance of low-rank matrices as $\lim_{t\to \infty} W(t)$, and the importance of the balanced manifolds. Let us explain these in turn.

Numerical experiments in~\cite{ACGH,ACH,ACHL,CMV} explore several aspects of the DLN, including the attraction to low-rank matrices. For example, in a numerical study of matrix completion in~\cite{CMV} we observe that the effective rank of $W(t)$ decreases in time through sudden jumps; roughly the effective rank stays constant for a long period and suddenly drops by $1$. We also find an asymptotic distribution of limits $\lim_{t\to \infty} W(t)$ on the low-rank manifolds. 

We may design similar numerical experiments for deep learning as follows. We fix a family of random functions $\mathcal{F}$ (e.g. random Fourier series on the line with a fixed decay rate) and then numerically observe the training dynamics for a neural network approximation to a randomly chosen $f \in \mathcal{F}$. It is easy to vary the network architecture (depth, width, neural unit) and study the resulting variation in the training dynamics. Experiments of this nature have been carried out in classroom projects designed by the author. They reveal that typical neural networks fit functions scale-by-scale, with jumps in scale (i.e. sudden increases in oscillations) that are analogous to the jumps in rank observed in the DLN. These experiments lead us to conjecture that the fidelity between the DLN and deep learning is much closer than what may be naively expected. 

These are numerical experiments, not theorems. But the nature of the transient dynamics, not just the time asymptotics, is a fertile area of enquiry where we believe it is possible to transfer insights between the DLN and deep learning.  In particular, the training dynamics in deep learning also suggest an interplay between curvature and entropy, corresponding to fluctuations in the null directions. We hope that the results in this article will lead to methods to quantify this entropy using functional analytic tools. Any such analysis must be matched with numerics; it is necessary to continuously design numerical experiments, in parallel with the refinement of the theory of DLN, to create an intuition for deep learning.

\subsection{Balancedness}
Let us now discuss aspects of the balanced manifolds that remains somewhat mysterious. That the balanced variety has a beautiful mathematical structure is not in doubt, but the invariant manifold theorems don't really explain why the balanced manifolds should be as important in practice as they are. Since the balanced variety is just the $\mathbf{G}=\mathbf{0}$ special case of the $\mathbf{G}$-balanced varieties, the odds that a randomly chosen initial condition will lie on $\balance$ is zero! 

There have been extensions of the idea of balancedness to networks that vary linearity locally~\cite{JasonLee}. More recently, the geometry of the vector fields in the determination of conservation laws has been considered in~\cite{Peyre1,Peyre2}. These results are promising, but do not yet constitute a satisfactory understanding of balancedness.

Our introduction of the RLE in Section~\ref{sec:RLE}, especially the framework of quasistatic thermodynamic equilibration and the RLE for the DLN is intended to shed light on this question. While these theorems are restricted to the balanced manifold $\balance$ of full rank, we may define analogous RLE `upstairs' on any $\balance_\mathbf{G}$. The underlying conjecture is that small noise, for example due to round-off error, may be modeled by such RLE and that in the $\beta>0$, $\kappa=0$ regime, we may observe motion by curvature of 
$\balance_\mathbf{G}$ towards $\balance_{\mathbf{0}}$. We hope that this approach will explain the mysterious appearance of the Simons cone in the DLN. 

In Section~\ref{sec:deep-learning} we noted that formally Riemannian submersion also applies to deep learning. What is still missing in this analogy is the concept of balancedness. That is, while there is no ambiguity about the importance of balancedness for the DLN, we don't yet know what form this idea must take for deep learning. The parametrization by $\mathfrak{z}$ and Theorem~\ref{thm:embed} reveal a subtle coupling along the depth of the network on $\balance$. A natural conjecture is that balancedness must correspond to local equilibrium, i.e. {\em detailed balance\/}, along the network. However, the specific mathematical form of this idea remains elusive. 

\section{Open problems}
\label{sec:open}
We begin with some concrete questions that do not require much machinery. This is followed by a broader discussion of perspectives to explore. 

\subsection{Convergence to a low-rank matrix}
\label{subsec:entropy-selection}
Here is a simply stated problem. Fix $d=2$ and 
consider matrix completion for $W$ when given that $W_{11}=W_{22}=1$. We study this problem with the DLN as follows: we use equation~\eqref{eq:mc1a} to define the quadratic energy function
\begin{equation}
    \label{eq:mc2}
    E(W) = \frac{1}{2}\left( (W_{11}-1)^2 + (W_{22}-1)^2 \right).
\end{equation}
Observe that $E$ vanishes for any matrix of the form
\begin{equation}
    \label{eq:mc3}
    W = \left( \begin{array}{ll} 1 & * \\ * & 1 \end{array}\right).
\end{equation}
Further, there is a variety of rank-one solutions of the form
\begin{equation}
    \label{eq:mc4}
    W = \left( \begin{array}{cc} 1 & a \\ \frac{1}{a} & 1 \end{array}\right), \quad a \in \R\backslash\{0\}.
\end{equation}
This situation is illustrated in Figure~\ref{fig:geom1}.
Numerical experiments (see~\cite[Figures 3--6]{CMV}) reveal that the solution to~\eqref{eq:closed3} with random initial condition  $W_0 \in \mathbb{M}_2$ is attracted to the low-rank solutions. Further, these solutions cluster around the matrix 
\begin{equation}
    \label{eq:mc5}
    W = \left( \begin{array}{cc} 1 & 1 \\ 1 & 1 \end{array}\right), \quad a \in \R\backslash\{0\}.
\end{equation}
A rigorous analysis of this example, including the $N$ dependence, would provide considerable new insight. Specifically, we seek the resolution of the following question. Consider the dynamical system
\begin{equation}
\label{eq:repeat}
\dot{W} = -\sum_{k=1}^N (WW^T)^{\tfrac{N-k}{N}} E'(W) (W^TW)^{\tfrac{k-1}{N}},
\end{equation}
with $E(W)$ given by equation~\eqref{eq:mc2}. Can one establish convergence of the solution $W(t)$ for suitable initial conditions to the low-rank energy minimizer in equation~\eqref{eq:mc3} as $t \to \infty$? Are there initial conditions with $E(W)>0$ whose solutions converge to other minimizers of $E$?

This problem illustrates a fundamental issue in dynamical systems theory. While one may have theorems that guarantee convergence, the precise identification of the limit is not straightforward even in apparently simple problems. 

\subsection{The free energy landscape}
We return to a theme discussed in Section~\ref{subsec:entropy}. Does the inclusion of entropy provide a selection principle, especially for degenerate loss functions?

For each depth $N$, we may consider the entropy defined in Theorem~\ref{thm:entropy}, and the free energy defined in equation~\eqref{eq:free-energy}. An interesting aspect of this free energy is that while the entropy depends on the singular values $\Sigma$ alone, the energy depends on all of $W$. This makes the computation of the set of minimizers an interesting problem, even when $d=2$.

The simplest concrete problem in this class is as follows. Consider again the energy in equation~\eqref{eq:mc2}. What is the nature of the minimizing set $\mathcal{S}_\beta$? Specifically, how does it vary with $\beta$ and $N$?

\subsection{Large $d$ and large $N$ asymptotics}
An important feature of the DLN is that it admits a natural $N\to \infty$ limit. This limit requires that we rescale $\aaa_{N,W}$ defined in equation~\eqref{eq:metric2} as follows
\begin{equation}
    \label{eq:large-N1}
    \frac{1}{N} \sum_{p=1}^N (WW^T)^{\frac{N-p}{N}} \, Z \, (W^TW)^{\frac{p-1}{N}},
\end{equation}
and let $N\to \infty$ to obtain the limiting operator
\begin{equation}
    \label{eq:large-N2}
    \aaa_{\infty,W} (Z)=\int_0^1 (WW^T)^{1-s} \, Z \, (W^TW)^{s} \, ds.
\end{equation}
This operator and the resulting metric $g^\infty$ were studied in~\cite{CMV}. In particular, the metric may be diagonalized as in Section~\ref{subsec:metric}, the crucial lemma being 
\begin{lemma}
    \label{le:diagon-infty}
    The operator $\aaa_{\infty,W}: T_W \Mdd \to T_W \Mdd$ is symmetric and positive definite with respect to the Frobenius inner-product. It has the $d^2$ eigenvalues and eigenvectors 
    \begin{equation}
        \label{eq:diagon-infty}
        \aaa_{\infty,W} \, u_k v_l^T = \frac{\sigma_k^2-\sigma_l^2}{2 \log \sigma_k/\sigma_l} \, u_k v_l^T , \quad 1\leq k,l \leq d,
    \end{equation}
when $k \neq l$ and 
\begin{equation}
        \label{eq:diagon-infty-b}
        \aaa \, u_k v_k^T = \sigma_k^{2}\, u_k v_k^T , \quad 1\leq k \leq d.
\end{equation}
\end{lemma}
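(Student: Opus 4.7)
The proof plan is to follow the template of Lemma~\ref{le:diagon} almost verbatim, since the rank-one matrices $u_k v_l^T$ remain a simultaneous eigenbasis of the integral operator. Only the combinatorial sum over $p$ is replaced by an elementary scalar integral, making the computation strictly easier than in the finite-$N$ case.

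First, I would record the SVD consequences $WW^T = \sum_i \sigma_i^2 u_i u_i^T$ and $W^TW = \sum_j \sigma_j^2 v_j v_j^T$, which imply that for every real exponent $\alpha \geq 0$ one has $(WW^T)^\alpha u_k = \sigma_k^{2\alpha} u_k$ and $v_l^T (W^TW)^\alpha = \sigma_l^{2\alpha} v_l^T$. Inserting the test matrix $u_k v_l^T$ into the definition~\eqref{eq:large-N2} and pulling the rank-one factor out of the integral then gives
\begin{equation*}
\mathcal{A}_{\infty,W}(u_k v_l^T) = \left( \int_0^1 \sigma_k^{2(1-s)} \sigma_l^{2s} \, ds \right) u_k v_l^T.
\end{equation*}

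Second, I would evaluate the scalar integral. Since $W \in \mathfrak{M}_d$ is full rank, every $\sigma_k$ is strictly positive. When $\sigma_k = \sigma_l$ (in particular when $k=l$) the integrand is the constant $\sigma_k^2$, yielding eigenvalue $\sigma_k^2$ and hence~\eqref{eq:diagon-infty-b}. When $\sigma_k \neq \sigma_l$, writing the integrand as $\sigma_k^2 (\sigma_l/\sigma_k)^{2s} = \sigma_k^2 \exp(2 s \log(\sigma_l/\sigma_k))$ and integrating in $s$ over $[0,1]$ yields $(\sigma_l^2 - \sigma_k^2)/(2 \log(\sigma_l/\sigma_k))$, which is exactly the eigenvalue in~\eqref{eq:diagon-infty}. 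Note that the formula for $k \neq l$ extends continuously to the coincident case, agreeing with~\eqref{eq:diagon-infty-b}.

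Finally, the Frobenius orthogonality relation $\Tr((u_k v_l^T)^T u_m v_n^T) = \delta_{km}\delta_{ln}$ recycled from the proof of Lemma~\ref{le:diagon} shows that $\{u_k v_l^T\}$ is a Frobenius-orthonormal eigenbasis of $T_W\mathfrak{M}_d$, so $\mathcal{A}_{\infty,W}$ is symmetric and positive definite (the eigenvalues are strictly positive because the integrand is pointwise positive in $s \in [0,1]$). I anticipate no real obstacle; the main subtlety is merely bookkeeping between the two cases $\sigma_k = \sigma_l$ and $\sigma_k \neq \sigma_l$. As a consistency check, one recovers the statement by passing $N \to \infty$ in $\tfrac{1}{N}\mathcal{A}_{N,W}$ and using $N(\sigma_k^{2/N} - \sigma_l^{2/N}) \to 2\log(\sigma_k/\sigma_l)$ in the formulas~\eqref{eq:diagon2}--\eqref{eq:diagon2b}, which confirms that~\eqref{eq:large-N1} is indeed a Riemann sum for the limiting operator.
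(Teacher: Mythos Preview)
Your proposal is correct and follows essentially the same approach as the paper, which does not give a standalone proof of Lemma~\ref{le:diagon-infty} but explicitly indicates that ``the metric may be diagonalized as in Section~\ref{subsec:metric}'', i.e., by mimicking the proof of Lemma~\ref{le:diagon} with the sum over $p$ replaced by the integral in~\eqref{eq:large-N2}. Your evaluation of the scalar integral and the orthogonality argument are exactly what is intended.
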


Recall that $u_k$ and $v_l$ define the normalized left and right singular vectors of $W$.   The analog of equation~\eqref{eq:p-metric1} is
\begin{equation}
    \label{eq:p-metric2}
       g^\infty(Z,Z) = \sum_{1 \leq k \leq d} \frac{1}{\sigma_k^2}\, Z_{kk}^2 + \sum_{1\leq k, l \leq d, k \neq l} 
       \frac{2\log\sigma_k/\sigma_l}{\sigma_k^2-\sigma_l^2}
       \, Z_{kl}^2,
\end{equation}
where we have written $Z \in T_M\Mdd$ as $Z= Z_{kl}u_k v_l^T$.

The rescaling of the metric affects the entropy by a constant factor, but since it is only the {\em gradient\/} of the entropy that matters we also have the limit 
\begin{equation}
    \label{eq:entropy-infinity}
    S_\infty(W) = \sum_{1 \leq k < l \leq d} \log \left( \frac{\sigma_k^2 -\sigma_l^2}{2 \log \sigma_k/\sigma_l} \right). 
\end{equation}

These formulas are reminiscent of determinantal formulas in random matrix theory (see~\cite{Baik-Deift-Suidan} for examples). Their large $d$ asymptotics have not been studied. Further, the analogy between Dyson Brownian motion and the RLE for DLN in Section~\ref{subsec:RLE-dln} suggest that the large $d$ asymptotics for the equilibrium distribution should be described by universal fluctuations. In order to make such a study precise, we suggest the reader first examine the equilibrium distribution of equation~\eqref{eq:rle-dln-down} for the simplest quadratic energy $E(W)=\tfrac{1}{2}\Tr(W^TW)$, establishing the analog of the semicircle law for the DLN. This may be followed by studies of fluctuations.

An interesting facet of the large $N$ limit is that while the formulas for the metric and entropy simplify since these rely only on $W$ downstairs, there is no longer a space upstairs! Is there a limiting framework for such Riemannian submersion?

\subsection{Low-rank varieties}
The study of the DLN would benefit greatly from a more careful examination of the underlying algebraic and differential geometry. Let us focus on the influence of rank to illustrate this point.

All the main ideas in this article were first established for matrices of full rank. While this is a convenient choice, a careful analysis of the foliations by rank is necessary for both theoretical and practical purposes.

First, as we have remarked above, in practice low-rank matrices seem to appear as $t\to \infty$ limits, even when the dynamics is restricted to $W \in \Mdd$. Even for simple energies, such as in matrix completion, it would be very useful to understand how the set of minimizers may be arranged by rank. 

Second, our results on Riemannian submersion and RLE, typically require smoothness of the submersion map $\phi$ and the assumption of distinct singular values. At present, these are convenient assumptions that allow us to prove interesting theorems. However, an understanding of the singularities that may arise at repeated singular values, especially repeated zero singular values as in the low-rank setting, is necessary for a comprehensive understanding of the model.

\subsection{Coexistence of gradient and Hamiltonian structures}
 \label{rem:ach2}
 Gradient flows typically have a character that is complementary to Hamiltonian systems. There are, however, rare cases of dynamical systems that are both gradient-like and completely integrable Hamiltonian~\cite{Bloch-Brockett-Ratiu}. We expect that a deeper examination of the DLN will reveal similar structure for the following reasons.

Training dynamics in the DLN are given by a gradient flow. 
 However, the theorems in this article also suggest a complementary symplectic geometry. Here we note that the natural geometric setting for a Hamiltonian system is that of  a symplectic manifold. Further, co-adjoint orbits of Lie groups constitute one of the fundamental examples of symplectic manifolds, including most known completely integrable systems. In the analysis of the DLN, we see that the existence of the conserved quantities $\mathbf{G}$ -- a typical feature of integrable Hamiltonian systems -- is tied to the symmetries of overparametrization as in the theory of Hamiltonian systems. Further, the appearance of determinantal formulas as in random matrix theory is also suggestive of completely integrability. 

\section{Acknowledgements}
Sanjeev Arora introduced me to the DLN (and Nadav Cohen!)  at IAS in 2019. Little did I realize at the time how rewarding the study of the DLN would be, so many thanks Sanjeev. The results presented here are all joint work with Nadav Cohen, Zsolt Veraszto, and Tianmin Yu. They also build on related projects  with Dominik Inauen, Ching-Peng Huang, Tejas Kotwal, Jake Mundo and Lulabel Ruiz-Seitz. I am particularly grateful to Nadav for many patient explanations about the nature and promise of deep learning and the pleasure of joint work.

This article was developed on the basis of minicourses at Kyushu University, CIMAT, the National University of Singapore and the Palazzone di Cortona of the Scuola Normale Superiore, Pisa. Each lecture provided an opportunity to synthesize several results on the DLN for a diverse audience, improving the exposition in each iteration. I express my thanks to Andrea Agazzi (Pisa), Octavio Arizmendi Echegaray (CIMAT), Carlos Pacheco (Cinvestav), Subhro Ghosh (NUS), Philippe Rigollet (MIT) and Tomoyuki Shirai (Kyushu) for providing me with the opportunity to share these ideas and for many stimulating discussions. I am especially grateful to the students at CIMAT and Cortona for their warmth and enthusiasm. 

Related discussions with Jos\'{e} Luis P\'{e}rez Garmendia (CIMAT), Alex Dunlap, Jian-Guo Liu and Jonathan Mattingly (Duke), Guido Montufar (UCLA), Austin Stromme (ENSAE/CREST), Praneeth Netrapalli (Google), Courtney and Elliott Paquette (Montreal), Qianxiao Li and Xin Tong (NUS), Boris Hanin, Pier Beneventano and Mufan Li (Princeton),  Jamie Mingo (Queens University),  Eulalia Nualart (UPF, Barcelona),   Noriyoshi Sakuma (Nagoya) and Sinho Chewi and Zhou Fan (Yale) have improved this work.

I am a dynamicist, not an expert in deep learning, and it has been a real thrill to learn from computer scientists and statisticians.  I hope that this article communicates the spirit of the geometric theory of dynamical systems in a way that is useful to practitioners, reciprocating the gift of a beautiful model.

\bibliographystyle{siam}
\bibliography{dln}

\end{document}